\newtheorem{theorem}{Theorem}[section]
\newtheorem{lemma}[theorem]{Lemma}
\newtheorem{remark}[theorem]{Remark}
\newtheorem{counter-example}[theorem]{Counter example}
\newtheorem{open question}[theorem]{Open question}
\newtheorem{corollary}[theorem]{Corollary}
\newtheorem{definition}[theorem]{Definition}
\newcommand{\proofbox}{\hfill $\Box$}
\newcommand{\cM}{{\cal M}}
\DeclareMathOperator{\poly}{poly}
\newcommand{\ignore}[1]{}
\newcommand{\ca}{{\cal A}}
\newcommand{\cb}{{\cal B}}
\newcommand{\cl}{{\cal L}}
\newcommand{\cx}{{\cal X}}
\newcommand{\valpha}{{\vec{\alpha}}}
\newcommand{\ymerf}{{\mathrm erf}}
\newcommand{\opt}{\mathrm{OPT}}
\newcommand{\base}{\mathrm{base}}
\newcommand{\OLO}{\mathrm{OLO}(N,D)}
\newcommand{\EXP}{\mathrm{EXP}(N,D)}
\newcommand{\reals}{{\mathbb R}}
\newcommand{\val}{\mathrm{VAL}}
\newcommand{\ymE}{\mathbb{E}}
\newcommand{\inner}[1]{\langle #1 \rangle}
\title{Competitive ratio versus regret minimization: achieving the best of both worlds}
\author{
    Amit Daniely\thanks{The Hebrew University of Jerusalem and  Google Research, Tel Aviv} \hspace{1cm}
    Yishay Mansour\thanks{Blavatnik School of Computer Science, Tel Aviv University, Tel Aviv Israel
       and  Google Research, Tel Aviv}
}
\begin{document}

\maketitle

\begin{abstract}
We consider online algorithms under both the competitive ratio
criteria and the regret minimization one. Our main goal is to build
a unified methodology that would be able to guarantee both criteria
simultaneously.
For a general class of online algorithms, namely any Metrical Task
System (MTS), we show that one can simultaneously guarantee the best
known competitive ratio and a natural regret bound. For the paging
problem we further show an efficient online algorithm (polynomial in the number of pages) with this guarantee.

To this end, we extend an existing regret minimization algorithm
(specifically,
\cite{kapralov2011prediction}) to handle movement cost (the cost of
switching between states of the online system). We then show how to
use the extended regret minimization algorithm to combine multiple
online algorithms. Our end result is an online algorithm that can
combine a ``base" online algorithm, having a guaranteed competitive
ratio, with a range of online algorithms that guarantee a small
regret over any interval of time. The combined algorithm guarantees
both that the competitive ratio matches that of the base algorithm
and a low regret over any time interval.

As a by product, we obtain an expert algorithm with close to optimal
regret bound on every time interval, even in the presence of
switching costs. This result is of independent interest.
\end{abstract}

\newpage

\section{Introduction}


Online algorithms address decision making under uncertainty. They serve a
sequence of requests while having uncertainty regarding future requests.
We consider the Metrical Tasks Systems (MTS) framework for analyzing online algorithms.
In this framework, the online algorithm first receives a request and then decides how to serve it.
In order to serve the request, there are two
types of costs. A cost for changing the state of the underlying system, and a cost for serving the request from the new state. The cost of
the online algorithm is the sum of these two costs.

%
%

Historically, the initial dominant form of analysis for such
algorithms was to assume a stochastic arrival process for the
requests, and analyze the performance of the online algorithm given
it (e.g., \cite{CoffmanL91}).
Given a well defined arrival process, there exists a well defined optimal policy and an associated optimal cost.
This methodology is quite sensitive to the modeling assumptions, from the specific arrival process, to
the assumption about dependencies (e.g., i.i.d. requests), to the robustness when the assumptions are not perfectly met.

Competitive analysis is aimed at addressing those issues and giving a worst case guarantee.
Rather than assuming a stochastic arrival process,
competitive analysis allows for any request sequence. The main idea
is to compare the performance of the online algorithm on the request
sequence to that of an omniscient offline algorithm that observes in
advance the entire request sequence. The worse case ratio, over all
possible request sequences, between the performance of the online
algorithm and the performance of the omniscient offline algorithm is
the competitive ratio. (See, e.g., \cite{BorodinEl98}.) While for
a few online tasks there are algorithms with good (i.e., small) competitive ratios, the competitive analysis approach is often criticized as being too
pessimistic. Indeed, for many online tasks there are no algorithms with good competitive ratio.

Another form of analyzing online algorithms is regret minimization.
In this approach, originally developed for analyzing online
prediction tasks, one fixes a collection of $N$ benchmark
algorithms. Then, the loss of the algorithm at hand is compared to
the loss of the best performing algorithm in the benchmark. More
specifically, the regret is the difference between the cumulative
loss of the online algorithm and that of the best benchmark
algorithm, and the goal is to have vanishing average regret.
Vanishing regret implies matching the performance of the best
benchmark algorithm. (See, e.g., \cite{CesaBianchiLu06}.)




Our main goal is to develop a methodology that would allow to keep
best known competitive ratio guarantees, while augmenting them with additional regret minimization guarantees. Specifically, given an online algorithm we will guarantee that our resulting algorithm would have the same competitive ratio of the original algorithm. In addition, we will guarantee that for any time interval we will have a low regret, when compared to the benchmark of algorithms that do not switch states.

Specifically, for any time
interval $I=[t_1,t_2]$ the regret is at most
$O(\sqrt{|I|\log (TN)})$, where $T$ is the total number of time
steps and $N$ is the number states.

This implies, for instance, that if the benchmark makes $s$ state changes, the
regret bound would be bounded by $O(\sqrt{sT\log(TN)})$, and for $s
\ll T$ the regret is sublinear.
To derive our technical results we introduce a new regret
minimization algorithm for the experts problem.
Here, the algorithm has to choose one out of $N$ experts at each round. Following its choice, a loss for each expert is revealed, and the algorithm suffers the loss of the expert it chose. The regret of the algorithm is its cumulative loss, minus the cumulative loss of the best expert.

Our algorithm is inspired by the algorithm
of \cite{kapralov2011prediction}. The main benefit of
\cite{kapralov2011prediction} is that they have essentially zero
(exponentially small constant) regret to one expert while having the
usual $O(\sqrt{T\log(N)})$ regret with respect to the other experts.
Our main technical contribution is to extend the algorithm to work
in a {\em strongly adaptive} and {\em switching costs} setting. By
strong adaptivity we mean that we can bound the regret of any time
interval $I$ as a function of the length of $I$, i.e., $|I|$, rather
than the total number of time steps $T$. By  switching cost we mean
that we associate a cost with changing experts between time
steps. The ability of handling switching costs is critical in order
to extend regret minimization results to the competitive analysis
framework.
%
Finally, as in \cite{kapralov2011prediction}, we have a base expert,
such that the regret with respect to it is essentially zero.
This base expert will model the online competitive algorithm whose
performance we like to match.
We also maintain an algorithm for each interval size (rounded to powers of two).

Using our regret minimization algorithm we can show that for any
metrical task system (MTS) there is an online algorithm that
guarantees the minimum between: (1) the competitive ratio times the
offline cost plus an additive constant $D$, the maximum switching cost, and
(2) for any time interval the regret with respect to any service sequence with $s$ state changes
is at most $O(D\sqrt{s|I|\log (NT)})$.
%
%
In addition, the computation time of the algorithm is polynomial in
$T$ and $N$. We note however that in many applications the number of states $N$
is exponential in the natural parameters of the problem. For instance, in the
paging problem, where the cache has $k$ out of $n$ pages, the number
of states is $N=\Omega(n^k)$. However, we can show how to run our online
algorithm in time polynomial in $n$ and $k$. For the $k$-server
problem we derive an algorithm which is unfortunately not polynomial
in $n$ and $k$. We leave the existence of an efficient algorithm as an open problem, but do point on some possible obstacles. Namely, we show that an efficient sublinear regret
algorithm would imply an improved approximation algorithm
for the well studied $k$-median problem.

{\bf Related Work.}
Adaptivity has attained much attention in the online learning
literature over the years (an incomplete list includes
~\cite{herbster1998tracking, bousquet2003tracking, BlumMa07,hazan2007adaptive, kapralov2011prediction,
cesa2012new, adamskiy2012closer,
panigrahy2013optimal, luo2015achieving, daniely2015strongly, altschuler2018online}. Another
related, but somewhat orthogonal line of
work~\cite{zinkevich2003online,hall2013online,rakhlin2013optimization,
jadbabaie2015online} studies {\em drifting environments}. Switching costs were studied in~\cite{KalaiVe05}
and attained considerable interest recently (e.g,
\cite{geulen2010regret, cesa2013online, dekel2014bandits}).
Connections between regret-minimization and competitiveness has been
previously studied. In particular \cite{blum2002static,
blum2000line, abernethy2010regularization, buchbinderunified} focus
on algorithmic techniques that simultaneously apply to MTS and
experts, and also show regret minimization algorithms for
MTS an other online problems. Another example
is~\cite{andrew2013tale} that studies tradeoffs between regret minimization
and competitiveness for a certain online convex problem.

Technically, the work of \cite{kapralov2011prediction} is quite
close to our work.
They showed an algorithm for the two experts problem that has
essentially no regret w.r.t. one of the experts, and nearly optimal
regret ($\tilde O(\sqrt{T})$) w.r.t. the second expert. They also
showed that this result implies an algorithm with strongly adaptive
guarantees in the $N$-experts problem. Namely, they showed an
algorithm with nearly optimal regret on every {\em geometric} time
interval.
Our main contribution, regarding regret minimization, is to handle
switching costs. In addition, we extend the regret bound
to hold on all intervals.

\section{Model and Results}\label{sec:results}

\subsection{Strongly adaptive expert algorithms in the presence of switching costs}

We consider an online setting where there are $N$ experts and a
switching cost of $D\geq 0$, which we call $N$-experts $D$-switching
cost problem. There are $T$ time steps, and at time $t = 1,\ldots,
T$, the online algorithm chooses an expert $i_t\in [1,N]$. Then, the
adversary reveals a loss vector $l_t=(l_t(1),\ldots l_t(N))\in
[0,1]^N$. The loss of the algorithm at time $t$ is $l_t(i_t)$. In
addition, if $i_t$ is different from $i_{t-1}$, the algorithm
suffers an additional switching cost of $D$. This implies that the
total loss of the algorithm is $\sum_{t=1}^Tl_t(i_t) + D\sum_{t=2}^T1[i_t\ne i_{t-1}]$.

We assume that the adversary is oblivious, i.e., the sequence of
losses is chosen before the first time step. Likewise, we assume
that the time horizon (denoted by $T$) is known in advance.
Given a time interval $I = \{t_0+1,...,t_0+k\}$ we say that the
online algorithm has a regret bound of $R(I)$ in $I$ if for any
sequence of losses we have
\begin{equation}\label{eq:regret}
\ymE\left[\sum_{t\in I} l_t(i_t) + D\sum_{t\in I \setminus
\{t_0+1\}} 1[i_t\ne i_{t-1}]\right] \le \min_{i\in [1,N]}\sum_{t\in
I} l_t(i) + R(I)\;,
\end{equation}
where $1[\cdot]$ is the indicator function and the expectation is
over the randomization of the online algorithm. Note that we sum the
losses only for $t\in I$ and the switching costs only for $t\in
\{t_0+2,...,t_0+k\}$. When we refer to the {\em regret of the
algorithm in $I$} we mean the minimal $R(I)$ for which inequality
\eqref{eq:regret} holds. We say that the algorithm has a regret
bound of $R(T)$ if it has a regret bound of $R(T)$ in the interval
$[1,T]$.
Our first result presents an algorithm whose regret on any interval
$I$ is close to optimal, even when switching costs are present.
Specifically, we show that
\begin{theorem}\label{thm:main_non_uniform}
There is an $N$-experts $D$-switching cost algorithm with
$O\left(N\log(T)\right)$ per-round computation, whose regret on
every interval $I\subseteq [T]$ is at most $O\left(\sqrt{(D+1)
|I|\log(NT)}\right)$
\end{theorem}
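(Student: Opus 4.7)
The plan is to first build the key primitive: a two-experts, $D$-switching-cost algorithm that suffers essentially zero regret against a designated ``base'' expert while still achieving the near-optimal $O(\sqrt{(D+1)|I|\log T})$ regret against the second expert on any interval $I$ that begins at the algorithm's start time. Given such a primitive, the full algorithm is assembled by layering instances of it---one per dyadic interval of $[T]$---and stitching their guarantees together so that a good bound holds on every subinterval.

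First I would extend the Kapralov--Panigrahy two-experts algorithm to switching costs. Their construction maintains a weight for the non-base expert that evolves according to a potential function and yields only a constant additive loss compared to the base, while guaranteeing $O(\sqrt{T\log T})$ regret to the alternative. Switching costs are incorporated by (a) rescaling the learning rate by $1/\sqrt{D+1}$, trading a factor of $\sqrt{D+1}$ in the loss-regret for a proportional reduction in the frequency of switches, and (b) coupling the per-round sampling across time---for instance, using a shared uniform threshold so that the expected number of switches between two consecutive rounds is bounded by the total variation distance between the consecutive mixture distributions. With this coupling, the expected switching cost telescopes against the weight-change term in the potential analysis of \cite{kapralov2011prediction} and is absorbed into the $O(\sqrt{(D+1)T\log T})$ regret against the alternative; the bound against the base remains essentially zero.

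Next, to obtain strong adaptivity on every subinterval, I would employ the standard dyadic-covering decomposition. For each level $j = 0, 1, \ldots, \lceil\log T\rceil$ and each dyadic block $I^j_k = \{k 2^j + 1, \ldots, (k+1) 2^j\}$, spawn an instance of a standard $N$-experts algorithm (with the switching-cost-aware learning rate) that is active only during $I^j_k$ and achieves regret $O(\sqrt{(D+1) 2^j \log N})$ on that block. Treat all these interval-experts as candidate alternatives and fold them one by one into the running algorithm using the two-experts primitive: the current composite plays the role of the ``base'', each newly activated interval-expert plays the role of the alternative, and the primitive's near-zero regret against its base ensures the compositions do not accumulate error. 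Since exactly one dyadic block per level is active at any given time, only $O(\log T)$ interval-experts are live at once, giving the claimed $O(N\log T)$ per-round cost. Finally, any interval $I\subseteq[T]$ partitions into at most $O(\log |I|)$ consecutive dyadic blocks $I^{j_1}_{k_1},\ldots,I^{j_r}_{k_r}$; summing the per-block regret bounds and applying Cauchy--Schwarz yields $\sum_s \sqrt{(D+1)|I^{j_s}_{k_s}|\log(NT)} = O(\sqrt{(D+1)|I|\log(NT)})$, where the extra logarithmic factor is absorbed into $\log(NT)$.

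The hardest step is the switching-cost extension of the two-experts primitive. The Kapralov--Panigrahy analysis is delicate: the zero-regret-to-base guarantee depends on a carefully tuned potential, and naive modifications either inflate the regret against the alternative or fail to bound the number of switches by anything better than $T$. The bulk of the technical work therefore lies in designing the coupled sampling rule and re-proving the potential-function invariants in the presence of the additional $D$ term, while still preserving the property that the regret against the base remains exponentially small rather than merely $o(\sqrt{T})$---this is what makes the recursive composition across dyadic levels safe.
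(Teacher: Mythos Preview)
Your high-level plan---extend the Kapralov--Panigrahy two-experts primitive to switching costs, then layer instances across scales---is the right one, and it matches the paper's strategy. But there is a genuine gap in the composition step. Your primitive, as you describe it, gives near-zero regret to the base and $O(\sqrt{(D+1)|I|\log T})$ regret to the alternative only on intervals $I$ that \emph{begin at the primitive's start time}. The hierarchical folding, however, requires the near-zero-regret-to-base property on \emph{arbitrary} subintervals. Concretely: if the level-$j'$ primitives (for $j'>j$) all run over $[T]$ and you want the final composite's regret against the level-$j$ interval-expert on a dyadic block $I^j_k$, you must pass through each level-$j'$ primitive, which must contribute near-zero regret to its base \emph{on $I^j_k$}. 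Since $I^j_k$ need not start at time $1$, your primitive gives no such guarantee. Restarting primitives at block boundaries does not fix this, because coarser-level primitives restart only at coarser boundaries, which do not align with $I^j_k$. And the unmodified Kapralov--Panigrahy dynamics genuinely fail here: without projection the state $x_t$ can drift far above the threshold $U$, locking the algorithm onto the alternative, so that on a subinterval where the alternative is bad the regret to base on that subinterval is $\Theta(|I|)$, not near-zero.

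The paper closes this gap by adding a projection step $x_{t+1}\gets \Pi_{[-2,U+2]}\bigl[(1-\tfrac{1}{\tau})x_t+b_t\bigr]$ and proving (Lemma~\ref{lem:reduction_to_finite_horizon}) that the projected algorithm on any interval $I$ is equivalent to the unprojected algorithm started fresh at some state in $[-2,U+2]$; the potential analysis is then redone for an arbitrary initial state. This yields a primitive (Theorem~\ref{thm:two_experts}) whose regret to expert~$0$ is at most $\sqrt{D}\,TZ$ on \emph{every} interval and whose regret to expert~$1$ is $O(\sqrt{D\tau\log(1/Z)})$ on every interval of length $\le\tau$. With this strongly adaptive primitive in hand, the paper avoids per-block experts entirely: it runs one Fixed Share instance per \emph{scale} $\tau=2^{-u}T$ (Fixed Share already has regret $O(\sqrt{D\tau\log(N\tau)})$ on every interval of length $\le\tau$, Theorem~\ref{thm:basic_alg_mw_1}) and combines these $\log T$ instances hierarchically (Theorem~\ref{thm:strongly_adaptive}). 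Since both the combiner and Fixed Share are strongly adaptive, the bound on an arbitrary interval $I$ follows directly by picking the scale $u$ with $2^{-u-1}T\le|I|\le 2^{-u}T$---no dyadic decomposition or Cauchy--Schwarz chaining is needed. So the missing technical idea in your proposal is precisely the projection and its any-interval analysis; the switching-cost extension you identified as hardest is real but is only half of the work.
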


We next show that Theorem \ref{thm:main_non_uniform} is tight, up to
the dependency on $D$.
Recall that even when ignoring switching costs
and considering only interval $I$ the regret is
$\Omega\left(\sqrt{|I|\log(N)}\right)$ (e.g.,
~\cite{CesabianchiFrHeHaScWa97}).
The following theorem improves the lower bound by showing that
in order to have a regret bound for {\em any} interval there is an
additional log factor of $T$ (even for $D= 0$).
\begin{theorem}\label{thm:main_lower}
For every $N$-experts algorithm there is segment $I$ with regret  $\Omega\left(\sqrt{|I|\log\left(NT\right)}\right)$.
\end{theorem}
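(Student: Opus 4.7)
The plan is to invoke Yao's minimax principle: exhibit a distribution over loss sequences such that every deterministic algorithm incurs expected regret $\Omega(\sqrt{|I|\log(NT)})$ on some interval $I$. The distribution is the obvious one, i.e.\ independent uniform $\{0,1\}$ losses. Partitioning $[T]$ into $M=\lfloor T/k\rfloor$ disjoint intervals $I_1,\ldots,I_M$ of common length $k$ (to be tuned), and writing $A_j:=\sum_{t\in I_j}l_t(i_t)$, $L_j(i):=\sum_{t\in I_j}l_t(i)$, and $R_j:=A_j-\min_i L_j(i)$, I will show that for suitable $k$ the maximum regret across these intervals is already $\Omega(\sqrt{k\log(NT)})$.

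The core is the algorithm-independent identity $\mathbb{E}[R_j]=\mathbb{E}[X_j]$, where $X_j:=k/2-\min_iL_j(i)=\max_i(k/2-L_j(i))$. It holds because independence of $l_t$ from the past forces $\mathbb{E}[l_t(i_t)\mid\text{past}]=1/2$, hence $\mathbb{E}[A_j]=k/2$. Now $\max_j X_j=\max_{j,i}(k/2-L_j(i))$ is the maximum of $NM$ iid centered $\mathrm{Binomial}(k,1/2)$ random variables, each approximately Gaussian with variance $k/4$. Standard anti-concentration for the maximum of iid near-Gaussians therefore gives $\mathbb{E}[\max_j X_j]=\Omega(\sqrt{k\log(NM)})$.

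To transfer this from $\max_j X_j$ to $\max_j R_j$, I decompose $R_j=(A_j-k/2)+X_j$ and take $j^\star:=\arg\max_j X_j$, which is a function of the losses alone. Then
$$
\mathbb{E}[\max_j R_j]\ge\mathbb{E}[R_{j^\star}]=\mathbb{E}[\max_j X_j]+\mathbb{E}[A_{j^\star}-k/2]\ge\mathbb{E}[\max_j X_j]-\mathbb{E}[\max_j|A_j-k/2|].
$$
By Azuma--Hoeffding, $A_j-k/2$ is a martingale-difference sum with bounded steps, hence sub-Gaussian with variance proxy $k/4$, and so $\mathbb{E}[\max_j|A_j-k/2|]=O(\sqrt{k\log M})$. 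Choosing $k$ so that $\log(NM)\ge(1+\Omega(1))\log M$ and $\log(NM)=\Omega(\log(NT))$ (for instance $k\le\sqrt{NT}$, yielding $M\ge\sqrt{T/N}$ and $\log(NM)\ge\tfrac12\log(NT)$) makes the signal strictly dominate the noise and gives the claimed bound on $I_{j^\star}$.

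The hardest step is the constant comparison in the final inequality, particularly for small $N$ (say $N=2$), where the signal $\sqrt{k\log(NM)}$ and noise $\sqrt{k\log M}$ differ only through the $\log N$ term inside the logarithm. Closing this gap requires sharp tracking of the anti-concentration constant for the maximum of iid near-Gaussians against the Azuma sub-Gaussian constant, exploiting that the signal's max is over $NM$ iid random variables while the noise's max is over only $M$ (possibly correlated) random variables. This, together with the parameter choice above, is what ultimately delivers the $\Omega(\sqrt{|I|\log(NT)})$ lower bound.
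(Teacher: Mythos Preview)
Your approach has a genuine gap for small $N$, and it is structural rather than a matter of constants. Consider the randomized algorithm that restarts multiplicative weights at the beginning of each block $I_j$ (it may do so, since the partition is fixed in advance). Its expected regret on each $I_j$ is at most $C\sqrt{k\log N}$ for \emph{every} loss sequence, by the standard worst-case MW bound; hence $\max_{\ell}\max_j \mathbb{E}[R_j(\ell)] \le C\sqrt{k\log N}$. For $N=2$ this is $O(\sqrt{k})$, strictly smaller than the $\Omega(\sqrt{k\log T})$ you are after, for any choice of $k$. Thus no distribution over losses---i.i.d.\ or otherwise---can force regret $\Omega(\sqrt{k\log(NT)})$ on one of the blocks $I_j$ against this algorithm; its bad interval is the whole horizon $[T]$ (where the resets cost it), which your analysis never touches. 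This also shows why the ``sharp constant'' fix cannot work: for $N=2$ the signal and noise in your decomposition both scale like $\sqrt{(k/2)\log M}$, and $\sqrt{\log(2M)}-\sqrt{\log M}=\Theta(1/\sqrt{\log M})\to 0$, whereas you need a gap of order $\sqrt{\log T}$.

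The paper takes a different route. It first splits the target into the classical $\Omega(\sqrt{|I|\log N})$ bound and a new $\Omega(\sqrt{|I|\log T})$ bound, and proves the latter already for $N=2$. Crucially, for the $\log T$ part it uses an \emph{adaptive} adversary rather than a random one: a key lemma shows that if an algorithm has regret at most $M$ on a window of length $T'\ge 4M$, then the adversary, by tailoring losses to the algorithm's predictions, can force total loss at least $T'/2+M2^{-4M}$ on that window. Assuming toward contradiction that the regret is at most $\tfrac{1}{100}\sqrt{|I|\log_2 T}$ on every interval, the paper applies this lemma on each block of length $\log_2 T$ to accumulate total loss $T/2+\Omega(T^{24/25})$ on $[T]$, contradicting the assumed regret bound there. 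The essential ingredient your plan is missing is this coupling between the small-interval guarantees and the full-horizon regret; a single-scale partition with an oblivious distribution cannot supply it.
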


\subsection{Metrical Task systems and competitive analysis}


{\bf MTS model:} An MTS is a pair $(\cx,\cl)$ where $\cx$ is an
$N$-points (pseudo-)metric space and $\cl\subset [0,1]^\cx$ is a
collection of possible loss vectors. Each  $x\in\cx$ represents a
state of the MTS, and the distance function $d(x_1,x_2)$ describes
the cost of moving between states $x_1,x_2\in \cx$.
At each step $t=1,\ldots,T$ the online algorithm is first given a
loss vector $l_t\in \cl$ and then chooses a state $i_t\in \cx$.
Following that, the algorithm suffers a loss of $l_t(i_t) +
d(i_t,i_{t-1})$. Therefore, for a sequence of $T$ steps the toal loss would be
$\sum_{t=1}^T l_t(i_t) + d(i_t,i_{t-1})$. As in the experts problem, we assume that the adversary is oblivious and that the time horizon (denoted by $T$) is known in advance.

{\bf MTS versus Experts:}
There are three differences between the MTS model and the
experts model:
(i) In the MTS problem the algorithm observes the loss before it
chooses an action while in the expert problem the algorithm first selects
the action and only then observes the losses.
(ii) In MTS the switching costs are dictated by an arbitrary metric,
while in our experts setting the switching cost is always the same.
(iii) In the expert problem the losses can be arbitrary, while in
MTS the losses are restricted to be elements of $\cl$.

{\bf Competitive Analysis:}
One of the classical measures of online algorithms is {\em
competitive ratio}.
%
%
An online algorithm has a competitive ratio $\alpha\ge 1$ if, up to
an additive constant and for any  sequence of losses, the loss of the algorithm is bounded by
$\alpha$ times the loss of the best possible sequence of actions. In other worlds, the loss of the
online algorithm is competitive with the optimal offline algorithm that is
given the sequence of losses (requests) in advance.
Formally, there is a constant $\beta>0$ (independent of $T$) such
that for any sequence of losses $l_1,\ldots, l_T$ we have that
\begin{equation}\label{eq:competitive}
\ymE\left[\sum_{t=1}^Tl_t(i_t) + \sum_{t=2}^Td(i_t, i_{t-1})\right]
\le
\alpha\min_{i^*_1,\ldots,i^*_T\in\cx}\left(\sum_{t=1}^Tl_t(i^*_t) +
\sum_{t=2}^Td(i^*_t, i^*_{t-1})\right)+ \beta\;.
\end{equation}
Classical results by~\cite{borodin1992optimal} provide, for any
MTS, a deterministic algorithm with competitive ratio $2N-1$. This
is optimal, in the sense that there are MTSs with no deterministic
algorithm with a better competitive ratio. For randomized
algorithms, the best known competitive ratio~\cite{fiat2003better,
bartal1997polylog} for a general MTS is
$O\left(\log^2(N)\log\log(N)\right)$, while the best known lower
bound~\cite{bartal2005metric, bartal2006ramsey} is
$\Omega\left(\frac{\log(N)}{\log\log(N)}\right)$.

{\bf Regret for MTS:}  The notions of regret and regret on a time
interval for MTS are defined in a fashion similar to the experts
problem. Given time interval $I = \{t_0+1,...,t_0+k\}$ we say that
the algorithm has a regret bound of $R(I)$ in $I$ if for any
sequence of losses we have
\begin{equation}\label{eq:regret_mts}
\ymE\left[\sum_{t\in I} l_t(i_t)  + \sum_{t\in I \setminus
\{t_0+1\}} d(i_t, i_{t-1})\right] \le \min_{i\in [1,N]}\sum_{t\in
I}l_t(i) + R(I)\;,
\end{equation}
where the expectation is over the randomization of the algorithm. Likewise, we say that the algorithm has a regret bound of $R(T)$ if it has a
regret bound of $R(T)$ in the interval $[1,T]$.

{\bf Competitive ratio versus regret minimization:}
Both competitive analysis and regret minimization are measuring the
quality of an online algorithm compared to an offline counterpart.
There are two major differences between the two approaches. The
first is the type of benchmark used for the comparison: Competitive
analysis allows an arbitrary offline algorithm while regret
minimization is limiting the benchmark to the best static expert,
i.e., selecting the same expert at every time step. The second
difference is the quantitative comparison criteria. While in
competitive analysis the comparison is multiplicative, in the regret
analysis the comparison is additive.

From Theorem \ref{thm:main_non_uniform} with switching costs of $D=\max_{x,y\in X}d(x,y)$ it is not hard to conclude that:
\begin{corollary}\label{cor:MTS_no_competitie}
For any MTS there is an algorithm whose regret on any interval
$I\subseteq [1,T]$ is $O\left(\sqrt{(D+1)|I|\log\left(NT\right)}
\right)$.
\end{corollary}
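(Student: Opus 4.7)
The plan is to reduce the MTS problem directly to the $N$-experts $D$-switching cost setting of Theorem~\ref{thm:main_non_uniform}, where we take the uniform switching cost to be $D=\max_{x,y\in\cx}d(x,y)$, the diameter of the metric. Let $\cal A$ denote the algorithm produced by Theorem~\ref{thm:main_non_uniform}. We run $\cal A$ on the $N$ states of $\cx$ viewed as experts; at each time $t$ the algorithm outputs a state $i_t$, which we use as the MTS action.

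The two subtleties to verify are (a) that the MTS loss is bounded by the experts loss that $\cal A$ was designed for, and (b) that the benchmark on the right-hand side of \eqref{eq:regret_mts} matches the one guaranteed by \eqref{eq:regret}. For (a), in any MTS we have $d(i_t,i_{t-1}) \le D\cdot \mathbf{1}[i_t\ne i_{t-1}]$, since the metric is bounded by its diameter and is zero on the diagonal. Hence, on every interval $I=\{t_0+1,\dots,t_0+k\}$,
\begin{equation*}
\ymE\left[\sum_{t\in I}l_t(i_t) + \sum_{t\in I\setminus\{t_0+1\}} d(i_t,i_{t-1})\right]
\le \ymE\left[\sum_{t\in I}l_t(i_t) + D\sum_{t\in I\setminus\{t_0+1\}}\mathbf{1}[i_t\ne i_{t-1}]\right].
\end{equation*}
For (b), both regret notions compare against $\min_{i\in\cx}\sum_{t\in I}l_t(i)$, i.e., the best fixed state with no movement cost on the benchmark side, so the benchmarks coincide. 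Applying the bound of Theorem~\ref{thm:main_non_uniform} (with switching cost $D$) to the right-hand side above yields the desired interval regret $O(\sqrt{(D+1)|I|\log(NT)})$.

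Two minor points need a word. First, in MTS the algorithm sees $l_t$ before choosing $i_t$, whereas in the experts setting $i_t$ is chosen first; we simply ignore this extra information, which only helps. Second, the loss sequence is restricted to lie in $\cl\subseteq[0,1]^\cx$, which is a subset of the $[0,1]^N$ losses allowed in the experts model, so the theorem applies verbatim. No genuine obstacle appears: the main content of the corollary is Theorem~\ref{thm:main_non_uniform}, and the reduction above is a direct comparison of cost and benchmark terms.
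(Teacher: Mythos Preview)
Your proposal is correct and follows essentially the same approach as the paper: the paper simply states that the corollary follows from Theorem~\ref{thm:main_non_uniform} by taking the uniform switching cost $D=\max_{x,y\in X}d(x,y)$, and you have filled in precisely the routine details of that reduction (bounding the metric cost by $D\cdot 1[i_t\ne i_{t-1}]$, matching the benchmarks, and noting that the extra information in the MTS order of play can only help).
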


Based on a variant of Theorem \ref{thm:main_non_uniform} (namely,
Theorem \ref{thm:main_combine} below) we show that,
\begin{corollary}\label{cor:MTS_competitie}
Given an online algorithm $\ca_\base$ for a MTS $\cM$ with
competitive ratio $\alpha\ge 1$, there is an online algorithm $\ca$
for $\cM$  such that
\begin{itemize}
\item $\ca$ has a competitive ratio of $\alpha$
\item The regret of $\ca$ on every interval $I\subseteq [T]$ is
$O\left(D\sqrt{|I|\log\left(T\right)} +
\sqrt{(D+1)|I|\log\left(NT\right)} \right)$.
\end{itemize}
Furthermore, the per-round computational overhead of $\ca$ on top of
$\ca_\base$ is $O\left(N\log(T)\right)$.
\end{corollary}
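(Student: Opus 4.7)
The plan is to apply Theorem~\ref{thm:main_combine} --- the variant of Theorem~\ref{thm:main_non_uniform} promised in the paper, which in addition to the strongly adaptive bound grants essentially zero regret to one distinguished ``base'' expert --- to a pool of $N+1$ experts: the $N$ constant experts corresponding to the states of $\cx$, plus a distinguished base expert whose action at each round $t$ mirrors the move $i^{\base}_t$ that $\ca_\base$ would take. We take the uniform expert-level switching cost to be $D := \max_{x,y\in\cx} d(x,y)$, which upper bounds every transition cost $d(i_t,i_{t-1})$ in the MTS, so any bound proved in the experts formulation transfers to the MTS formulation. The loss fed to the base expert at round $t$ is its full MTS-cost increment $l_t(i^{\base}_t)+d(i^{\base}_t, i^{\base}_{t-1})$ (rescaled into $[0,1]$), so that its cumulative loss over any prefix equals the true MTS cost of $\ca_\base$ on that prefix; the $N$ static experts incur only the plain service loss $l_t(i)$.

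For the competitive ratio, the near-zero-regret-to-base guarantee of Theorem~\ref{thm:main_combine} yields
\[
\ymE\bigl[\text{MTS cost of }\ca\bigr] \le \text{MTS cost of }\ca_\base + O(D) \le \alpha\cdot\opt+\beta+O(D),
\]
which is a competitive ratio of $\alpha$ with an updated additive constant. For the per-interval regret, the strongly adaptive bound of Theorem~\ref{thm:main_combine} applied with respect to any static expert $i\in[N]$ on an interval $I=\{t_0+1,\ldots,t_0+k\}\subseteq[1,T]$ gives
\[
\ymE\!\left[\sum_{t\in I}l_t(i_t)+\sum_{t\in I\setminus\{t_0+1\}}d(i_t,i_{t-1})\right]\le\sum_{t\in I}l_t(i)+O\!\left(\sqrt{(D+1)|I|\log(NT)}\right),
\]
which accounts for the second summand in the desired bound. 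The extra $O(D\sqrt{|I|\log T})$ summand is the per-interval regret against the base expert itself: unlike the global bound, on an arbitrary sub-interval the distinguished status of the base expert no longer helps, and one only gets the usual $\tilde O(\sqrt{|I|\log T})$ rate, with the $D$ factor arising because the base expert's per-step loss lives in $[0,1+D]$.

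The per-round computational overhead on top of $\ca_\base$ is $O(N\log T)$, coming solely from maintaining the meta-algorithm of Theorem~\ref{thm:main_combine}; simulating $\ca_\base$ in parallel simply adds its own runtime. The hard part is not the corollary itself but Theorem~\ref{thm:main_combine}: one needs a single expert-algorithm construction that simultaneously achieves (i) essentially zero \emph{global} regret to a distinguished base expert and (ii) strongly adaptive $\tilde O(\sqrt{(D+1)|I|\log(NT)})$ regret against every other expert on every sub-interval, all in the presence of a uniform switching cost $D$. This requires weaving the Kapralov--Panigrahy base-expert trick into the doubling-interval architecture underlying Theorem~\ref{thm:main_non_uniform}; once that theorem is in hand, the corollary reduces to the bookkeeping outlined above.
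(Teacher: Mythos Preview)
Your high-level instinct --- feed $\ca_\base$ into the combiner of Theorem~\ref{thm:main_combine} and read off both guarantees --- is right, but you have misread the interface of Theorem~\ref{thm:main_combine}, and the decomposition you propose does not go through.

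Theorem~\ref{thm:main_combine} does \emph{not} take a pool of $N+1$ experts. It takes $\log_2(T)+2$ \emph{algorithms} $\ca_\base,\ca_0,\ldots,\ca_{\log_2 T}$ and guarantees (i) regret $D$ against $\ca_\base$ globally, and (ii) for each interval $I$ with $2^{-u-1}T\le |I|\le 2^{-u}T$, regret $O(D\sqrt{|I|\log T})$ against the \emph{single} algorithm $\ca_u$ attached to that length scale. There is no strongly-adaptive bound against the $N$ static states in Theorem~\ref{thm:main_combine} itself. In the paper's argument the $\ca_u$'s are taken to be copies of the algorithm from Corollary~\ref{cor:MTS_no_competitie} (equivalently Theorem~\ref{thm:main_non_uniform}), each of which \emph{internally} competes with all $N$ states. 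Then, on an interval $I$ of length $\approx 2^{-u}T$, the regret of $\ca$ against the best static state factors through $\ca_u$:
\[
\underbrace{O\!\left(D\sqrt{|I|\log T}\right)}_{\text{regret of }\ca\text{ vs.\ }\ca_u\text{ (Thm~\ref{thm:main_combine}, item 1)}}
\;+\;
\underbrace{O\!\left(\sqrt{(D+1)|I|\log(NT)}\right)}_{\text{regret of }\ca_u\text{ vs.\ best state (Cor.~\ref{cor:MTS_no_competitie})}}.
\]
So the two summands in the corollary are the two legs of this triangle inequality, not ``regret versus static experts'' plus ``regret versus base''. Your explanation of the $O(D\sqrt{|I|\log T})$ term as the sub-interval regret \emph{against the base expert} is therefore off: that term has nothing to do with $\ca_\base$ --- it is the price the meta-combiner pays to track $\ca_u$, and the factor $D$ comes from the switching cost in the two-algorithm combiner (Theorem~\ref{thm:two_combiner}), not from the scale of the base expert's losses. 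The competitive-ratio part of your argument (item~2 of Theorem~\ref{thm:main_combine} gives regret $D$ against $\ca_\base$, which is absorbed into the additive constant $\beta$) is correct. The $O(N\log T)$ overhead comes from running the $\log_2(T)$ copies of the $N$-experts algorithm plus the $O(\log T)$ cost of the combiner, not from maintaining $N$ separate meta-experts.
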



\subsection{Paging and $k$-sever}

The result regarding the MTS framework (Corollary
\ref{cor:MTS_competitie}) gives a general methodology to achieve the best of both world:
guaranteeing a low regret with respect to the best static solution
and at the same time guaranteeing a good competitive ratio.
One drawback of Corollary \ref{cor:MTS_competitie} is the
dependency on the number of states, $N$. While for an abstract
setting, such as MTS, one should expect at least a linear dependency
on the number of states $N$, in many concrete application this
number is exponential in the natural parameters of the problem.
In this section we discuss two such cases, the paging problem and
the $k$-server problem. For the paging problem we show how to
overcome the computation issue. For the $k$-server problem we show some
possible computational limitations.


{\bf Paging.}
In the online paging problem there is a set $P$ of $n$ memory pages, out of which $k$ can be in the cache at a given time.
At each time we have a request for a page, and if the page is not
located in the cache we have a {\em cache miss}. In this case,
the algorithm has to fetch the page from memory to the cache
and incur a unit cost. If when it fetches the page, the cache is
full (has $k$ pages) then it also has to evict a page.
The algorithm tries to minimize the number cache misses.


It is fairly straightforward to model the paging problem as an MTS.
The states of the MTS will be all possible configuration in which
the cache is full, i.e., all $C\subset P$ such that $|C|=k$. (We can
assume without loss of generality that the cache is always full.)
The number of states of the MTS is $N=\binom{n}{k}$. We need to
define a metric between the states and possible loss functions.
Given two configurations $C_1$ and $C_2$ let
$d(C_1,C_2)=|C_1\setminus C_2|$. First, note that since the cache is
always full, the distance is symmetric. Second, moving from cache
$C_1$ to cache $C_2$ involves fetching the pages $C_1\setminus C_2$
and evicting the pages $C_2\setminus C_1$, and has cost
$|C_1\setminus C_2|$. Now we need to define the possible loss
function. We have a possible loss function $\ell_i$ for each request
page $i\in P$. For a cache $C$, if $i\in C$ then $\ell_i(C)=0$ and
if $i\not\in C$ we have $\ell_i(C)=2$. Clearly when $i\in C$ we do
not have any cost, but when $i\not\in C$ we like to allow the online
algorithm to ``stay'' in state $C$, which would involve fetching
page $i$ (unit cost) while evicting some page $j\in C$ and then
fetching back page $j\in C$ (another unit cost). This explains why
we charge two when $i\not \in C$.

The paging problem has been well studied as one of
the prototypical online problems. The best possible
competitive ratio for deterministic algorithms is $k$, and is
achieved by various algorithms~\cite{sleator1985amortized}
including Least Recently Used (LRU) and First In First Out (FIFO).
For randomized algorithms, the randomized marking algorithm
enjoys a competitive ratio of
$2H_k=2\sum_{i=1}^k\frac{1}{k}\approx 2\log(k)$, and is optimal up
to a multiplicative factor of $2$~\cite{fiat1991competitive}. By
Corollary \ref{cor:MTS_competitie} we have

\begin{corollary}\label{cor:paging_no_comp}
There is a paging algorithm such that
\begin{itemize}
\item Its competitive ratio is $2H_k$
\item Its regret on any interval $I$ is
\[
O\left(k\sqrt{|I|\log\left(T\right)} + \sqrt{k|I|\log\left(\binom{n}{k}T\right)} \right)=
O\left(k\sqrt{|I|\log\left(nT\right)}\right)
\]
\end{itemize}
\end{corollary}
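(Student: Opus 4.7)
The plan is to apply Corollary~\ref{cor:MTS_competitie} to the paging MTS $(\cx,\cl)$ defined just above the statement, with $\ca_\base$ taken to be the randomized marking algorithm. That algorithm is $2H_k$-competitive for online paging, and since the MTS cost function was designed to coincide with the paging cost, it also has competitive ratio $\alpha = 2H_k$ as an algorithm on this MTS.

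Next, I would identify the two parameters that feed into the regret term. The state space is $\cx=\{C\subseteq P : |C|=k\}$, so $N=\binom{n}{k}$. The metric diameter is
\[
D \;=\; \max_{C_1,C_2\in\cx} |C_1\setminus C_2| \;=\; k,
\]
attained by any two disjoint $k$-subsets. Plugging $\alpha = 2H_k$, $N=\binom{n}{k}$ and $D=k$ into Corollary~\ref{cor:MTS_competitie} yields an algorithm with competitive ratio $2H_k$ and regret on every interval $I\subseteq[T]$ bounded by
\[
O\!\left(D\sqrt{|I|\log T} + \sqrt{(D+1)|I|\log(NT)}\right) \;=\; O\!\left(k\sqrt{|I|\log T} + \sqrt{k|I|\log(\binom{n}{k}T)}\right),
\]
which matches the first expression in the statement.

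To obtain the simplified bound $O(k\sqrt{|I|\log(nT)})$, I would use $\log\binom{n}{k}\le k\log n$ and split
\[
\sqrt{k|I|\log(\binom{n}{k}T)} \;\le\; \sqrt{k|I|(k\log n + \log T)} \;\le\; k\sqrt{|I|\log n} + \sqrt{k|I|\log T}.
\]
Both terms, together with the earlier $k\sqrt{|I|\log T}$, are absorbed into $O(k\sqrt{|I|\log(nT)})$.

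There is no genuine obstacle: the result is essentially a substitution into Corollary~\ref{cor:MTS_competitie}. The only things needing care are the classical fact that randomized marking is $2H_k$-competitive on this MTS and the elementary calculation that the paging metric has diameter $k$. Note that the present corollary makes no claim about computational efficiency; achieving running time polynomial in $n$ and $k$ (rather than in $N=\binom{n}{k}$) is the subject of a separate result in the paper.
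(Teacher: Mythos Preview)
Your proposal is correct and follows exactly the paper's approach: the paper states Corollary~\ref{cor:paging_no_comp} immediately after ``By Corollary~\ref{cor:MTS_competitie} we have,'' and in the proof overview (Section~\ref{sec:proof-overview}) explicitly notes that Corollaries~\ref{cor:paging_no_comp} and~\ref{cor:k_server} follow from Corollary~\ref{cor:MTS_competitie} by taking $\ca_\base$ to be an $\alpha$-competitive algorithm and setting the switching cost to the diameter of the metric space. One tiny caveat: your claim that $D=k$ is attained by disjoint $k$-subsets tacitly assumes $n\ge 2k$; in general $D=\min(k,n-k)\le k$, but this only strengthens the bound, so nothing is lost.
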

Let us rephrase Corollary~\ref{cor:paging_no_comp}  in terms of the
paging problem. It guarantees an online algorithm for
which the number of cache misses is at most a $2H_k$ larger than what is achieved by the optimal offline schedule. Likewise, for long enough
segments, i.e., longer than $\Omega(k^2\log(nT))$, it guarantees that
the number of cache misses is not much larger compared to the best
single ``fixed'' cache $C^*$. Recall that for a fixed cache $C^*$,
whenever we have a request for a page $i\not\in C^*$ we first fetch
$i$ evicting $j\in C^*$ and then evict $i$ and fetch back $j$, for a
total cost of two.


We note that there are many natural cases where a fixed cache is
either optimal or near optimal. For example, if the page requests
are distributed i.i.d. then there is a cache configuration $C^*$
which minimizes the probability of a cache miss. Namely, the
configuration that has in the cache the $k$ pages with highest
probability. This implies that this fixed cache strategy is optimal,
up to a multiplicative factor of $2$ in this distributional setting.
Note that we do not need a single cache configuration for all $T$
time steps, but rather only for the interval $I$.


As discussed before, the main drawback of a naive application of
Corollary \ref{cor:MTS_competitie}, i.e.,
Corollary~\ref{cor:paging_no_comp}, is the running time of the
algorithm which scales with the number of states $N=O(n^k)$. Our
main additional contribution for the paging problem is to make the
online algorithm efficient. Namely, we show the following theorem.

\begin{theorem}\label{thm:paing_non_adaptive_intro}
There is an online paging algorithm with per-round runtime of
$\poly(n,\log(T))$ that enjoys a regret of $O(\sqrt{kT\log(n)})$.
\end{theorem}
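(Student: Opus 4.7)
My plan is to side-step the exponential MTS state space by running an online linear optimization (OLO) algorithm directly on \emph{fractional} cache marginals in $\poly(n)$ time, and then converting to integer caches via a dependent rounding that also controls switching.

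Step 1 (reformulation on the $k$-subset polytope). Let $K = \{p\in[0,1]^n : \sum_i p_i = k\}$. I interpret $p_t \in K$ as the marginals of a distribution on $k$-subsets, so $p_{t,i} = \Pr[i \in C_t]$. Under any such realization the expected MTS miss loss at time $t$ is $\tilde\ell_t(p_t) := 2(1 - p_{t,i_t})$, which is linear in $p_t$. Since a linear objective on $K$ is minimized at a vertex, i.e., at an integral $k$-subset, any low-regret OLO algorithm on $(K,\tilde\ell_t)$ competes, on miss cost, with the best fixed cache $C^*$.

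Step 2 (efficient OLO on $K$). I would run mirror descent with the entropic regularizer restricted to $K$ (equivalently, Component Hedge specialized to the $k$-subset polytope): a multiplicative weight update on the $n$ pages followed by a Bregman projection onto $K$, implementable via a sort-and-cap (water-filling) step in $O(n\log n)$. The entropy has diameter $O(k\log(n/k))$ on $K$ and the gradient of $\tilde\ell_t$ has $\ell_\infty$-norm $O(1)$, so the standard analysis gives regret $O(\sqrt{kT\log(n/k)}) = O(\sqrt{kT\log n})$ against any $p^*\in K$ and hence against any fixed cache. Standard stability of the Hedge update also yields $\sum_t \|p_t - p_{t-1}\|_1 = O(\eta T) = O(\sqrt{kT\log n})$ at the optimal step size.

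Step 3 (dependent rounding with coupling). I then sample $C_t$ from a distribution whose marginals are $p_t$, coupled with $C_{t-1}$ so that $\ymE|C_t \setminus C_{t-1}| \le \tfrac{1}{2}\|p_t - p_{t-1}\|_1$. Such a coupling can be realized in $\poly(n)$ per step by standard dependent-rounding techniques (e.g., a pipage-type rounding applied incrementally as $p_t$ drifts from $p_{t-1}$, or a swap-based procedure that modifies $C_{t-1}$ one coordinate at a time to reach the new marginals). Marginal matching ensures $\ymE[\ell_{i_t}(C_t)] = \tilde\ell_t(p_t)$, so the OLO bound from Step 2 transfers to expected miss cost, while the coupling keeps expected movement cost to $O(\sum_t \|p_t - p_{t-1}\|_1) = O(\sqrt{kT\log n})$.

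Step 4 (combine). Summing the miss-cost regret from Step 2 and the movement-cost bound from Step 3, the expected total cost exceeds $\min_{C^*}\sum_t 2\cdot[i_t\notin C^*]$ by $O(\sqrt{kT\log n})$, giving the theorem. The main obstacle I anticipate is Step 3: while all the ingredients are standard, one has to design the rounding \emph{incrementally} so that it is both marginal-preserving at every step \emph{and} nearly-optimally coupled across steps, while maintaining $\poly(n)$ per-round runtime; the mirror-descent analysis and the reduction from MTS losses to a linear OLO on $K$ are routine.
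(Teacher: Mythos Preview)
Your approach is genuinely different from the paper's. The paper runs multiplicative weights directly over all $\binom{n}{k}$ caches and observes that the resulting distribution has the product form $p^k_{\valpha}(A)\propto\prod_{i\in A}\alpha_i$; it then shows, via a dynamic program on the partition function $\Psi^{\valpha}_{n,k}$, that one can efficiently compute probabilities, sample, and---crucially---couple consecutive distributions $p^k_{\valpha}$ and $p^k_{\valpha'}$ (where $\valpha'$ differs from $\valpha$ in a single coordinate) so that $\Pr[A_t\ne A_{t-1}]$ equals their total-variation distance. The regret and movement bounds then come straight from the generic MW-with-switching-cost analysis (Theorem~\ref{thm:basic_alg_mw} together with Lemma~\ref{lem:mul_update_sc}).

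Your plan instead works on the marginal polytope $K$ and defers all the combinatorics to a rounding-plus-coupling step; that step is where the real gap lies. The inequality $\ymE|C_t\setminus C_{t-1}|\le\tfrac12\|p_t-p_{t-1}\|_1$ cannot be guaranteed from the marginals alone: take $n=4$, $k=2$, let $\mu_{t-1}$ be uniform on $\{\{1,2\},\{3,4\}\}$ (so $p_{t-1}=(\tfrac12,\tfrac12,\tfrac12,\tfrac12)$), and set $p_t=(1,0,\tfrac12,\tfrac12)$. Any $\mu_t$ with marginals $p_t$ is supported on $\{\{1,3\},\{1,4\}\}$, and for every such $C_t$ and either value of $C_{t-1}$ one has $|C_t\setminus C_{t-1}|=1$, whereas $\tfrac12\|p_t-p_{t-1}\|_1=\tfrac12$. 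So ``standard dependent rounding'' does not suffice: you must commit to a \emph{specific} family of $k$-subset distributions and prove that consecutive members of that family admit a cheap coupling---which is precisely the content of the paper's analysis of $p^k_{\valpha}$, and not something pipage or incremental swap rounding delivers off the shelf. (A smaller issue: the $O(\sqrt{kT\log n})$ bound in Step~2 needs the local-norm/second-order analysis exploiting that the loss gradient is one-hot; the argument you actually sketch, combining the $\ell_\infty$ gradient bound with the $1/k$-strong-convexity of entropy on $K$, only yields $O(k\sqrt{T\log n})$.)
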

Based on the algorithm of Theorem \ref{thm:paing_non_adaptive_intro}
and a variant of Corollary \ref{cor:MTS_competitie} (Theorem
\ref{thm:main_combine}) we obtain an efficient algorithm with
guarantees as in Corollary \ref{cor:paging_no_comp}.\footnote{Note
that that compared to theorem \ref{thm:paing_non_adaptive_intro} there is are slight differences in the regret term. The
additional factor of $O(\sqrt{k})$ arises from the need to maintain
the competitive ratio of $2H_k$. The additional logarithmic factor
in $T$ is needed in order to support the adaptivity over all
interval.}

\begin{corollary}\label{cor:pagin_competitive}
There is a paging algorithm with per-round runtime of
$\poly(n,\log(T))$ such that
\begin{itemize}
\item Its competitive ratio is $2H_k$
\item Its regret on any interval $I$ is at most
$O\left(k\sqrt{|I|\log\left(nT\right)}\right)$
\end{itemize}
\end{corollary}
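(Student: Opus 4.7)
The plan is to plug the efficient regret minimizer of Theorem \ref{thm:paing_non_adaptive_intro} into the combining meta-algorithm of Theorem \ref{thm:main_combine}, using the randomized marking algorithm as the competitive base. The obstacle to deducing the result directly from Corollary \ref{cor:MTS_competitie} is purely computational: that corollary's regret subroutine maintains weights on all $N = \binom{n}{k}$ cache configurations, whereas here we need $\poly(n,\log T)$ per round. The footnote makes clear that Theorem \ref{thm:main_combine} is a modular strengthening of Corollary \ref{cor:MTS_competitie} in which the regret minimizer(s) may be arbitrary black boxes rather than an enumeration of MTS states, and this is what we will exploit.

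First, take $\ca_\base$ to be the randomized marking algorithm, which runs in $\poly(n)$ time and has competitive ratio $2H_k$; this will supply the first bullet of the corollary directly through Theorem \ref{thm:main_combine}. Second, to obtain the ``every interval'' guarantee efficiently, instantiate for each dyadic scale $j = 0,1,\ldots,\lceil\log T\rceil$ and each dyadic interval $I_{j,r} = [r\cdot 2^j+1,(r{+}1)\cdot 2^j]$ a fresh copy of the algorithm of Theorem \ref{thm:paing_non_adaptive_intro} with horizon $2^j$; each copy is efficient and has regret $O(\sqrt{k\cdot 2^j\log n})$ against the best fixed cache on its own interval. At most $O(\log T)$ copies are active simultaneously, so the whole family costs $\poly(n,\log T)$ per round.

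Third, feed $\ca_\base$ and this family of dyadic subroutines into Theorem \ref{thm:main_combine}. The combiner internally runs the strongly adaptive $N$-expert procedure of Theorem \ref{thm:main_non_uniform}, but now with $N = O(\log T)$ experts (one per active subroutine plus the base), not with $\binom{n}{k}$ experts. Since the paging metric has diameter $D = k$, the combiner's strongly-adaptive regret contribution on a length-$|I|$ window is $O(D\sqrt{|I|\log T}) = O(k\sqrt{|I|\log T})$ from switching between subroutines, while any interval $I$ can be covered by a constant number of dyadic intervals of total length $O(|I|)$, so the best dyadic subroutine contributes regret $O(\sqrt{k|I|\log(nT)})$ against the best static cache on $I$. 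Summing the two yields the claimed $O(k\sqrt{|I|\log(nT)})$ bound; the $2H_k$ competitive ratio is inherited from $\ca_\base$ through Theorem \ref{thm:main_combine}.

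The main obstacle is verifying the ``modularity'' of Theorem \ref{thm:main_combine}: namely, that the combiner charges only the diameter $D=k$ per switch between subroutines and can treat each dyadic subroutine as a single opaque expert, so that the $N$-dependence in its regret becomes $\log(\log T \cdot T) = O(\log T)$ rather than $\log\binom{n}{k}$. One also must confirm that the per-round overhead of the combiner, which is $O(N\log T)$ on top of its subroutines, remains $\poly(\log T)$ with this reduced $N$, and that the dyadic-covering argument loses at most a $\sqrt{\log T}$ factor in the regret, which is absorbed into $\sqrt{\log(nT)}$. Once these modularity points are in hand, the three bounds (competitive ratio of marking, regret of Theorem \ref{thm:paing_non_adaptive_intro} on each dyadic piece, and combiner regret with $D=k$) compose into the stated corollary with only elementary arithmetic.
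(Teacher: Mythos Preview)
Your high-level plan is exactly the paper's: take $\ca_\base$ to be the marking algorithm, take $\ca_u$ to be the efficient MW paging algorithm of Theorem~\ref{thm:paing_non_adaptive_intro} restarted on consecutive blocks of length $2^{-u}T$, feed these $O(\log T)$ algorithms (not $\binom{n}{k}$ experts) into Theorem~\ref{thm:main_combine} with $D=k$, and use a dyadic covering for general $I$. The competitive ratio and the efficiency claims go through as you say.

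Where your write-up has a genuine gap is the dyadic covering step. You first assert that ``any interval $I$ can be covered by a \emph{constant} number of dyadic intervals,'' which is false if you mean a partition of $I$ into aligned dyadic sub-intervals (that takes $\Theta(\log|I|)$ pieces), and unhelpful if you mean two larger dyadic blocks containing $I$ (Theorem~\ref{thm:paing_non_adaptive_intro} gives no guarantee on proper sub-intervals of a block, so you cannot transfer the bound back to $I$). You then hedge that the covering ``loses at most a $\sqrt{\log T}$ factor, which is absorbed into $\sqrt{\log(nT)}$.'' That absorption does not work for the combiner term: if you naively sum $O\bigl(k\sqrt{|I_i|\log T}\bigr)$ over $\Theta(\log T)$ pieces of total length $|I|$, Cauchy--Schwarz only gives $O\bigl(k\sqrt{|I|}\,\log T\bigr)$, which is a $\sqrt{\log T}$ factor \emph{over} the target $O\bigl(k\sqrt{|I|\log(nT)}\bigr)$. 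Moreover, your accounting applies the combiner bound once on all of $I$ but the subroutine bound piece by piece; since different pieces sit at different scales $u_i$, the combiner regret must also be charged to $\ca_{u_i}$ on each $I_i$ separately.

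The fix is precisely the chaining the paper invokes: partition $I$ into aligned dyadic blocks $I_0,I_1,\ldots$ whose lengths decrease geometrically, $|I_i|\le |I|\,2^{-(i-1)/2}$. On each $I_i$ (a full block at scale $u_i$) apply both Theorem~\ref{thm:main_combine} (regret $O(k\sqrt{|I_i|\log T})$ to $\ca_{u_i}$) and Theorem~\ref{thm:paing_non_adaptive_intro} (regret $O(\sqrt{k|I_i|\log n})$ to the best cache), then sum. Because $\sum_i\sqrt{|I_i|}=O(\sqrt{|I|})$ is a convergent geometric series, no logarithmic factor is lost and the claimed $O\bigl(k\sqrt{|I|\log(nT)}\bigr)$ follows.
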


{\bf $k$-server.}
In the online $k$-server problem there is a set $X$ of $n$ locations
and a set of $k$ servers, each located at some location $x\in X$.
There is a metric define over $X$, namely, $d(x_1,x_2)$ is the
distance between the locations $x_1,x_2\in X$. We assume that for
all $x_1,x_2\in X$, $d(x_1,x_2)\le 1$. At each time $t$ we have a
request for a location $x_t\in X$. If there is a server located at
$x_t$ we have a zero cost, and otherwise we have to move one of the
servers to location $x_t$. The cost of the online algorithm is the
sum of the distances the servers have traversed. We note that paging
is a special case of the $k$-server problem where $X$ is the uniform
metric space (i.e., $d(x,y)=1[x\ne y]$).

Again, it is fairly straightforward to model the $k$-server problem
as an MTS. The number of states of the MTS will be all
possible locations of the $k$ servers, i.e., $X^k$. This implies
that the number of states of the MTS is $N=n^k$. Again, note that
the number of states is exponential in $k$.

We now need to define a metric between the states and possible loss
functions. First we extend the distance function $d(\cdot,\cdot)$ to
configuration in $X^k$. Given two configurations $C_1,C_2\in X^k$
let $d(C_1,C_2)$ is the minimum weight matching between the $k$
locations in $C_1$ and the $k$ locations in $C_2$, where the weight between two
locations $x_1,x_2\in X$ is their distance $d(x_1,x_2)$.
By definition we can move from configuration $C_1$ to $C_2$ having
cost $d(C_1,C_2)$ by utilizing the minimum weight matching. Now we
need to define the possible loss functions. We have a possible loss
function $\ell_x$ for each location $x\in X$.
For a configuration $C\in X^k$, if $x\in C$ then $\ell_x(C)=0$ and
if $x\not\in C$ we have $\ell_x(C)=2 d(C,x)$, where
$d(C,x)=\min_{y\in C} d(y,x)$. Clearly when $x\in C$ we do not have
any cost, but when $x\not\in C$ we like to allow the online
algorithm to ``stay'' in configuration $C$, which would involve
moving a server to location $x$ (distance $d(C,x)$) and back (another
distance $d(C,x)$). This explains why we charge $2d(C,x)$ when
$x\not \in C$.



The $k$ server problem has been extensively studied in the online
algorithms literature. The best known competitive
ratio~\cite{koutsoupias1995k} for a deterministic algorithm is
$2k-1$. As for lower bounds, the best known is the paging lower
bound of $k$, and it is conjectured to be tight. As for randomized
algorithms the best known lower bound is again the paging lower
bound of $H_k$. As for upper bounds, a recent result by
\cite{Bubeck2017} improved
\cite{bansal2011polylogarithmic} and showed an
$O(\log^2(k))$-competitive algorithm. By Corollary
\ref{cor:MTS_competitie} we get that

\begin{corollary}\label{cor:k_server}
For any metric space $X$ there is a $k$-server algorithm such that
\begin{itemize}
\item Its competitive ratio is $\min\left(2k-1,O(\log^2(k))
\right)$
\item Its regret on any interval $I$ is
$O\left(k\sqrt{|I|\log\left(T\right)} + \sqrt{k|I|\log\left(n^kT\right)} \right)=
O\left(k\sqrt{|I|\log\left(nT\right)}\right)$
\end{itemize}
\end{corollary}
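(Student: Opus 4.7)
The statement follows from Corollary~\ref{cor:MTS_competitie} applied to the MTS formulation of the $k$-server problem described just above it, and my plan is to identify the right MTS parameters and substitute them. First I would verify that the MTS has $N = n^k$ states (all configurations $C\in X^k$) and diameter $D\le k$: since the metric on configurations is a minimum-weight matching with $k$ edges each of weight at most $1$ (by the standing assumption $d(x_1,x_2)\le 1$), any two configurations lie within distance $k$. The loss vectors $\ell_x(C) = 2d(C,x)$ for $x\notin C$ lie in $[0,2]$, so after rescaling by $1/2$ we obtain $\cl\subset [0,1]^{\cx}$ as required by the MTS model, which only changes the constants involved.

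Next I would choose the base algorithm $\ca_\base$ to be whichever achieves the better competitive ratio for the given $k$: either the deterministic work-function algorithm of~\cite{koutsoupias1995k} with competitive ratio $2k-1$, or the randomized algorithm of~\cite{Bubeck2017} with competitive ratio $O(\log^2 k)$. This gives $\ca_\base$ with competitive ratio $\alpha = \min(2k-1,\,O(\log^2 k))$. Invoking Corollary~\ref{cor:MTS_competitie} on $\ca_\base$, I obtain an algorithm $\ca$ with the same competitive ratio $\alpha$ and with regret on every interval $I\subseteq [T]$ of
\[
O\!\left(D\sqrt{|I|\log T}\,+\,\sqrt{(D+1)|I|\log(NT)}\right) = O\!\left(k\sqrt{|I|\log T}\,+\,\sqrt{k|I|\log(n^k T)}\right).
\]
The second summand simplifies using $\sqrt{k|I|(k\log n + \log T)}\le k\sqrt{|I|\log n}+\sqrt{k|I|\log T}$, and the whole expression collapses to $O(k\sqrt{|I|\log(nT)})$, which is exactly the bound in the statement.

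There is no real algorithmic obstacle; the only thing to check carefully is that the MTS loss $2d(C,x)$ at a state $C$ faithfully models the $k$-server cost (the factor of $2$ accounts for serving a request at $x$ by sending a server to $x$ and back, i.e., ``staying'' at configuration $C$), so that both the competitive-ratio guarantee and the regret guarantee of the MTS algorithm $\ca$ transfer directly to the $k$-server problem. As the paper itself observes, the per-round runtime inherited from Corollary~\ref{cor:MTS_competitie} scales with $N = n^k$, which is exponential in $k$; making the algorithm polynomial in $n$ and $k$ is left open and is noted to have connections to $k$-median approximation hardness, but that is outside the scope of the corollary.
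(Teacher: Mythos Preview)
Your proposal is correct and follows exactly the paper's own argument: the paper derives Corollary~\ref{cor:k_server} simply by invoking Corollary~\ref{cor:MTS_competitie} on the MTS formulation of $k$-server (with $N=n^k$ states, diameter $D\le k$, and $\ca_\base$ taken to be the better of the $(2k-1)$-competitive deterministic or the $O(\log^2 k)$-competitive randomized algorithm). Your parameter identification and the simplification of the regret bound are exactly the intended computation.
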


As in the case of paging, as the number of configurations is $n^k$,
and a naive application of Corollary  \ref{cor:MTS_competitie} will
result with a rather inefficient algorithm. Unlike paging problem,
we are unable to derive an efficient online algorithm with similar
guarantees. We are able to show that exhibiting such an online
algorithm would have interesting implications. In section \ref{sec:k_server} we show that
an online algorithm which achieves such guarantees and that runs in time
polynomial in $n$ will result with a $2$-approximation
algorithm for the $k$-median problem.
\begin{theorem}\label{thm:k-server-k-median}
If there is an online algorithm for the $k$-server problem
that has regret $poly(n)T^{1-\mu}$, then for every $\epsilon>0$ there is an
polynomial time algorithm that achieves a $2+\epsilon$ approximation
for the $k$-median problem.
\end{theorem}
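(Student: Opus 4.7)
The plan is a reduction from $k$-median to online $k$-server: feed the hypothesized algorithm an i.i.d.\ uniform request sequence over the demand points, and extract a low-cost center set from the trajectory of server configurations it produces. Given a $k$-median instance on $(X,d)$ with demand points $p_1,\ldots,p_n$, optimal center set $C^\ast$, and $\mathrm{OPT}=\sum_i d(C^\ast,p_i)$, set $T=(\poly(n)/\epsilon)^{1/\mu}$, sample $x_1,\ldots,x_T$ i.i.d.\ uniformly from $\{p_1,\ldots,p_n\}$, and run the hypothesized online $k$-server algorithm to obtain configurations $C_0,C_1,\ldots,C_T\in X^k$. Output $\hat C=\arg\min_{1\le t\le T}\mathrm{cost}(C_{t-1})$, where $\mathrm{cost}(C):=\sum_{i=1}^n d(C,p_i)$.

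For the analysis, recall that the $k$-server MTS loss is $\ell_x(C)=2d(C,x)$. Applying the assumed regret bound against the static benchmark $C^\ast$ and taking a further expectation over the random requests gives
\[
\ymE\!\left[\sum_{t=1}^T 2d(C_t,x_t)+\sum_{t=2}^T d(C_t,C_{t-1})\right]\le \sum_{t=1}^T\ymE[2d(C^\ast,x_t)]+R(T) = \frac{2T\,\mathrm{OPT}}{n}+R(T),
\]
where the equality uses $\ymE_{x_t}[d(C^\ast,x_t)]=\mathrm{OPT}/n$. The matching-triangle inequality (for each $t$, match $C_{t-1}$ to $C_t$ via a minimum-weight matching and route via the partner of the server in $C_t$ nearest to $x_t$) yields $d(C_{t-1},x_t)\le d(C_{t-1},C_t)+d(C_t,x_t)$, and therefore
\[
\ymE\!\left[\sum_{t=1}^T d(C_{t-1},x_t)\right]\le \ymE\!\left[\sum_{t=1}^T d(C_t,x_t)+\sum_{t=2}^T d(C_t,C_{t-1})\right]\le \frac{2T}{n}\mathrm{OPT}+R(T).
\]
Because $C_{t-1}$ is a function of $x_1,\ldots,x_{t-1}$ alone, it is independent of $x_t$, so $\ymE[d(C_{t-1},x_t)\mid C_{t-1}]=\mathrm{cost}(C_{t-1})/n$, giving
\[
\frac{1}{T}\sum_{t=1}^T \ymE[\mathrm{cost}(C_{t-1})]\le 2\,\mathrm{OPT}+\frac{n R(T)}{T}.
\]

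Plugging in $R(T)=\poly(n)T^{1-\mu}$ and the choice of $T$ makes the residual $n R(T)/T$ at most $\epsilon\cdot\mathrm{OPT}$, provided $\mathrm{OPT}\ge 1/\poly(n)$; the case $\mathrm{OPT}=0$ is detected directly (at most $k$ distinct demand points), and arbitrary rational inputs are normalized by rescaling the metric so that all positive distances are at least $1/\poly(n)$. Since $\hat C$ minimizes $\mathrm{cost}(C_{t-1})$ over $t$, its expected cost is bounded by the average above, giving $\ymE[\mathrm{cost}(\hat C)]\le (2+\epsilon)\mathrm{OPT}$; Markov's inequality together with a constant number of independent repetitions then yields a high-probability $(2+\epsilon)$-approximation in time polynomial in $n$ and $1/\epsilon$, using the polynomial per-round runtime implicitly assumed of the online algorithm. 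The main obstacle is the coupling inherent to the MTS model: the algorithm sees $\ell_{x_t}$, and hence $x_t$, before committing to $C_t$, so $C_t$ is correlated with $x_t$ and the realized online cost cannot be directly converted into the $k$-median cost of any individual configuration; the detour through $C_{t-1}$, which pays for the swap exactly with the movement cost already absorbed into the regret bound, is the crux of the proof.
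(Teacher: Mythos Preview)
Your reduction is the same as the paper's: feed i.i.d.\ uniform requests to the online $k$-server algorithm, compare against the static strategy $C^\ast$ (which pays $2d(C^\ast,x_t)$ per round), and exploit the independence of the pre-request configuration $C_{t-1}$ from $x_t$ to convert the online cost into an average of $k$-median costs. The paper states the key lower bound as ``the cost at round $t$ is at least $d(A_t,x_t)$, where $A_t$ is the configuration at the beginning of the round,'' which is precisely your matching-triangle inequality $d(C_{t-1},x_t)\le d(C_{t-1},C_t)+d(C_t,x_t)$.

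There is, however, a gap in your normalization step. The hypothesized regret bound $\poly(n)\,T^{1-\mu}$ is stated for $k$-server on metrics of diameter at most $1$ (so that losses lie in $[0,1]$). Rescaling so that the minimum positive distance is $1/\poly(n)$ can make the diameter exponential in the input size (e.g.\ distances $1$ and $2^{-n}$ are both $O(n)$-bit rationals), which invalidates the regret hypothesis. Conversely, if you keep the diameter bounded, $\mathrm{OPT}$ can be exponentially small but nonzero, and then no polynomial choice of $T$ makes $nR(T)/T\le \epsilon\cdot\mathrm{OPT}$. The paper closes this gap by first running an existing constant-factor $k$-median approximation to obtain $\alpha\in[\mathrm{OPT},3\,\mathrm{OPT}]$, and then working with the truncated, rescaled metric $d'(x,y)=3\min(d(x,y),\alpha)/\alpha$; this has diameter at most $3$, satisfies $\mathrm{OPT}_{d'}\ge 1$, and any $(2+\epsilon)$-approximation in $d'$ is one in $d$. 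With that fix in place, your argument goes through.
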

Algorithm as in Theorem~\ref{thm:k-server-k-median} will
 improve on the best known
$2.675$-approximation ratio~\cite{byrka2015improved} of this well
studied problem.
It is worth noting that the best known hardness of approximation
results~\cite{jain2002new} for $k$-medians only rule out
approximation ratio of $1+\frac{2}{e}-\epsilon\approx 1.7358$.

\section{Overview over the Proofs}\label{sec:proof-overview}

In this section we sketch our algorithms and the
correctness proof, namely proving
Theorem~\ref{thm:main_non_uniform}.

{\bf Reducing to linear optimization.}
Our first step is to consider an equivalent continuous version of
the $N$-expert problem, which previously appeared in the literature
in the context of MTS (see for instance \cite{blum2000line}). It
turns out that the $N$-experts $D$-switching cost problem is
equivalent to the following linear optimization problem. At each
step $t=1,\ldots,T$ the player chooses $x_t\in \Delta^N$. Then, the adversary chooses a loss vector $l_t\in [0,1]^N$ and the player suffers a loss of $\inner{l_t,x_t} + D\|x_t-x_{t-1}\|_{TV} = \sum_{i=1}^Nl_t(i)x_t(i) +
    \frac{D|x_t(i)-x_{t-1}(i)|}{2}$, where $\|\cdot\|_{TV}$ is the total-variation distance.

The intuition is that at time $t-1$ the player has an action
distributed according to $x_{t-1}$. At time $t$ the player needs to
sample an action from $x_t$. This can be done in a way that the
probability of making a switch is exctly $\|x_t-x_{t-1}\|_{TV}$.
%

{\bf An overview.}
We first discuss why previous approaches fail to achieve our goal.
Previous results for the switching costs setting were proved by
using an algorithm whose number of switches, on every input
sequence, is upper bounded by the desired regret bound. In this
case, the switching costs can be simply absorbed in the regret term.
We next show that this strategy cannot work in the strongly adaptive
setting.
To this end, let us consider the following scenario.
There are two experts and $D=1$. In the first $T^{\frac{1}{10}}$
steps the first expert has $0$ loss, while the second has a loss of
$1$ at each step. Then, this is flipped every $T^{\frac{1}{10}}$
rounds. Namely, the loss sequence is
\[
\underbrace{(1,0), (1,0), \ldots, (1,0)}_{T^{\frac{1}{10}}\text{ times}},\;\;
\underbrace{(0,1), (0,1), \ldots, (0,1)}_{T^{\frac{1}{10}}\text{ times}},
\ldots,
\underbrace{(1,0), (1,0), \ldots, (1,0)}_{T^{\frac{1}{10}}\text{ times}},
\]
Suppose now that $\ca$ is a strongly adaptive algorithm operating on
this sequence. Concretely, let us assume that $\ca$ is guaranteed to
have a regret of $\sqrt{|I|\log(T)}$ on every time interval
$I\subset [T]$. Let us first consider the operation of the algorithm
in blocks of the form $I_i = [iT^{\frac{1}{10}} +
1,(i+1)T^{\frac{1}{10}}]$. In each block $I_{2j}$, $\ca$ will have
to give the first expert a weight $\ge \frac{3}{4}$ at least once.
Indeed, otherwise, its regret would be $\ge
\frac{1}{4}T^{\frac{1}{10}}\gg \sqrt{T^{\frac{1}{10}}\log(T)}$.
Similarly, in each block $I_{2j+1}$, $\ca$ will have to give the
second expert a weight $\ge \frac{3}{4}$ at least once. Let us now
consider the number of switches during the the entire run. By the
arguments above, the number of switches $\ca$ makes will be at least
$\frac{1}{2}T^{\frac{9}{10}}\gg \sqrt{T\log(T)}$.

In this paper we take a different approach, based on the algorithm
of \cite{kapralov2011prediction} for the two experts problem
without switching costs, which achieves essentially zero regret
w.r.t. to the first expert, while still maintaining optimal
asymptotic regret w.r.t. the second, i.e., $O(\sqrt{T})$.
Concretely, given a parameter $0<Z<\frac{1}{e}$, the regret w.r.t.
the first expert is $ZT$, while the regret w.r.t. the second is
$\sqrt{64T\log\left(\frac{1}{Z}\right)} + ZT +4$. For $Z=\sqrt{(\log
T)/T}$ we get the usual regret $O(\sqrt{T\log T})$, but we will aim
for $Z=1/T$ which will have a regret of $1$ to the first expert and
a regret of $O(\sqrt{T\log T})$ to the second expert.

We extend the analysis of \cite{kapralov2011prediction} and show
that the algorithm has similar regret bound even in the presence of
switching costs. Furthermore, we show that a variant of this
algorithm (obtained  by adding a certain projection) enjoys such a
regret bound on any time interval. Concretely, we prove that
\begin{theorem}\label{thm:sketch_two_experts}
There is an algorithm for the $2$-experts $D$-switching cost
problem, that given parameters $Z\le\frac{1}{e}$ and $\tau\ge 1$
has the following regret bounds
\begin{itemize}
\item
For any time interval $I\subseteq [T]$, the regret of the algorithm
w.r.t.\ expert $0$ is at most\\
\mbox{$\min\left\{\sqrt{D}TZ, \sqrt{16 D \tau\log\left(\frac{1}{Z}\right)}+2\sqrt{D}+\sqrt{D}|I|Z\right\}$}
\item
For every time interval $I\subseteq [T]$ of length $\le \tau$, the
regret of the algorithm
w.r.t. expert $1$ is at most
$\sqrt{64D\tau\log\left(\frac{1}{Z}\right)}+4\sqrt{D}+ \sqrt{D}\tau
Z$
%
\end{itemize}
\end{theorem}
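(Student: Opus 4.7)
The plan is to adapt the potential-based two-experts algorithm of \cite{kapralov2011prediction} in two ways: to absorb the switching cost $D$ into the regret analysis, and to deliver the stated bounds on every subinterval of $[T]$ rather than only on prefixes starting at time $1$. I would first pass to the equivalent continuous linear optimization formulation described earlier: the player picks $x_t \in \Delta^2$ and pays $\inner{l_t, x_t} + D \|x_t - x_{t-1}\|_{TV}$. In the two-experts case this reduces to a scalar variable $p_t := x_t(1) \in [0,1]$, with switching cost $D|p_t - p_{t-1}|$.

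The underlying algorithm is a ``potential algorithm'' $p_t = \phi'(g_t)$, where $g_t = \sum_{s < t}(l_s(0) - l_s(1))$ is the cumulative advantage of expert $1$ over expert $0$ and $\phi$ is a smooth concave potential with $\phi'(\cdot) \in [Z, 1]$, chosen (as in \cite{kapralov2011prediction}) so that $\phi''$ is of order $\sqrt{\log(1/Z)/\tau}$ on the relevant window. To get the interval-adaptive guarantee I would add a projection step that clips $g_t$ to a window $[-M, M]$ with $M$ chosen so that $\phi'(\pm M) \in \{Z, 1-Z\}$. This clipping prevents $g_t$ from accumulating irrelevant history across intervals, so that at the start of any interval $I$ the weight is already ``warm'' (at least $Z$ on both experts), and the algorithm can ramp up to the best expert within $\tau$ steps.

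The main calculation is a telescoping of the potential $\Phi_t = \phi(g_t)$ on an arbitrary subinterval $I$, using the per-step identity $\Phi_t - \Phi_{t-1} = \phi'(\xi_t)(l_{t-1}(0) - l_{t-1}(1))$ and a second-order Taylor bound that produces an error term proportional to $\phi''_{\max}\cdot|I|$. Tuning $\phi''_{\max} \sim \sqrt{\log(1/Z)/\tau}$ yields the $\sqrt{\tau \log(1/Z)}$ basic regret. Each switch contributes $D|p_t - p_{t-1}| = D|\phi'(g_t) - \phi'(g_{t-1})| \le D\phi''_{\max}|l_{t-1}(0) - l_{t-1}(1)|$, so the aggregate switching cost on $I$ is bounded by the same drift-type sum scaled by $D$; combining the per-step loss and switching contributions (effective per-step cost $\le D+1$) accounts for the $\sqrt{D}$ factor and turns the constant $16$ into $64$ in the expert-$1$ bound. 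Each projection event contributes at most one extra $\phi'$-jump of size $\le 1 - 2Z$, absorbed into the additive $2\sqrt{D}$ and $4\sqrt{D}$ constants. The uniform lower bound $\phi'(g) \ge Z$ yields the ``slow leak'' term $\sqrt{D}|I|Z$ for the expert-$0$ bound, and, summed over all of $[T]$, gives the global bound $\sqrt{D}TZ$.

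The main obstacle is to combine the switching-cost absorption with the per-interval projection without compromising either property. The switching-cost argument crucially uses that $p_t = \phi'(g_t)$ is a smooth function of $g_t$ with globally bounded derivative, while naive resets that zero out $g_t$ between intervals would create artificial large jumps in $p_t$ and would destroy the low-regret-to-expert-$0$ guarantee (which relies on $p_t$ being very close to putting all mass on expert $0$ for long stretches). The resolution is to project on the internal state $g_t$ rather than on the output $p_t$, and to show that the clipped dynamics still admit a clean potential telescoping on every subinterval, with projection events charged to a small additive term rather than a multiplicative one.
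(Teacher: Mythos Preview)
Your proposal is in the right spirit but misses two structural ingredients that the paper actually relies on, and as written it would not give the stated $\sqrt{D}$ dependence.

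\textbf{The $\sqrt{D}$ scaling.} You set $\phi''_{\max}\sim\sqrt{\log(1/Z)/\tau}$ and then bound the switching cost on $I$ by $D\phi''_{\max}|I|$. With $|I|\le\tau$ this is $D\sqrt{\tau\log(1/Z)}$, not $\sqrt{D\tau\log(1/Z)}$; the phrase ``effective per-step cost $\le D+1$'' does not bridge that gap. The paper's fix is not a rescaling of losses but a rescaling of the \emph{state increment}: it sets $b_t=(l_t(0)-l_t(1))/\sqrt{D}$, so the step in state space has size $\le 1/\sqrt{D}$. Then in the telescoping of $\Phi_t=G(x_t)$ the second-order term is $O(D^{-1})\cdot g'_{\max}$, and the mean-value theorem gives $|g(x_t)-g(x_{t+1})|\le g'_{\max}\cdot |x_{t+1}-x_t|\le (2/\sqrt{D})g'_{\max}$. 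The paper splits the second-order term so that a piece $-\sqrt{D}|g(x_t)-g(x_{t+1})|$ appears \emph{inside} the potential inequality, exactly cancelling the switching cost after multiplying through by $\sqrt{D}$. Your proposal bounds the switching cost \emph{separately} and never exploits this cancellation; that is where the extra $\sqrt{D}$ is lost.

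\textbf{The decay factor.} The paper's state update is $x_{t+1}=\Pi_{[-2,U+2]}\bigl[(1-\tfrac{1}{\tau})x_t+b_t\bigr]$, not the pure cumulative sum $g_t=\sum_{s<t}(l_s(0)-l_s(1))$ that you propose. The $(1-\tfrac{1}{\tau})$ decay is what makes the specific ODE $8\tilde g'(x)=\tfrac{1}{\tau}x\tilde g(x)+Z$ arise, and the analysis of both the expert-$0$ and expert-$1$ bounds hinges on the interplay between the $\tfrac{1}{\tau}x_tg(x_t)$ term generated by the decay and the indicator $I(x_t)$ coming from the projection window. Your ``clip $g_t$ to $[-M,M]$'' mechanism is a different dynamical system; it may be salvageable, but the telescoping you describe does not go through unchanged, and in particular the clean identity $\sum_j b_j = \Phi_{T+1}+\sum_t\tfrac{1}{\tau}x_tg(x_t)(1-I(x_t))+O(U)$ that yields the expert-$1$ bound relies on the geometric weighting from the decay.

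\textbf{Minor point.} The $TZ$ term in the expert-$0$ bound does not come from a lower bound $\phi'\ge Z$; it is the inhomogeneous $+Z$ in the ODE for $\tilde g'$, which appears as an additive $+Z$ per step in the derivative bound (Lemma~\ref{lem:derivative_bound}) and hence as $+\sqrt{D}TZ$ after summing and rescaling.
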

To better understand Theorem~\ref{thm:sketch_two_experts}, consider
setting $Z=1/(\sqrt{D}T)$ and $\tau=T$:
\begin{corollary}\label{cor:sketch_two_experts}
There is an algorithm for the $2$-experts $D$-switching cost
problem, that has the following regret bounds
\begin{itemize}
\item
For any time interval $I\subseteq [T]$, the regret of the algorithm
w.r.t.\ expert $0$ is at most $1$.
\item
The regret of the algorithm
w.r.t. expert $1$ is at most $O(\sqrt{DT \log(TD)})$
%
\end{itemize}
\end{corollary}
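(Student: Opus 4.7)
The plan is to obtain Corollary~\ref{cor:sketch_two_experts} as a direct specialization of Theorem~\ref{thm:sketch_two_experts}, plugging in the specific choices $Z = 1/(\sqrt{D}T)$ and $\tau = T$. The argument is essentially bookkeeping, so I expect no conceptual obstacle beyond tracking constants and verifying that the hypotheses of the theorem hold for these choices.

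First, I would verify the hypothesis $Z \le 1/e$ of Theorem~\ref{thm:sketch_two_experts}. For $Z = 1/(\sqrt{D}T)$ this amounts to $\sqrt{D}T \ge e$; in the regime where the corollary's bound $O(\sqrt{DT\log(TD)})$ is non-trivial this always holds, and the degenerate small-$T$ case can be absorbed into the hidden constant. I will also implicitly assume $D \geq 1$ (otherwise the switching cost can be upper bounded by $1$ without changing the bounds up to constants and one reruns the same argument with $D \leftarrow \max(D,1)$).

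Next, for the bound on expert $0$, I would use the first branch of the minimum in Theorem~\ref{thm:sketch_two_experts}, which gives regret at most $\sqrt{D}\,T\,Z = \sqrt{D}\cdot T \cdot \tfrac{1}{\sqrt{D}T} = 1$ on every interval $I \subseteq [T]$. This is uniform in $I$, which is exactly what the first bullet of the corollary asserts.

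For the bound on expert $1$, I would plug the chosen $Z$ and $\tau$ into the second bullet of Theorem~\ref{thm:sketch_two_experts}, noting that $[T]$ has length $T = \tau$ so the hypothesis $|I| \le \tau$ is satisfied. The resulting regret is
\[
\sqrt{64\,D\,T\,\log(\sqrt{D}T)} + 4\sqrt{D} + \sqrt{D}\,T\,Z
= 8\sqrt{DT\,\log(\sqrt{D}T)} + 4\sqrt{D} + 1.
\]
Since $\log(\sqrt{D}T) = \tfrac{1}{2}\log D + \log T \le \log(DT)$, and since both the $4\sqrt{D}$ term and the constant $1$ are dominated by $\sqrt{DT\log(DT)}$ (again in the non-trivial regime), the whole expression is $O(\sqrt{DT\log(TD)})$, proving the second bullet. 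The only step that requires a moment of thought is the comparison of $\log(\sqrt{D}T)$ with $\log(TD)$, which is the routine inequality above and introduces at most a constant factor.
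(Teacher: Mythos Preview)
Your proposal is correct and follows exactly the approach the paper takes: the corollary is stated immediately after Theorem~\ref{thm:sketch_two_experts} with the remark ``consider setting $Z=1/(\sqrt{D}T)$ and $\tau=T$,'' and your bookkeeping faithfully carries out that substitution. There is nothing to add.
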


Theorem~\ref{thm:sketch_two_experts} is the main building block for
proving Theorem \ref{thm:main_non_uniform}. We first use it to
combine two algorithms. Namely, given two algorithms $\ca_0$ and
$\ca_1$ for the $N$-experts $D$-switching cost problem, we use
Theorem \ref{thm:sketch_two_experts} to combine them into a single
algorithm the preserves regret bounds of $\ca_0$ and $\ca_1$, plus
additional quantities, as in Theorem \ref{thm:sketch_two_experts}.
Then, we use this basic combining procedure to combine many
algorithm, deriving Theorem \ref{thm:main_non_uniform}.

{\bf A sketch of Theorem \ref{thm:sketch_two_experts}'s proof.}
In this section we highlight the main ideas in the proof of
Theorem~\ref{thm:sketch_two_experts}, and make a few simplifying
assumption to help the presentation, including that $D=1$.
At time step $t\ge 1$, the player chooses $g(x_t)\in [0,1]$ and suffers
a loss of $\ell_t^{on}=\ell_t(0)(1-g(x_t))+\ell_t(1)g(x_t) +
D|g(x_t)-g(x_{t-1})|$.
The algorithm has two parameters: $\tau \ge  1$ and $Z>0$.
Define $\tilde{g}$ to be the solution of the differential equation
\begin{equation}\label{eq:diff_eq_sketch}
8\tilde{g}'(x)= \frac{1}{\tau} x \tilde{g}(x) +Z,\;\;\;\;\tilde{g}(0)=0~.
\end{equation}
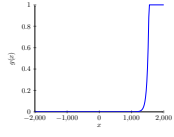
\begin{figure}
\caption{$g(x)$ for $Z=10^{-8}$ and $\tau = 10000$}\label{fig:function_plot}
\begin{center}
\begin{tikzpicture}[scale=0.25]
\begin{axis}[
    axis lines = left,
    xlabel = $x$,
    ylabel = {$g(x)$},
]
\addplot [
    domain=0:2000,
    samples=100,
    color=blue,
    thick
]
{max (0,min (1,sqrt(10000/8)*0.00000001*exp ((x)^2/160000)))};
\addplot [
    domain=-2000:0,
    samples=100,
    color=blue,
    thick
]
{0};
 \end{axis}
\end{tikzpicture}
\end{center}
\end{figure}
Define $U=U_{\tau,Z}:=\tilde{g}^{-1}(1)$ ($\tilde{g}$ is strictly
increasing and unbounded, so $U$ is well defined, and we later show
that $U\le \sqrt{16\tau\log\left(\frac{1}{Z}\right)}$). Denote the
projection to $[a,b]$ by
$\Pi_{[a,b]}(z)=
\begin{cases}
a & z \le a \\
z &  a \le z \le b \\
b & z \ge b
\end{cases}$.
Finally, we define $g(x)=\Pi_{[0,1]}\left[\tilde{g}(x)\right]$,
which ensures that $0\leq g(x)\leq 1$ and that for $x\leq 0$ we have
 $g(x)=0$ and for $x\geq U$ we have $g(x)=1$. (see Figure
\ref{fig:function_plot} for the plot of $g(x)$).
The following algorithm is a simplification of Algorithm
\ref{alg:two_with_proj} that achieves the regret bounds of Theorem
\ref{thm:sketch_two_experts}.
\begin{algorithm}[H]
    \caption{Two experts (with parameters $\tau$ and $Z$)} \label{alg:sketch_two_with_proj}
    \begin{algorithmic}[1]
        \STATE Set $x_t=0$
        \FOR {$t=1,2, \ldots$}
        \STATE Predict $g(x_t)$
        \STATE Let $b_t=l_t(0)-l_t(1)$ and update $x_{t+1}=
         \left(1-\frac{1}{\tau}\right) x_t + b_t$
        \ENDFOR
    \end{algorithmic}
\end{algorithm}
We next elaborate on the proof. To highlight the main ideas, we will
sketch the proof of just for two special cases: (1) showing that the
regret to the second expert for the time interval $[1,\tau]$ is at
most $\sqrt{64\tau\log\left(\frac{1}{Z}\right)}+2+\tau Z$, and (2)
For the time interval $[T]$, the regret of w.r.t. the first expert
is at most $TZ$.
%
We need the following helpful notation. Let $G(x)=\int_0^xg(s)ds$,
denote $\Phi_t=G(x_t)$ and let $I:\reals\to \{0,1\}$ be the
indicator function of the segment $[-2,U+2]$. We consider the change
in the value of $\Phi_t$,
\begin{eqnarray*}
    \Phi_{t+1}-\Phi_{t}&=&\int_{x_t}^{x_{t+1}} g(s)ds = \int_{x_t}^{x_t-\frac{1}{\tau} x_t+b_t}g(s)ds
    \\
    &\le & g(x_t)\left(-\frac{1}{\tau} x_t+ b_t\right)+\frac{1}{2}\left(-\frac{1}{\tau} x_t+ b_t\right)^2\max_{s\in [x_t,x_t-\tau^{-1}x_t+ b_t]}|g'(s)|
    \\
    &\le & g(x_t)\left(-\frac{1}{\tau} x_t+ b_t\right)+\frac{1}{2}4\max_{s\in [x_t,x_t-\tau^{-1}x_t+ b_t]}|g'(s)|
    \\
    &= & g(x_t)\left(-\frac{1}{\tau} x_t+ b_t\right)-2\max_{s\in [x_t,x_t-\tau^{-1}x_t+ b_t]}|g'(s)| + 4\max_{s\in [x_t,x_t-\tau^{-1}x_t+ b_t]}|g'(s)|
    \\
    &\le & g(x_t)\left(-\frac{1}{\tau} x_t+ b_t\right)- 2 \frac{|g(x_t)-g(x_{t+1})|}{\left|-\frac{1}{\tau} x_t+ b_t\right|}+4\max_{s\in [x_t,x_t-\tau^{-1}x_t+
    b_t]}|g'(s)|
    \\
    &\le & g(x_t)\left(-\frac{1}{\tau} x_t+ b_t\right)- |g(x_t)-g(x_{t+1})|+4\max_{s\in [x_t,x_t-\tau^{-1}x_t+ b_t]}|g'(s)|
    \\
    &\le & g(x_t)\left(-\frac{1}{\tau} x_t+ b_t\right)- |g(x_t)-g(x_{t+1})|+\frac{1}{\tau}x_tg(x_t)I(x_t)+Z
\end{eqnarray*}
Here, the first inequality follows from the fact that for every
piece-wise differential function $f:[a,b]\to\reals$ we have
$\int_a^bf(x)dx\le f(a)(b-a) + \frac{1}{2}(b-a)^2\max_{\xi\in
[a,b]}|f'(\xi)|$. The second and forth inequalities follows from the
fact that by a simple induction, $|x_t|\le \tau$ and hence
$\left|-\frac{x_t}{\tau}+b_t\right|\le 2$. The third inequality
follows from the mean value Theorem. As for the last inequality,
since $g$ is a solution of the ODE from equation
(\ref{eq:diff_eq_sketch}), and is constant outside $[0,U]$ we have
$4g'(x_t)\le
\frac{1}{2}\left[\frac{1}{\tau}x_tg(x_t)I(x_t)+Z\right]$. We later
show that $g$ is smooth enough so that the inequality remains valid,
up to a factor of $4$, on the entire interval $[x_t, x_t -
\tau^{-1}x_t + b_t]$. Namely, $4g'(x_t)\le
\frac{1}{\tau}x_tg(x_t)I(x_t)+Z$. Summing from $t=1$ to $t=T$,
rearranging, and using the fact that $\Phi_1=0$ we get,
\begin{equation}\label{eq:ym1}
-\sum_{t=1}^T g(x_t) b_t+ \sum_{t=1}^T |g(x_t)-g(x_{t+1})| \le
 - \Phi_{T+1}
+\sum_{t=1}^T\frac{1}{\tau}x_tg(x_t)(I(x_t)-1)+TZ
\end{equation}
Now, denote the loss of the algorithm up to time $t$ by $L_t^{on}$,
and by $L_t^i$ the loss of the expert $i$. We have
\begin{eqnarray}\label{eq:0_sketch}
L_t^{on} &=& \sum_{t'=1}^t g(x_{t'})l_{t'}(1)
+\sum_{t'=1}^t(1-g(x_{t'}))l_{t'}(0)+ \sum_{t'=1}^t
|g(x_{t'})-g(x_{{t'}+1})| \nonumber
\\
&=& L_t^0 + \left(-\sum_{t'=1}^tg(x_{t'})b_{t'} + \sum_{t'=1}^t
|g(x_{t'})-g(x_{t'+1})|\right) \nonumber
\\
&\le& L_{t}^0 -
\Phi_{t+1}+\sum_{t'=1}^t\frac{1}{\tau}x_{t'}g(x_{t'})(I(x_{t'})-1)+tZ,
\end{eqnarray}
where the inequality follows from~(\ref{eq:ym1}).
When considering the entire interval $[T]$, we can derive the
following regret. Since $\Phi_{T+1}\ge 0$ and $x_{t'}
g(x_t)(I(x_{t'})-1)\le 0$ we have $L^{on}_T\le L_T^0 + TZ$, which
proves that the regret of the first expert over $[T]$.
We now prove the regret bound for the second expert, in our special
case, i.e., to bound $L_\tau^{on} - L_\tau^1$.
Recall that $g(z) = 1$ any $z \ge U$, $0\leq g(z)\leq 1$ for $z \in
[0,U]$, and $g(z)=0$ for any $z\le 0$. Therefore, we have
$\Phi_{\tau+1}=\int_{0}^{x_{\tau+1}}g(s)ds\ge x_{\tau+1}-U$. Also,
$x_tg(x_t)(1-I(x_t)) \ge x_t - U -2 $, since for $z\geq U$ we have
$g(z)=1$. Therefore, if we denote $b_0:=0$, we have
\small
\begin{eqnarray*}
    \Phi_{\tau+1}+\sum_{t=1}^\tau\frac{1}{\tau}x_tg(x_t)(1-I(x_t)) &\ge & (x_{\tau+1}-U) +\sum_{t=1}^\tau\frac{1}{\tau}(x_t-U -2)
    \\
    &=& \sum_{j=0}^{\tau}\left(1-\frac{1}{\tau}\right)^{\tau-j}b_j+\sum_{t=1}^\tau\frac{1}{\tau}\sum_{j=0}^{t-1}\left(1-\frac{1}{\tau}\right)^{t-1-j}b_j- 2(U+1)
    \\
    &=& \sum_{j=0}^{\tau}\left(1-\frac{1}{\tau}\right)^{\tau-j}b_j+\frac{1}{\tau}\sum_{j=0}^{\tau-1}\sum_{t=j+1}^\tau\left(1-\frac{1}{\tau}\right)^{t-1-j}b_j-
    2(U+1)
    \\
    &=& \sum_{j=0}^{\tau}\left(1-\frac{1}{\tau}\right)^{\tau-j}b_j+\frac{1}{\tau}\sum_{j=0}^{\tau}\frac{1-\left(1-\frac{1}{\tau}\right)^{\tau-j}}{\frac{1}{\tau}}b_j- 2(U+1)
    \\
    &=& \sum_{j=0}^{\tau}b_j- 2U =  L_\tau^0-L_\tau^1 - 2(U+1)
\end{eqnarray*}
\normalsize
The above shows that
$L_\tau^1 + 2(U+1) \geq L_\tau^0 -\Phi_{\tau+1}
+\sum_{t=1}^\tau\frac{1}{\tau}x_tg(x_t)(I(x_t)-1)$.
By equation (\ref{eq:0_sketch}), $ L_\tau^1+2(U+1) +\tau Z\ge
L^{on}_{\tau}$. The proof is concluded by showing that $U\le
\sqrt{16\tau\log\left(\frac{1}{Z}\right)}$.

\subsection{Metrical Tasks  Systems}

We will derive our results for MTSs from the following variant of
Theorem \ref{thm:main_non_uniform}
\begin{theorem}\label{thm:main_combine}
There is a procedure that given as input experts algorithms
$\ca_\base,\ca_0,\ldots,\ca_{\log_2(T)}$, combines them into a
single algorithm $\ca$ such that:
\begin{enumerate}
\item
On  any interval $I$ of length $\frac{T}{2^{u+1}} \le |I| \leq\frac{T}{2^{u}}$,
the regret of $\ca$ w.r.t. $\ca_u$ is
$O\left(D\sqrt{|I|\log\left(T\right)}\right)$
\item
The regret of $\ca$ w.r.t. $\ca_\base$ is $D$
\end{enumerate}
Furthermore, if the original algorithms are efficient, then so is
$\ca$. More precisely, at each round the procedure is given as input
the loss of each algorithm in that round, and an indication which
algorithms made switches. Then, the procedure specifies one of the
experts algorithms, and $\ca$ chooses its action. The computational
overhead of the procedure is
$O\left(\log\left(T\right)\right)$.\footnote{Assuming that
arithmetic operations, exponentiation, computing the error function,
and sampling uniformly from $[0,1]$ cost $O(1)$.}
\end{theorem}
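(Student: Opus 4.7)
The plan is to obtain Theorem~\ref{thm:main_combine} by iterating the two-experts primitive from Theorem~\ref{thm:sketch_two_experts} along a chain of length $\log_2(T)+1$. Define algorithms $\ca^{(-1)} := \ca_\base$ and, for $u=0,1,\ldots,\log_2 T$, let $\ca^{(u)}$ be the algorithm obtained by applying the two-experts $D$-switching-cost procedure with $\ca^{(u-1)}$ playing the role of expert $0$ and $\ca_u$ playing the role of expert $1$, instantiated with parameters $\tau_u = T/2^u$ and $Z_u$ chosen below. The final output is $\ca := \ca^{(\log_2 T)}$. Since each invocation of the two-experts primitive requires only constant work per round (it maintains a single scalar $x_t$ and samples from a two-point mixture), the total per-round computational overhead is $O(\log T)$, as claimed.

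Choose $Z_u := \sqrt{D}/(T\cdot 2^{u+1})$. With this choice the "regret-to-expert-$0$" bound $\sqrt{D}\,T Z_u$ from Theorem~\ref{thm:sketch_two_experts} equals $D/2^{u+1}$, and, since regret to a common benchmark aggregates along the chain, the total regret of $\ca$ to $\ca_\base$ is at most $\sum_{u=0}^{\log_2 T} D/2^{u+1}\le D$. This establishes the second guarantee. Note also that $\log(1/Z_u) = O(\log T)$ uniformly in $u$, which is the key quantitative feature of the parameter choice.

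For the first guarantee, fix an interval $I$ with $T/2^{u+1}\le |I|\le T/2^u$. The second bound of Theorem~\ref{thm:sketch_two_experts}, applied at combining step $u$ with $\tau_u = T/2^u$, shows that the regret of $\ca^{(u)}$ to $\ca_u$ on $I$ is at most $\sqrt{64D\tau_u \log(1/Z_u)} + 4\sqrt{D} + \sqrt{D}\,\tau_u Z_u = O\bigl(\sqrt{D|I|\log T}\bigr)$, using $\tau_u\le 2|I|$ and the bounds on $Z_u$. At each subsequent level $v>u$, the combining step adds at most $D/2^{v+1}$ additional regret of $\ca^{(v)}$ with respect to $\ca^{(v-1)}$, which sums (by the triangle inequality for regret) to an additional $O(D)$ term. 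Hence the regret of $\ca$ to $\ca_u$ on $I$ is $O(\sqrt{D|I|\log T}) + O(D) = O(D\sqrt{|I|\log T})$, matching the first guarantee.

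The main technical obstacle is the careful bookkeeping of switching costs through the hierarchy. The two-experts primitive treats its inputs as abstract experts whose per-round losses are given; when instantiated with $\ca^{(u-1)}$ and $\ca_u$ as inputs, its outer mixing weight $g(x_t)$ governs how often $\ca$ effectively switches between the two sub-algorithms (contributing $D|g(x_t)-g(x_{t-1})|$ per round), while each sub-algorithm independently contributes its own per-round switching cost whenever the outer mixing chooses it. The inputs fed to the primitive must therefore be the \emph{full} per-round losses of the sub-algorithms, inclusive of their internal switching costs, together with an indicator of which sub-algorithm made a switch so that the shared underlying state can be reused across rounds whenever the outer mixing weight does not change. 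This precisely matches the interface stated in the theorem. Once this accounting is in place, the regret guarantees of Theorem~\ref{thm:sketch_two_experts} compose along the chain as above.
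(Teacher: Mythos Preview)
Your chain construction $\ca_\base=\ca^{(-1)}\to\ca^{(0)}\to\cdots\to\ca^{(\log_2 T)}$ with $\tau_u=T/2^u$ is exactly the paper's approach. The gap is in how you invoke Theorem~\ref{thm:sketch_two_experts}. That theorem is stated for two experts with losses in $[0,1]$ and switching cost $D$; but when the two ``experts'' are the algorithms $\ca^{(u-1)}$ and $\ca_u$, the inputs you feed in are (as your last paragraph correctly insists) their \emph{full} per-round costs including internal switching, and these lie in $[0,\,1+D]$, not $[0,1]$. The primitive's analysis relies on $|b_t|\le 1/\sqrt{D}$, which fails once the input losses exceed $1$. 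A straightforward rescaling by $1/(1+D)$ multiplies every regret bound by $1{+}D$, so your step~3 would give $O((1{+}D)\sqrt{D|I|\log T})=O(D^{3/2}\sqrt{|I|\log T})$, not the claimed $O(D\sqrt{|I|\log T})$. The paper closes this gap by passing to the continuous $\OLO$ formulation and tracking \emph{slowness}: Theorem~\ref{thm:two_combiner} normalizes by $M{+}1$ where $M/D$ bounds $\|z_t^i-z_{t+1}^i\|$, and Lemma~\ref{lem:combined_slow} together with the opening calculation of Theorem~\ref{thm:strongly_adaptive} shows inductively that every $\cb_u$ stays $2/D$-slow, so $M{+}1\le 3$ uniformly along the chain. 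For general $D$ the paper then reduces to $D=1$ by scaling. Your argument needs some analogue of this mechanism.

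A secondary point: at levels $v>u$ you appeal to the ``$\sqrt{D}\,T Z_v$'' branch of the expert-$0$ bound on the sub-interval $I$. As proved in Lemma~\ref{lemma:alg3}, that branch requires the initial state $x_1=0$, which holds only on the full horizon; on an arbitrary sub-interval only the second branch $\sqrt{16D\tau_v\log(1/Z_v)}+2\sqrt{D}+\sqrt{D}|I|Z_v$ is available. The paper uses that branch and sums the resulting geometric series in $v>u$ (see the proof of Theorem~\ref{thm:strongly_adaptive}), still obtaining $O(\sqrt{D|I|\log(1/Z)})$. Your conclusion survives this fix, but the argument as written relies on a bound that is not established for sub-intervals.
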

The combination of the various algorithms is done as follows. We start with $\ca_\base$ and combine it, using the two experts algorithm, with $\ca_0$. This yields an algorithm $\cb_0$ with essentially no regret w.r.t.\ $\ca_\base$ and small regret w.r.t.\ $\ca_0$. Then, we continue doing so, and at step $i$, we combine $\cb_i$ with $\ca_{i+1}$ to obtain $\cb_{i+1}$.

To prove Corollary \ref{cor:MTS_competitie} (which also implies
Corollaries \ref{cor:paging_no_comp} and \ref{cor:k_server}), we
take $\ca_\base$ be an algorithm with competitive ratio of $\alpha$,
and $\ca_0,\ldots,\ca_{\log_2 T}$ to be the algorithm from Corollary
\ref{cor:MTS_no_competitie}. Likewise, we set the switching costs to
be the diameter of the underlying metric space.
We run algorithm $\ca_u$ with parameter $\tau_u=2^{-u}T$ and set
$Z_u=1/T$.

{\bf Paging.}
To prove Corollary \ref{cor:pagin_competitive} we take $\ca_\base$
to be some paging algorithm with competitive ratio $2H_k$ (say, the
marking algorithm \cite{fiat1991competitive}). For $0\le u\le \log_2
T$, $\ca_u$ is obtained by sequentially applying an algorithm with
time horizon $2^{-u}T$, that enjoys the regret bound from Theorem
\ref{thm:paing_non_adaptive_intro}, and has per round running time
$\poly(n,\log(T))$. Likewise, we set the switching costs to be $k$.
The corollary follows easily from Theorem \ref{thm:main_combine}
when $I$ has the form $[(j-1)2^{-u}T + 1,j2^{-u}T]$. Otherwise, by a
standard chaining argument (e.g., \cite{daniely2015strongly}), we
can decompose $I$ into a disjoint union on segments $I_0,\ldots,I_l$
of the form $[(j-1)2^{-u}T + 1,j2^{-u}T]$ such that $|I_{i}|\le
|I|2^{-\frac{i-1}{2}}$. Summing the regret bound on the different
segments yields a geometric sequence, establishing the corollary.

In section \ref{sec:paging} we show that the multiplicative weights
algorithm, applied to the paging problem, enjoys the regret bounds
from Theorem \ref{thm:paing_non_adaptive_intro}. Furthermore, we
show that despite the exponential number of experts, it can be
implemented efficiently. The basic idea is the following. The MW
algorithm maintains a positive weight for each expert, and at each
step predicts the probability distribution obtained by dividing each
weight by the sum of the weights. We note that in paging, due to the
special structure of the loss vectors, after the page requests
$i_1,\ldots,i_t$, the weight of the expert corresponding to the
cache $A\in \binom{[N]}{k}$ is $\prod_{t'=1}^t e^{\eta 1[i_{t'}\in
A]}$ where $\eta=\sqrt{\frac{\log\left(\binom{n}{k}\right)}{kT}}$.
This special structure enables the use of dynamic programming in
order to implement the MW algorithm efficiently.


\section{Future Directions}\label{sec:future}

We presented a regret minimization methodology to classic online
computation problems, which can interpolate between a static
benchmark to a dynamic one. An important building block was
developing an expert algorithm that is strongly adaptive even in the
presence of switching costs.  As elaborated below, our work leaves
many open directions.

Maybe the most interesting and fruitful direction is to find other
online problems that can fall into our framework and for which we
can develop computationally efficient online algorithms. We believe
that our results can be extended the many other online problems such
as competitive data structures (see~\cite{blum2002static} for
results in this direction), buffering, scheduling, load balancing,
etc. Likewise, our expert results can be extended to other settings
such as partial information models\footnote{We remark that in the
classic bandit setting strong adaptivity is impossible, even without
switching costs~\cite{daniely2015strongly}.}, strategic
environments, non-oblivious adversaries, etc.
In our approach there is a design issue of selecting the benchmark
to consider. We have taken probably the most obvious benchmark, an
online algorithm that does not change its state, however, one can
consider other benchmarks which are more problem specific. In
addition to this grand challenge, there are also open problems
regarding our specific analysis.

While most of our regret bounds are tight, some regret bounds have
certain gaps, which would be interesting to overcome.
For MTS, the regret bound for the algorithm that ensures optimal
competitive ratio (Corollary \ref{cor:MTS_competitie}) is worst by a
factor of $\sqrt{D}$ compared to the algorithm without this
guarantee (Corollary \ref{cor:MTS_no_competitie}). We wonder if this
gap can be reconciled. For paging, we conjecture that there are
efficient algorithms with regret bound $O(\sqrt{kT\log(N)})$, which
is better than our bound of $O(k\sqrt{T\log(N)})$. For $k$-server,
we conjecture that it is NP-hard to efficiently achieve a regret
bound of $O(\sqrt{\poly(k,D)T})$. On the other hand, we conjecture
that there is a constant $c>1$ such that there are efficient
algorithms whose cost is at most $c$ times the costs of the best
fixed-locations strategy, plus $O(\sqrt{\poly(k,D)T})$. We note that
since there are $(2k-1)$-competitive deterministic and $
O\left(\log^2(k)\right)$-competitive randomized algorithms for
$k$-sever, our conjecture is true if $c$ is not a constant, but
rather $(2k-1)$ or $O\left(\log^2(k)\right)$.
In addition to improving our bounds, obtaining simpler algorithms with similar guarantees would be of great interest.
Specifically, for paging, $k$-server and MTS, the obtained algorithms are somewhat cumbersome and obtained by combining many algorithms
(especially when we insist on ensuring optimal competitive ratio).

Finally, regarding our expert algorithm and the tightness of the
regret bounds in the case of switching cost $1$, Theorem
\ref{thm:main_non_uniform} is tight up to a constant factor.
Yet, for general $D\ge 1$, there is a larger gap. While our
regret bound is $O\left(\sqrt{D|I|\log\left(NT\right)}\right)$, the
best known lower bound (that is obtained by combining Theorem
\ref{thm:main_lower} with \cite{geulen2010regret}) is
$\Omega\left(\sqrt{D|I|\log\left(N\right)}+\sqrt{|I|\log\left(NT\right)}\right)$.
Another interesting direction is to find a simpler algorithm and
analysis for Theorem \ref{thm:main_non_uniform}.



\section{Online Linear Optimization with Switching Cost}
\subsection{Notations and conventions.}
For simplicity, we assume throughout that $T=2^K$ and
that the switching cost $D\ge 1$.\footnote{This is justified
since our bounds in the case that $D<1$ are the same as the case
$D=1$, and the results for $D<1$ can be obtained by a simple
reduction to the case $D=1$.}

The letter $\tau$ will be used to denote the length of time
intervals $I\subset [T]$, and therefore by convention $\tau$ is an
integer that is greater $0$.

We denote by $\Delta(N)$ the simplex over a set $N$ elements.
Unless otherwise stated, the norm over points in the simplex is the total variation
norm, i.e.,
$\|\cdot\|=\frac{\|\cdot\|_1}{2}$. For $x\in\reals$ we denote $x_+ =
\max(0,x)$ and for $x\in \reals^d$ we let $x_+ =
((x_1)_+,\ldots,(x_d)_+)$.

For a piece-wise differentiable function $f:(a,b)\to\reals$ and
$x\in (a,b)$ we use the convention that $|f'(x)|$ denotes the
maximum over the right and left derivatives of $f$ at $x$. For
integrable function $f:[a,b]\to\reals$ we denote
$\int_{b}^af(x)dx:=-\int_a^b f(x)dx$. For a segment $I=[a,b]$ we let
$\Pi_{I}(x)$ be the projection on $I$, namely,
$$\Pi_{I}(x)=\begin{cases}b & x> b\\ x & a\le x\le b \\ a &
x<a\end{cases}$$

\subsection{From $N$-experts $D$-switching cost  to online linear optimization with switching cost}
Our first step is to reduce the $N$-experts $D$-switching cost
problem
(abbreviated $\EXP$) to the problem of online linear optimization
over $\Delta(N)$ with total variation switching cost and parameter
$D\ge 1$ (abbreviated $\OLO$). The latter problem is defined as
follows. The game is played for $T$ time steps, such that at each
time step $t=1,2,\ldots,T$
\begin{itemize}
    \item Nature chooses a loss $l_t\in [0,1]^N$
    \item The learner choose an action $x_t\in \Delta(N)$
    \item The player suffers a loss of $\inner{l_t,x_t} + D\|x_t-x_{t-1}\|$ (or just $\inner{l_t,x_t}$ if  $t=1$)
\end{itemize}
We assume that nature is oblivious (i.e., the loss sequence was
chosen before the game started) and that the learner is
deterministic and its action at step $t$ depends only on the past
losses $l_1,\ldots,l_{t-1}$. The notion of regret is define in Section~\ref{sec:results}.

As we will show in this section, $\OLO$ is essentially equivalent to
$\EXP$. Namely, we will show that any algorithm for $\OLO$ can be
transformed to an algorithm for $\EXP$ such that for every loss
sequence, the loss (expected loss in the case of $\EXP$) remains the
same. Likewise, any algorithm for $\EXP$ can be transformed to an
algorithm for $\OLO$ such that for every loss sequence, the loss
does not grow.

We start by reducing $\EXP$ to $\OLO$. Let $\ca$ be an algorithm for
$\OLO$. We will explain how to transform it to a learning algorithm
$\ca'$ for $\EXP$. Let $\cl=\{l_1,\ldots,l_T\}\subset [0,1]^N$ be a
loss sequence, and let $x_1,\ldots,x_T\in \Delta(N)$ be the
distributions over actions of $\ca$ on that sequence. In Lemma
\ref{lem:switch_to_l1}
we show that for every pair of consecutive distributions over
actions $x_{t-1},x_t$, there is a transition probability matrix
$p_t(i|j)$, $i,j\in [N]$ such that if $X_{t-1}$ is distributed
according to $x_{t-1}$ and $X_t$ is generated from $X_{t-1}$ based
on $p_t$, then $X_t$ is distributed according to $x_{t}$ and
moreover, $\Pr\left( X_t\ne X_{t-1} \right) = \|x_t-x_{t-1}\|$.
Given this, we can construct an algorithm $\ca'$ such that the
expert chosen at round $1$ is a random expert $X_1$ distributed
according to $x_1$, and for each $t>1$, $X_t$ is generated from
$X_{t-1}$ according to the transition probability matrix $p_{t}$.
The above discussion shows that at each step $t$ the loss of $\ca'$
in $EXP(N,D)$ is
\[
\ymE l_t(X_t) + D\Pr\left(X_t\ne X_{t-1}\right) =
\inner{l_t,x_t}+D\|x_t-x_{t-1}\|\;,
\]
which is exactly the loss of $\ca$ at the same step in $OLO(N,D)$.

We first prove the following simple fact.

\begin{lemma}\label{fact:simplex_norm}
For every $z,z'\in\Delta(N)$, $\|z-z'\| = \sum_{j=1}^N (z'_j-z_j)_+
= \sum_{j=1}^N (z_j-z'_j)_+$, and $\frac{(z-z')_+}{\|z-z'\|}$ is a
distribution.
\end{lemma}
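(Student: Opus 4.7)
The plan is to reduce the claim to the single observation that both $z$ and $z'$ have unit mass, and then use the identity $|a| = (a)_+ + (-a)_+$ to rewrite the total variation norm.

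First I would note that since $z,z'\in\Delta(N)$, we have $\sum_j (z_j - z'_j) = 1 - 1 = 0$. Splitting this sum into positive and negative parts via $z_j - z'_j = (z_j-z'_j)_+ - (z'_j-z_j)_+$ immediately gives the equality $\sum_j (z_j-z'_j)_+ = \sum_j (z'_j-z_j)_+$; call this common quantity $S$.

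Next I would rewrite the total variation norm using $|z_j-z'_j| = (z_j-z'_j)_+ + (z'_j-z_j)_+$, so that
\[
\|z-z'\| \;=\; \tfrac{1}{2}\sum_{j=1}^N |z_j-z'_j| \;=\; \tfrac{1}{2}\bigl(S + S\bigr) \;=\; S,
\]
which establishes both equalities of the first claim. For the second claim, nonnegativity of $(z-z')_+/\|z-z'\|$ is immediate from the definition of the positive part, while normalization follows from $\sum_j(z_j-z'_j)_+ = S = \|z-z'\|$, so the entries sum to $1$.

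There is no real obstacle here; the lemma is a purely algebraic consequence of the fact that the signed differences $z_j-z'_j$ sum to zero, which forces the positive and negative mass to balance. Consequently the proof reduces to invoking $\|\cdot\|=\|\cdot\|_1/2$ and the split $|a|=(a)_+ +(-a)_+$ once each.
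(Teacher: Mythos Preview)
Your proof is correct and follows essentially the same approach as the paper: both use $\sum_j(z_j-z'_j)=0$ to equate the positive and negative parts, then invoke $|a|=(a)_+ + (-a)_+$ and the definition $\|\cdot\|=\|\cdot\|_1/2$ to identify the total variation with either sum. The normalization argument for $(z-z')_+/\|z-z'\|$ is likewise identical.
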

\begin{proof}
Since for any distribution the probabilities sum to $1$, we have
that $\sum_{i=1}^N z'_i-z_i = 0$. By splitting the last sum to
positive and negative summands we conclude that
\[
\sum_{i=1}^N (z'_i-z_i)_+ = \sum_{i=1}^N (z_i-z'_i)_+\;.
\]
Hence, considering the total variation norm, we have,
\[
\|z-z'\| = \frac{\sum_{i=1}^N |z'_i-z_i|}{2} = \frac{\sum_{i=1}^N ((z'_i-z_i)_+) + \sum_{i=1}^N(z_i-z'_i)_+}{2}= \sum_{j=1}^N (z'_j-z_j)_+
\]
The fact that $\frac{(z-z')_+}{\|z-z'\|}$ is a distribution follows
from the fact that $\sum_{j=1}^N \frac{(z_j-z'_j)_+ }{ \|z-z'\|}=1$.
\end{proof}
We first define the transition probability matrix.
\begin{definition}
For distributions $z,z'\in \Delta(N)$ we define
\[
    p_{z,z'}(i,j)=
    \min\{z_i,z'_j\}1[i=j]+\frac{((z_i-z'_i)_+)( (z'_j-z_j)_+)}{\|z-z'\|}
\]
\end{definition}
It remains to state and prove the lemma for the joint distribution.
\begin{lemma}\label{lem:switch_to_l1}
For any $z,z'\in \Delta(N)$, $p_{z,z'}$ is a distribution on $[N]\times [N]$. Furthermore, if $(X,X')$ is a random variable distributed according to $p_{z,z'}$  then
\begin{enumerate}
        \item $X$ is distributed according to $z$ and $X'$ is distributed according to $z'$, and
        \item $\Pr\left(X\ne X'\right)=\|z-z'\|$
\end{enumerate}
\end{lemma}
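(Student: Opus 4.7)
The plan is to verify the three claims by direct computation, relying on the identities established in Lemma \ref{fact:simplex_norm}. First I would handle the trivial case $z = z'$: then $\|z-z'\|=0$ and the second summand in the definition of $p_{z,z'}$ is understood to be $0$ (legitimately, since each factor $(z_i-z'_i)_+$ already vanishes), so $p_{z,z'}(i,j) = z_i \mathbf{1}[i=j]$ and all three claims are immediate. For the rest of the proof I assume $\|z-z'\|>0$.

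The key observation, which I would record up front, is that when $i=j$ the product $(z_i-z'_i)_+(z'_i-z_i)_+$ vanishes, so the two terms in the definition of $p_{z,z'}(i,j)$ never overlap. This also yields the basic identity $\min\{z_i,z'_i\} + (z_i-z'_i)_+ = z_i$ (split on whether $z_i\le z'_i$ or not), which will be used repeatedly.

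Next I would compute the $i$-marginal. Using Lemma \ref{fact:simplex_norm},
\begin{align*}
\sum_{j=1}^N p_{z,z'}(i,j) &= \min\{z_i,z'_i\} + \frac{(z_i-z'_i)_+}{\|z-z'\|}\sum_{j=1}^N(z'_j-z_j)_+ \\
&= \min\{z_i,z'_i\} + (z_i-z'_i)_+ = z_i.
\end{align*}
Summing over $i$ then gives $\sum_{i,j} p_{z,z'}(i,j)=1$, and since every entry is clearly nonnegative, $p_{z,z'}$ is a probability distribution on $[N]\times[N]$ whose first marginal is $z$. The symmetric computation (interchanging the roles of $z$ and $z'$, and again invoking Lemma \ref{fact:simplex_norm}) yields $\sum_i p_{z,z'}(i,j) = z'_j$, so the second marginal is $z'$. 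This settles item~(1).

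For item~(2), I would compute $\Pr(X = X')$ from the diagonal, namely $\sum_i p_{z,z'}(i,i) = \sum_i \min\{z_i,z'_i\}$. Writing $z_i = \min\{z_i,z'_i\} + (z_i-z'_i)_+$ and summing over $i$, Lemma \ref{fact:simplex_norm} gives $\sum_i \min\{z_i,z'_i\} = 1 - \|z-z'\|$, whence $\Pr(X\ne X') = 1 - \sum_i \min\{z_i,z'_i\} = \|z-z'\|$, as required. I do not expect any real obstacle: the lemma is a clean bookkeeping exercise once the identity $\min\{z_i,z'_i\}+(z_i-z'_i)_+ = z_i$ and Lemma \ref{fact:simplex_norm} are in hand; the only point that requires minor care is the degenerate case $z=z'$, handled separately at the outset.
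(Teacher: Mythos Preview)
Your proposal is correct and follows essentially the same route as the paper: compute the two marginals via Lemma~\ref{fact:simplex_norm} and the identity $\min\{z_i,z'_i\}+(z_i-z'_i)_+=z_i$, then read off $\Pr(X\ne X')$ from the diagonal sum $\sum_i\min\{z_i,z'_i\}$. Your version is slightly more careful (explicitly handling $z=z'$ and noting that the off-diagonal term vanishes on the diagonal), but there is no substantive difference in approach.
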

Note that if $z=z'$ then we have that $p_{z,z'}(i,j)=z_i 1[i=j]$, namely, the support is only the pairs $(i,i)$.

\begin{remark}\label{rem:sampling_min_cost}
Suppose that $X$ is distributed according to $z$ and we want to
generate $X'$ that distributed according to $z'$ and $\Pr\left(X\ne
X'\right)=\|z-z'\|$. We can sample as follows. Assume that $X=i$, then with probability
$\frac{(z_i-z'_i)_+}{z_i}$ sample $X'$ from the distribution
$\frac{(z'-z)_+}{\|z-z'\|}$ and w.p. $1-\frac{(z_i-z'_i)_+}{z_i}$
set $X':=X$.
\end{remark}

\proof (of Lemma \ref{lem:switch_to_l1})
Assume that $z\ne z'$. By Fact \ref{fact:simplex_norm}, we have that for every $i\in [N]$,
\begin{eqnarray}\label{eq:0.1}
\sum_{j=1}^Np_{z,z'}(i,j) &=&\min\{z_i,z'_i\}+\frac{1}{\|z-z'\|}((z_i-z'_i)_+)\sum_{j=1}^N (z'_j-z_j)_+ \nonumber
\\
&=&\min\{z_i,z'_i\}+\frac{1}{\|z-z'\|}((z_i-z'_i)_+)\|z-z'\|
\\
&=&\min\{z_i,z'_i\}+((z_i-z'_i)_+) =  z_i \nonumber
\end{eqnarray}
Similarly, for every $j\in [N]$,
\begin{equation}\label{eq:0.2}
\sum_{i=1}^Np_{z,z'}(i,j)=z_j,
\end{equation}
and,
\begin{equation}\label{eq:0.3}
\sum_{i=1}^N\sum_{j=1}^Np_{z,z'}(i,j) = \sum_{i=1}^Nz_i = 1\;.
\end{equation}
Equations (\ref{eq:0.1}), (\ref{eq:0.2}) and (\ref{eq:0.3}) show
that $p_{z,z'}$ is a distribution whose marginals are $z$ and $z'$.
It remains to prove that if $(X,X')$ is distributed according to
$p_{z,z'}$ then $\Pr\left(X\ne X'\right)=\|z-z'\|$. Indeed, again by
Fact \ref{fact:simplex_norm},
\[
\Pr\left(X\ne X'\right)= 1-\sum_{i=1}^N\min\{z_i,z'_i\}
= \sum_{i=1}^Nz_i-\min\{z_i,z'_i\}
= \sum_{i=1}^N(z_i-z'_i)_+=\|z-z'\|
\]
\proofbox

We now show how an $\ca'$ algorithm for $\EXP$ can be transformed to
an algorithm $\ca$ for $\OLO$ such that on any loss sequence, the
loss of $\ca$ in $\OLO$ is the same as the loss of $\ca'$ in
$EXP(N,D)$. To this end, let $\cl$ be a loss sequence, and let
$X_t\in [N]$ be the expert chosen by $\ca'$ when it runs on $\cl$.
Let $x_t\in \Delta(N)$ be the distribution of $X_t$. The algorithm
$\ca$ will simply play $x_t$ at round $t$. The loss of $\ca$ in
$\OLO$ at step $t$ is
\[
\inner{l_t,x_t} + D\|x_t-x_{t-1}\|,
\]
while the loss of $\ca'$ in $\EXP$ is
\[
\inner{l_t,x_t} + D\Pr\left(X_t\ne X_{t-1}\right)
\]
which is equal to the loss of $\ca$ by
Lemma~\ref{lem:switch_to_l1}.

\subsection{An algorithm for two experts}
Consider the $\OLO$ problem with$N=2$.
It will be convenient to use $x\in[0,1]$ to denote the probability
distribution $(x,1-x)$. Concretely, the game is defined as follows.
At each step $t\ge 1$,
\begin{itemize}
\item The adversary chooses $l_t=(l_t(0),l_t(1))\in [0,1]^2$.
\item The player chooses $z_t\in [0,1]$ and suffers a loss of
\[
l_t=l_t(0)(1-z_t)+l_t(1)z_t + D|z_t-z_{t-1}|
\]
(for $t=1$, assume that $z_0:=0$)
\item $l_t$ is revealed to the player.
\end{itemize}
Consider the following algorithm. Let $\tau, D\ge  1$ and $Z>0$. Let
$\ymerf(x)=\int_0^x\exp\left(-\frac{s^2}{2}\right)ds$.\footnote{Note
that this definition is slightly different form the standard
error-function that is defined as
$\frac{2}{\sqrt{\pi}}\int_0^x\exp\left(-s^2\right)ds$.} Define
$\tilde{g}=\tilde{g}_{\tau , Z}$ as follows
\begin{equation}\label{eq:def_eq}
\tilde{g}(x) =
\sqrt{\frac{\tau}{8}}Z\ymerf\left(\frac{x}{\sqrt{8\tau}}\right)\exp\left(\frac{x^2}{16\tau}\right)
\end{equation}
We note that $\tilde{g}$ is a solution of the differential equation
\begin{equation}\label{eq:diff_eq}
8\tilde{g}'(x)= \frac{1}{\tau} x \tilde{g}(x) +Z~.
\end{equation}
Define
$U=U_{\tau,Z}:=\tilde{g}^{-1}(1)$. Also, define $g=g_{\tau,Z}$ as
\[
g(x)=\Pi_{[0,1]}\left[\tilde{g}(x)\right]=\begin{cases}
0 & x \le 0
\\
\tilde{g}(x) &  0 \le x \le U_{\tau,Z}
\\
1 & x \ge U_{\tau,Z}
\end{cases}
\]
\begin{algorithm}[H]
\caption{Two experts (with parameters $\tau$, $D$ and $Z$)} \label{alg:two_with_proj}
\begin{algorithmic}[1]
\STATE Set $x_t=0$
\FOR {$t=1,2, \ldots$}
\IF{$D\log\left(\frac{1}{Z}\right) \le \frac{\tau}{64}$}
\STATE Predict $g(x_t)$
\ELSE
\STATE Predict $0$
\ENDIF
\STATE Let $b_t=\frac{l_t(0)-l_t(1)}{\sqrt{D}}$
\STATE Update $x_{t+1}=\Pi_{[-2,U+2]}\left[ \left(1-\frac{1}{\tau}\right) x_t +  b_t\right]$
\ENDFOR
\end{algorithmic}
\end{algorithm}

To parse the following theorem, we note that when we will apply it,
we will take $Z$ to be very small (say, $T^{-10}$). Likewise, $\tau$
can ba any number $0\le \tau \le T$, potentially much smaller than
$T$. In these settings, the theorem shows that the algorithm has
very small regret with respect to expert 0. Namely the regret is
$o(1)$
even on segments that are much larger than $\tau$. Remarkably, the
algorithm is able to achieve this while preserving almost optimal
$\tilde O\left(\sqrt{\tau}\right)$ regret w.r.t. expert 1, but only
on segments of length $\tau$.

\begin{theorem}\label{thm:two_experts}
Suppose $Z\le\frac{1}{e}$. Algorithm \ref{alg:two_with_proj}
guarantees,
\begin{itemize}
\item For every time interval $I$, the regret  w.r.t. expert $0$ is at most
$\min\{\sqrt{D}TZ,\sqrt{16 D
\tau\log\left(\frac{1}{Z}\right)}+2\sqrt{ D}+\sqrt{ D}|I|Z\}$
\item For every time interval $I$ of length $\le \tau$, the regret w.r.t. expert $1$ is at
most $\sqrt{64D\tau\log\left(\frac{1}{Z}\right)}+4\sqrt{D}+ \sqrt{
D}\tau Z$
\end{itemize}
\end{theorem}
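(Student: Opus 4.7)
The plan is to extend the potential-function argument sketched in Section \ref{sec:proof-overview} (which handled the special case $D=1$ and the intervals $[1,T]$, $[1,\tau]$) to general $D\ge 1$ and arbitrary intervals $I\subseteq[T]$. The degenerate case $D\log(1/Z)>\tau/64$ is trivial: Algorithm \ref{alg:two_with_proj} always predicts $0$, so the regret to expert $0$ is $0\le\sqrt D T Z$ and the regret to expert $1$ on any $I$ with $|I|\le\tau$ is at most $|I|\le\tau<\sqrt{64D\tau\log(1/Z)}$. Henceforth I assume $D\log(1/Z)\le\tau/64$.

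First I would collect four analytic facts. From the closed form \eqref{eq:def_eq}, $\tilde g$ is strictly increasing and unbounded on $[0,\infty)$, so $U=\tilde g^{-1}(1)$ is well defined; substituting $\ymerf(U/\sqrt{8\tau})\le\sqrt{\pi/2}$ into $\tilde g(U)=1$ and taking logarithms yields $U\le\sqrt{16\tau\log(1/Z)}$. The projection in line 9 forces $x_t\in[-2,U+2]$, and combined with the standing assumption on $\tau$ this gives $|x_t|/\tau\le 1/(2\sqrt D)+2/\tau$, hence $M:=|{-x_t/\tau+b_t}|\le 2/\sqrt D$. Finally, from the ODE \eqref{eq:diff_eq} and the fact that $g$ is constant outside $[0,U]$, a short calculation gives the smoothness bound $\max_{s\in[x_t,x_{t+1}]}|g'(s)|\le(x_tg(x_t)I(x_t)/\tau+Z)/4$, where $I$ denotes the indicator of $[-2,U+2]$.

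The heart of the proof is a per-round potential inequality. Setting $\Phi_t:=\int_0^{x_t}g(s)\,ds$, combining the Taylor estimate $\int_a^bg\le g(a)(b-a)+\tfrac12(b-a)^2\max|g'|$ with the four facts, and --- crucially using $M\le 2/\sqrt D$ --- splitting $\tfrac12M^2\max|g'|=-\sqrt D M\max|g'|+(\sqrt D M+\tfrac12M^2)\max|g'|$ so that the mean-value theorem controls the first piece and the ODE smoothness bound controls the second, one derives
\[
\Phi_{t+1}-\Phi_t\le g(x_t)(-x_t/\tau+b_t)-\sqrt D\,|g(x_t)-g(x_{t+1})|+\frac{x_tg(x_t)I(x_t)}{\tau}+Z.
\]
The projection only strengthens this, since it shrinks $|g(x_{t+1})-g(x_t)|$ without overshooting $\Phi$. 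Summing over $t\in I=[t_0+1,t_0+k]$, using $\sqrt D\, b_t=l_t(0)-l_t(1)$, and rearranging yields
\[
L^{on}_I-L^0_I\le\sqrt D\,(\Phi_{t_0+1}-\Phi_{t_0+k+1})+\sqrt D\sum_{t\in I}\frac{x_tg(x_t)(I(x_t)-1)}{\tau}+\sqrt D\,|I|Z.
\]
The two branches of the regret-to-expert-$0$ bound now follow from different estimates on the right-hand side: the bound $\sqrt{16D\tau\log(1/Z)}+2\sqrt D+\sqrt D|I|Z$ comes from $\Phi_{t_0+1}\le G(U+2)\le U+2$ (since $g\le 1$), $-\Phi_{t_0+k+1}\le 0$, and the non-positivity of the correction sum; the bound $\sqrt D T Z$ comes from applying the same telescope on the prefix $[1,t_0+k]$ (where $\Phi_1=0$ eliminates the potential term) and subtracting the telescope on $[1,t_0]$. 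For the regret to expert $1$ on $I$ with $|I|\le\tau$, I would mimic the closed-form computation from the sketch: expanding each $x_t$ as a geometric weighted sum of $b_{t_0+1},\dots,b_{t-1}$ and applying the telescoping identity $\sum_{t\in I}(1-1/\tau)^{t_0+k-t}b_t+\tfrac{1}{\tau}\sum_{t\in I}\sum_{s<t}(1-1/\tau)^{t-1-s}b_s=\sum_{t\in I}b_t$ yields $\Phi_{t_0+k+1}+\sum_{t\in I}x_tg(x_t)(1-I(x_t))/\tau\ge(L^0_I-L^1_I)/\sqrt D-2(U+1)$; substituting and using the estimate on $U$ gives the claimed $\sqrt{64D\tau\log(1/Z)}+4\sqrt D+\sqrt D\tau Z$.

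The main technical obstacle is the careful bookkeeping around the projection: one must verify that the per-round inequality and the geometric-sum telescoping both survive projection events with only bounded additive corrections that can be absorbed into the $O(\sqrt D)$ slack in the claimed bounds. A secondary subtlety is deriving the first branch $\sqrt D T Z$ uniformly over $I$: for $I=[1,T]$ the vanishing of $\Phi_1$ makes it immediate, but for general $I$ the bound on $\Phi_{t_0+1}-\Phi_{t_0+k+1}$ requires a cross-interval telescope that exploits the $\min$ structure of the stated bound.
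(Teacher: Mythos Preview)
Your outline correctly handles the degenerate case and the per-round potential inequality; your observation that projection to $[-2,U+2]$ preserves the per-round inequality is also accurate, since $g$ is constant outside $[0,U]$ so $g(x_{t+1})$ is unaffected and $\Phi_{t+1}$ can only decrease. The genuine gap is in the expert-$1$ argument. You propose to expand each $x_t$ as the geometric weighted sum $\sum_s(1-1/\tau)^{t-1-s}b_s$ and then invoke the telescoping identity from the sketch, but that expansion holds only when the projection never fires. Take $b_t\equiv 1/\sqrt D$ on an interval of length $\tau$: the unprojected state drifts to $\Theta(\tau/\sqrt D)$ while the projected state is pinned at $U+2=O(\sqrt{\tau\log(1/Z)})$, so the correction you would need to absorb is $\Theta(\tau/\sqrt D)$, not $O(\sqrt D)$ as you conjecture. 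The \emph{bound} is still true in this scenario---once $x_t\ge U$ the algorithm predicts $1$ and incurs no regret to expert $1$---but the geometric-sum route does not see this.

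The paper does not attempt to track projection events through the potential argument. Instead it proves a reduction (Lemma~\ref{lem:reduction_to_finite_horizon}): for any interval $I$, a worst-case loss sequence for Algorithm~\ref{alg:two_with_proj} can be modified, without decreasing the regret, so that projection never triggers during $I$. The key observation is that if $\tilde x_{t+1}>U+2$ then necessarily $g(x_t)=1$, so lowering $l_t(0)$ until $\tilde x_{t+1}=U+2$ leaves both the algorithm's round-$t$ loss and all subsequent states unchanged while only helping expert $0$; symmetrically, if $\tilde x_{t+1}<-2$ then $g(x_t)=0$ and one lowers $l_t(1)$. After this reduction one is analyzing the projection-free Algorithm~\ref{alg:two_without_proj} over a horizon of length $|I|$ with an arbitrary initial point $x_1\in[-2,U+2]$, and now the geometric-sum computation (and the potential argument for expert $0$) goes through verbatim. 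This reduction is the idea you are missing, and it also dissolves your ``cross-interval telescope'' concern: the initial-point term $\sqrt D\,G(x_1)$ is what produces the second branch of the $\min$, while the first branch is obtained directly when the interval starts at $x_1=0$.
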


\subsubsection{Proof of Theorem \ref{thm:two_experts}}

\paragraph{Properties of $g_{\tau,Z}$}
We first prove some properties of the function $\tilde{g}$.
\begin{lemma}\label{lem:basic_g}
The function $\tilde{g}(x)$ has the following properties:
\begin{enumerate}
\item $\tilde{g}(x)$ is strictly increasing odd function.
\item $\tilde{g}(x)$ is convex in $[0,\infty)$.
\item For $\tau\ge 8e$ and $Z\le\frac{1}{e}$ we have $U_{\tau,Z}\le
\sqrt{16\tau\log\left(\frac{1}{Z}\right)}$, where $U_{\tau,Z} :=
\tilde{g}^{-1}(1)$
\end{enumerate}
\end{lemma}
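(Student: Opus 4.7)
The plan is to treat the three clauses in turn, leveraging the ODE (\ref{eq:diff_eq}) where it is cleanest and resorting to the closed form (\ref{eq:def_eq}) only for the numerical bound in part 3.

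\textbf{Parts 1 and 2.} For oddness in part 1, I read it off (\ref{eq:def_eq}) directly: $\ymerf$ is odd and $\exp(x^2/(16\tau))$ is even, so $\tilde g$ is odd. For strict monotonicity I prefer the ODE: for $x\geq 0$ the closed form shows $\tilde g(x)\geq 0$, so both summands on the right of $8\tilde g'(x)=\tfrac{1}{\tau}x\tilde g(x)+Z$ are nonnegative while $Z>0$, giving $\tilde g'(x)>0$; by oddness $\tilde g'$ is an even function, so $\tilde g'>0$ on all of $\reals$. For part 2, differentiate the ODE once more to get
$$8\tilde g''(x)=\tfrac{1}{\tau}\tilde g(x)+\tfrac{x}{\tau}\tilde g'(x),$$
and on $[0,\infty)$ both summands are nonnegative by what was just established, so $\tilde g''\geq 0$ there and convexity follows.

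\textbf{Part 3.} Here the particular form of $\tilde g$ is used. Set $x_\star:=\sqrt{16\tau\log(1/Z)}$ and substitute into (\ref{eq:def_eq}). By design, $x_\star^2/(16\tau)=\log(1/Z)$ and $x_\star/\sqrt{8\tau}=\sqrt{2\log(1/Z)}$, so the factors $Z$ and $\exp(x_\star^2/(16\tau))=1/Z$ cancel and
$$\tilde g(x_\star)=\sqrt{\tfrac{\tau}{8}}\,\ymerf\!\left(\sqrt{2\log(1/Z)}\right).$$
Under the hypotheses $\tau\geq 8e$ and $Z\leq 1/e$ one has $\sqrt{\tau/8}\geq\sqrt{e}$ and $\sqrt{2\log(1/Z)}\geq\sqrt{2}\geq 1$. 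Combining the monotonicity of $\ymerf$ with the trivial pointwise bound $e^{-s^2/2}\geq e^{-1/2}$ on $[0,1]$ yields
$$\ymerf\!\left(\sqrt{2\log(1/Z)}\right)\geq \ymerf(1)=\int_0^1 e^{-s^2/2}\,ds\geq e^{-1/2},$$
whence $\tilde g(x_\star)\geq\sqrt{e}\cdot e^{-1/2}=1$. Since $\tilde g$ is strictly increasing by part 1, $U_{\tau,Z}=\tilde g^{-1}(1)\leq x_\star$, which is the claimed bound.

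\textbf{Where care is needed.} No serious obstacle arises; the two places deserving attention are (i) strict monotonicity on the negative half-line, where invoking oddness of $\tilde g$ (equivalently, evenness of $\tilde g'$) is cleaner than a separate sign check in the ODE, and (ii) the matching of constants in part 3, since $\tau\geq 8e$ and $Z\leq 1/e$ are precisely tuned so that the elementary bound $\ymerf(1)\geq e^{-1/2}$ is strong enough to close the argument.
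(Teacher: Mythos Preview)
Your proof is correct and follows essentially the same route as the paper: oddness is read off the closed form (\ref{eq:def_eq}), monotonicity and convexity come from the ODE and its derivative $8\tilde g''(x)=\tfrac{1}{\tau}\tilde g(x)+\tfrac{x}{\tau}\tilde g'(x)$, and part 3 is handled by plugging $x_\star=\sqrt{16\tau\log(1/Z)}$ into (\ref{eq:def_eq}), observing the $Z$-cancellation, and bounding $\ymerf(1)\geq e^{-1/2}$. The only cosmetic difference is that you spell out the evenness-of-$\tilde g'$ step for monotonicity on the negative half-line, whereas the paper simply asserts part 1 follows from (\ref{eq:def_eq}).
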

\proof Part 1 follows immediately form equation (\ref{eq:def_eq}).
For part 2, note that $\tilde{g}'(x)\geq 0$ and that
\begin{equation}\label{eq:scond_derivative}
8\tilde{g}''(x)= \frac{1}{\tau} \tilde{g}(x) + \frac{1}{\tau}x\tilde{g}'(x) ~.
\end{equation}
Hence, $\tilde{g}''$ is non-negative in $[0,\infty)$ and therefore
$\tilde{g}$ is convex. For Part 3, we have
\begin{eqnarray*}
\tilde{g}\left(\sqrt{16\tau\log\left(\frac{1}{Z}\right)}\right) &=&
\sqrt{\frac{\tau}{8}}Z\ymerf\left(\sqrt{2\log\left(\frac{1}{Z}\right)}\right)\frac{1}{Z}
\\
&\ge & \sqrt{\frac{\tau}{8}}\ymerf\left(1\right)
\\
&= & \sqrt{\frac{\tau}{8}}\int_0^1 e^{-\frac{x^2}{2}}dx
\\
&\ge & \sqrt{\frac{\tau}{8}}e^{-\frac{1}{2}} =  \frac{\sqrt{\tau}}{\sqrt{8e}} > 1
\end{eqnarray*}
The lemma follows since $\tilde{g}$ is increasing. \proofbox

\begin{lemma}\label{lem:derivative_bound}
Suppose $\log\left(\frac{1}{Z}\right)\le \frac{\tau}{16},\;
Z\le\frac{1}{e}$ and $\tau\ge 8e$. For every segment $I\subset
\reals$ of length $\le 2$ and every $x\in I$ we have\footnote{when
$s$ is $0$ or $U_{\tau,Z}$ (i.e, when $g$ is not differentiable),
$|g'(s)|$ stands for the maximum of the absolute values of the left
and right derivatives. }
\begin{equation}\label{eq:1}
4\max_{s\in I}|g'(s)|\le \frac{1}{\tau}xg(x)+Z
\end{equation}
\end{lemma}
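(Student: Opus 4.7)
Plan: My approach is to use the ODE (\ref{eq:diff_eq}) to convert the inequality into a cleaner form and then split into cases on $\alpha := \inf I$.

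First, I would note that $g$ is constant outside $[0, U]$, so $g' \equiv 0$ there, while on $(0, U)$ equation (\ref{eq:diff_eq}) gives $g'(x) = \tilde{g}'(x) = x \tilde{g}(x)/(8\tau) + Z/8$, which is non-negative and non-decreasing in $x$ (both factors are). Hence $\max_{s \in I}|g'(s)|$ is attained at $s^*$ equal to the right endpoint of $I \cap [0, U]$ (and the max is $0$ if this intersection is empty). Substituting this expression and multiplying the target inequality by $2\tau$ reduces it to $s^* \tilde{g}(s^*) \le 2 x g(x) + \tau Z$ for every $x \in I$. Because $x \mapsto x g(x)$ is non-decreasing on $\reals$, the worst $x$ is $\alpha := \inf I$, leaving the single inequality
\[
(\star) \qquad s^* \tilde{g}(s^*) \le 2 \alpha g(\alpha) + \tau Z.
\]

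I would then split into three cases. If $\alpha \ge U$, then $I \subset [U, \infty)$, $g' \equiv 0$ on $I$, and $(\star)$ is trivial. If $\alpha < U$ and $\alpha \le 4$, then $s^* \le \min(\alpha + 2, U) \le 6$; using $\ymerf(z) \le z$ gives the elementary bound $\tilde{g}(y) \le (yZ/8)\, e^{y^2/(16\tau)}$, so $s^* \tilde{g}(s^*) \le ((s^*)^2 Z/8)\, e^{(s^*)^2/(16\tau)} \le 6Z \le \tau Z$, using $\tau \ge 8e$. Dropping the nonnegative term $2\alpha g(\alpha)$ then yields $(\star)$.

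The main case is $\alpha < U$ and $\alpha > 4$. Differentiating the ODE gives $\tilde{g}''(x) = \tilde{g}(x)/(8\tau) + x\tilde{g}'(x)/(8\tau)$; dividing by $\tilde{g}'(x)$ and using $\tilde{g}'(x) \ge x \tilde{g}(x)/(8\tau)$ (from the ODE) yields $\tilde{g}''(x)/\tilde{g}'(x) \le 1/x + x/(8\tau)$. Integrating from $\alpha$ to $s^*$ gives
\[
\frac{\tilde{g}'(s^*)}{\tilde{g}'(\alpha)} \le \frac{s^*}{\alpha} \exp\!\left(\frac{(s^*)^2 - \alpha^2}{16\tau}\right) \le \frac{3}{2}\, e^{1/4} < 2,
\]
where $s^*/\alpha \le 3/2$ follows from $\alpha > 4$ and $s^* \le \alpha + 2$, and $(s^*)^2 - \alpha^2 \le (s^* - \alpha)(s^* + \alpha) \le 4U \le 4\tau$, with $U \le \tau$ coming from Lemma~\ref{lem:basic_g}(3) together with $\log(1/Z) \le \tau/16$. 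Using the ODE once more to rewrite $\tilde{g}'(s^*) < 2\tilde{g}'(\alpha)$ in terms of $s \tilde{g}(s)$ delivers exactly $(\star)$ (with $g(\alpha) = \tilde{g}(\alpha)$ since $0 < \alpha < U$).

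The main delicate point is picking the threshold $\alpha = 4$ so that the two nontrivial sub-cases overlap with slack: at the boundary the direct bound still produces $s^* \tilde{g}(s^*) \le 6Z$ (safely inside $\tau Z$), while the logarithmic-derivative bound still gives $(3/2)\, e^{1/4} < 2$. The hypotheses $\tau \ge 8e$ and $\log(1/Z) \le \tau/16$ enter precisely through these two inequalities; without either, the constants would not balance.
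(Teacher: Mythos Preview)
Your proof is correct and follows essentially the same approach as the paper's: reduce to the left endpoint of $I$, restrict attention to $I\cap[0,U]$, handle small arguments by the direct estimate $\ymerf(z)\le z$, and for larger arguments bound the logarithmic derivative $\tilde g''/\tilde g'$ and integrate over the length-$2$ interval to conclude $\tilde g'(s^*)\le 2\,\tilde g'(\alpha)$. The only cosmetic difference is that the paper first proves the uniform pointwise bound $\tilde g''\le \tfrac14\tilde g'$ on $[0,U]$ (splitting at $x=8$) and invokes Gronwall to get $\tilde g'(s)\le\sqrt{e}\,\tilde g'(a)$, whereas you integrate the sharper variable bound $\tilde g''/\tilde g'\le 1/x + x/(8\tau)$ directly (splitting at $\alpha=4$).
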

\proof Let $I=[a,b]$. The function $\frac{1}{\tau}xg(x)+Z$ is non
decreasing (since it is constant outside $[0,U_{\tau,Z}]$ and is the
derivative of the convex function $8\tilde{g}$ inside
$[0,U_{\tau,Z}]$). Therefore, it is enough to show that
\begin{equation}\label{eq:1.1}
4\max_{s\in [a,b]}|g'(s)|\le \frac{1}{\tau}ag(a)+Z
\end{equation}
We first claim that we can restrict to the case that $I\subset
[0,U_{\tau,Z}]$. Indeed, if $a>U_{\tau,Z}$ or $b<0$, then the l.h.s.
is $0$ and the claim holds, so we can assume that $a\le U_{\tau,Z}$
and $b\ge 0$. Next, if we replace $b$ with $\min\{b,U_{\tau,Z}\}$,
both the l.h.s. and r.h.s. of (\ref{eq:1.1}) remains unchanged, as
$g'(s)=0$ for $s>U_{\tau,Z}$. Therefore, we can also assume that
$b\le U_{\tau,Z}$. Likewise, if we replace $a$ with $\max\{a,0\}$,
both the l.h.s. and r.h.s. of (\ref{eq:1.1}) remains unchanged, as
$g'(s)=0$ for $s<0$ and $\frac{1}{\tau}ag(a)+Z=Z$ for $a\le 0$.

By Gronwall's inequality for ODE, if $g''(x)\le \frac{1}{4}g'(x)$,
we will have that for all $s\in I$,
\begin{eqnarray*}
g'(s) &\le& g'(a)\exp\left(\frac{s-a}{4}\right)
\\
&\le& g'(a)\sqrt{e}
\\
&= &  \frac{\sqrt{e}}{8\tau}ag(a)+\frac{\sqrt{e}}{8}Z
\\
&\le &  \frac{1}{4\tau}ag(a)+\frac{1}{4}Z
\end{eqnarray*}
Therefore, it is sufficient to show that for all $x\in I$ we have
\begin{equation}\label{eq:2}
g''(x)\le \frac{1}{4}g'(x)~.
\end{equation}
By
(\ref{eq:scond_derivative}), since $I\subset [0,U_{\tau,Z}]$ and
since $\log\left(\frac{1}{Z}\right)\le \frac{\tau}{16}$, by
Lemma~\ref{lem:basic_g}, we have $U_{\tau,Z}\le
\sqrt{16\tau\log\left(\frac{1}{Z}\right)}\le \tau$. Since $x\leq
U_{\tau,Z} \leq \tau$ which implies,
\begin{eqnarray*}
g''(x) &\le& \frac{1}{8\tau}g(x) + \frac{x}{8\tau}g'(x)
\\
&\le& \frac{1}{8\tau}g(x) + \frac{1}{8}g'(x)
\end{eqnarray*}
It therefore remains to show that $\frac{1}{8\tau}g(x) \le  \frac{1}{8}g'(x)$. By (\ref{eq:diff_eq}) it is equivalent to
\[
\frac{g(x)}{\tau}\le \frac{xg(x)}{8\tau}+\frac{Z}{8}
\]

For $x \ge 8$ the inequality is clear. For $x\le 8$ we will show that $\frac{g(x)}{\tau}\le \frac{Z}{8}$. Indeed, for such $x$, since $\tau\ge 8e$, we have
\begin{eqnarray*}
g(x) &\le&  \sqrt{\frac{\tau}{8}}Z\left(\int_0^{\frac{8}{\sqrt{8\tau}}}\exp\left(-\frac{s^2}{2}\right)ds\right)\exp\left(\frac{4}{\tau}\right)
\\
&\le&  \sqrt{\frac{\tau}{8}}Z\frac{8}{\sqrt{8\tau}}\exp\left(\frac{4}{\tau}\right)
\\
&\le&  \sqrt{\frac{\tau}{8}}Z\frac{8}{\sqrt{8\cdot 8e}}\exp\left(\frac{4}{8e}\right)
\\
&\le&   \sqrt{\frac{\tau}{8}}Z\sqrt{e}\le \frac{\tau Z}{8}
\end{eqnarray*}
\proofbox

\paragraph{Removing the projection and restricting to a finite horizon}\label{sec:removing_proj}

The next step is to show that in order to prove Theorem
\ref{thm:two_experts}, we can consider a version of algorithm
\ref{alg:two_with_proj} with finite time horizon and no projection.
Concretely, consider the following algorithm:
\begin{algorithm}[H]
\caption{Two experts without projection and with bounded horizon} \label{alg:two_without_proj}
{\bf Parameters:} Initial $x_1\in [-2,U+2]$, $T,\tau,Z, D$.
\begin{algorithmic}[1]
\FOR {$t=1,2, \ldots, T$}
\IF{$D\log\left(\frac{1}{Z}\right) \le \frac{\tau}{64}$}
\STATE Predict $g(x_t)$
\ELSE
\STATE Predict $0$
\ENDIF
\STATE Let $b_t=\frac{l_t(0)-l_t(1)}{\sqrt{D}}$
\STATE Update $x_{t+1}=\left(1-\frac{1}{\tau}\right) x_t +  b_t$
\ENDFOR
\end{algorithmic}
\end{algorithm}
For expert $i\in \{0,1\}$ and an interval $I$ we denote by
$R_{I,\tau,D,Z}^{i,alg2}$ the worst case regret of algorithm
\ref{alg:two_with_proj} on the interval $I$ when running with
parameters $\tau,D,Z$. Likewise, we denote by
$R_{T,\tau,D,Z}^{i,alg3}$ the worst case regret (over all possible
loss sequences and initial points $x_1$) of algorithm
\ref{alg:two_without_proj} when running with parameters $\tau,D,Z,
T$.

\begin{lemma}\label{lem:reduction_to_finite_horizon}
$R_{I,\tau,D,Z}^{i,alg2} \le R_{|I|,\tau,D,Z}^{i,alg3}$
\end{lemma}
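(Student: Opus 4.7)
The plan is to simulate any run of Algorithm~\ref{alg:two_with_proj} restricted to an interval $I\subseteq[T]$ by a run of Algorithm~\ref{alg:two_without_proj} on horizon $|I|$, arranged so that (i) the trajectories and per-step costs coincide, while (ii) the best-expert loss used in the regret can only decrease. This immediately gives $R_{I,\tau,D,Z}^{i,alg2}\le R_{|I|,\tau,D,Z}^{i,alg3}$.

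Fix $I=\{t_0+1,\ldots,t_0+|I|\}$ and a loss sequence $l_1,\ldots,l_T$, and let $x^{\star}$ be the Algorithm~\ref{alg:two_with_proj} state at time $t_0+1$, so $x^{\star}\in[-2,U+2]$ (either by a previous projection, or by the initialization $x_1=0$ when $t_0=0$). I initialize Algorithm~\ref{alg:two_without_proj} at $x_1:=x^{\star}$ and construct its losses $\tilde l_1,\ldots,\tilde l_{|I|}$ as follows. Writing $x_s^{(2)}$ for the Algorithm~\ref{alg:two_with_proj} state at time $t_0+s$, set $\tilde b_s := x_{s+1}^{(2)}-(1-1/\tau)\,x_s^{(2)}$. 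Because $x_{s+1}^{(2)}=\Pi_{[-2,U+2]}\bigl((1-1/\tau)\,x_s^{(2)}+b_{t_0+s}\bigr)$, $\tilde b_s$ has the same sign as $b_{t_0+s}$ and $|\tilde b_s|\le|b_{t_0+s}|$, so I may pick $\tilde l_s\in[0,1]^2$ satisfying $\tilde l_s(0)-\tilde l_s(1)=\tilde b_s\sqrt{D}$, obtained from $l_{t_0+s}$ by \emph{decreasing only coordinate~$0$} when $b_{t_0+s}\ge 0$ and \emph{only coordinate~$1$} otherwise; this is feasible since the required decrease is at most $|b_{t_0+s}|\sqrt{D}$, which is itself bounded by the value of the coordinate being modified. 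By construction, Algorithm~\ref{alg:two_without_proj} reproduces the trajectory $x_s^{(2)}$ exactly, so its actions $g(x_s)$ and switching-cost contributions match those of Algorithm~\ref{alg:two_with_proj} on $I$.

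The crux is that the per-step costs also match. When the upper projection is active at step~$s$, $(1-1/\tau)\,x_s^{(2)}+b_{t_0+s}>U+2$; using $|b_{t_0+s}|\le 1/\sqrt{D}\le 1$ and $1-1/\tau<1$, this forces $x_s^{(2)}>U$, whence $g(x_s^{(2)})=1$ and coordinate~$0$ of the loss receives zero weight in the per-step cost --- precisely the coordinate I modified. Symmetrically at the lower boundary $g(x_s^{(2)})=0$ and the modified coordinate~$1$ is invisible; when no projection is active $\tilde l_s=l_{t_0+s}$. Hence the cost of Algorithm~\ref{alg:two_without_proj} on $[1,|I|]$ equals the cost of Algorithm~\ref{alg:two_with_proj} on $I$. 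Since $\tilde l_s\le l_{t_0+s}$ coordinatewise we also have $\min_i\sum_{s}\tilde l_s(i)\le\min_i\sum_{t\in I}l_t(i)$, so the regret of the simulated run is at least that of Algorithm~\ref{alg:two_with_proj} on $I$; taking the worst case over loss sequences and initial points on the Algorithm~\ref{alg:two_without_proj} side yields the lemma. The main subtlety to verify carefully is the quantitative estimate forcing $g(x_s^{(2)})\in\{0,1\}$ exactly whenever the projection clips, which hinges on the ``slack'' of $\pm 2$ built into the projection interval $[-2,U+2]$ (and on the bound $|b_t|\le 1$).
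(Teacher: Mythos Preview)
Your argument is correct and follows essentially the same approach as the paper: both show that whenever the projection in Algorithm~\ref{alg:two_with_proj} is active, the prediction $g(x_t)$ is saturated at $0$ or $1$, so the ``excess'' loss coordinate can be trimmed without changing the algorithm's cost while only lowering the comparator's loss. The paper packages this as a minimality/contradiction argument (pick a worst-case sequence with the latest first-projection time and show it can be pushed further), whereas you construct the modified sequence directly in one pass; the underlying mechanism and the use of the $\pm 2$ slack in $[-2,U+2]$ are identical.
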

\proof
Denote $T=|I|$ and $I=\{K+1,\ldots,K+T\}$.
We claim that there exists a sequence $\cl=\{l_1,l_2,\ldots \} \subset [0,1]^2$ of losses such that
\begin{itemize}
    \item
Algorithm \ref{alg:two_with_proj} suffers a regret of
$R_{I,\tau,D,Z}^{i,alg2}$ on the segment $I$ when running on $\cl$
    \item
If we let
$b_1=\frac{l_1(0)-l_1(1)}{\sqrt{D}},b_2=\frac{l_2(0)-l_2(1)}{\sqrt{D}},\ldots$
and let $x_1,x_2,\ldots$ be the actions that algorithm
\ref{alg:two_with_proj} chose, then $\forall t\in I, \;\tilde
x_{t+1}:=\left(1-\frac{1}{\tau}\right)x_t+b_t\in [-2,U+2]$.
\end{itemize}
This will prove the lemma, because in that case the actions, and
therefore the regret of algorithm \ref{alg:two_without_proj} on the
sequence $l_{K+1},\ldots,l_{K+T}$ with initial point $x_{K+1}$ is
identical to algorithm \ref{alg:two_with_proj} on $I$. In
particular, $R_{I,\tau,D,Z}^{i,alg2} \le R_{|I|,\tau,D,Z}^{i,alg3}$

Assume toward a contradiction that there is no such $\cl$. Choose
$\cl$ among all sequences causing a regret of
$R_{I,\tau,D,Z}^{i,alg2}$, in a way that the first step $t\in I$ for
which $\tilde x_{t+1}\notin [-2,U+2]$ is as large as possible.
Assume that $\tilde x_{t+1}> U+2$ (a similar argument holds if
$\tilde x_{t+1} < -2$). This implies that $l_t(0) > l_t(1)$ and
$g(x_t)=1$. Now, suppose we generate a new sequence by decreasing
$l_t(0)$ in a way that we would have $\tilde x_{t+1}=U+2$. This will
not change the loss of the algorithm in step $t$ (as $g(x_t)=1$) and
won't change $x_{t+1}$, and therefore won't change the actions of
the algorithm and its losses in the remaining steps. As for the
experts, this will only improve the loss of expert $0$. Therefore,
the regret will not decrease. This contradicts the minimality of
$\cl$. \proofbox

\paragraph{Completing the proof}
We first discuss the case that $D\log\left(\frac{1}{Z}\right) >
\frac{\tau}{64}$, which is much simpler. In that case the algorithm
will simply choose the expert $0$ at each round. Hence, the regret
w.r.t. expert $0$ will be zero. Likewise, since the algorithm does
not move at all, the regret w.r.t. expert $1$ on an interval of
length $\tau$ is at most $\tau$. Since
\[
\tau = \sqrt{\tau}\sqrt{\tau} < \sqrt{\tau}\sqrt{64
D\log\left(\frac{1}{Z}\right)}
\]
we are done. Therefore, for the rest of the proof, we assume that
$D\log\left(\frac{1}{Z}\right) \le \frac{\tau}{64}$. In this case,
and by lemma \ref{lem:reduction_to_finite_horizon}, it is enough to
prove the following lemma:
\begin{lemma}
\label{lemma:alg3}
 Suppose $D\log\left(\frac{1}{Z}\right)\le
\frac{\tau}{64}$ and $Z\le\frac{1}{e}$. Then, for any initial point
$x_1\in [-2,U+2]$, algorithm \ref{alg:two_without_proj} guarantees
the following,
\begin{itemize}
\item
The regret  w.r.t. expert $0$ is at most
$\min\{\sqrt{D}TZ,\sqrt{16D\tau\log\left(\frac{1}{Z}\right)}+2\sqrt{D}+\sqrt{D}TZ\}$
\item
If $T\le \tau$, the regret w.r.t. expert $1$ is at most
$\sqrt{64D\tau\log\left(\frac{1}{Z}\right)}+\sqrt{D}4+\sqrt{D}\tau
Z$
\end{itemize}
\end{lemma}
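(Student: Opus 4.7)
The plan is to follow the proof sketch for Theorem \ref{thm:sketch_two_experts} given in Section \ref{sec:proof-overview}, adapting from $D=1$ to arbitrary $D\ge 1$ and to Algorithm \ref{alg:two_without_proj}. The hypothesis $D\log(1/Z)\le\tau/64$ combined with Lemma \ref{lem:basic_g} gives $U\le\sqrt{16\tau\log(1/Z)}\le\tau/(2\sqrt{D})$, so the assumption $x_1\in[-2,U+2]$ together with $|b_t|\le 1/\sqrt{D}$ implies $|x_{t+1}-x_t|\le|b_t|+(U+2)/\tau\le 2/\sqrt{D}\le 2$ along the trajectory; this is the hypothesis needed to invoke Lemma \ref{lem:derivative_bound} on every interval $[x_t,x_{t+1}]$.

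Introduce the potential $\Phi_t := \sqrt{D}\,G(x_t)$ with $G(x) := \int_0^x g(s)\,ds$. Since $g\ge 0$ on $[0,\infty)$ and $g\equiv 0$ on $(-\infty,0]$, we have $\Phi_t\ge 0$ everywhere and $\Phi_1\le\sqrt{D}(U+2)$. Combining the integral estimate $\int_a^b g\le g(a)(b-a)+\tfrac12(b-a)^2\max_{[a,b]}|g'|$, the MVT bound $|g(x_t)-g(x_{t+1})|\le 2\max|g'|$, the splitting $2M=-2M+4M$ (where the $-2M$ absorbs the switching penalty), and Lemma \ref{lem:derivative_bound}, one obtains the per-step inequality
\[
\Phi_{t+1}-\Phi_t + \sqrt{D}\,|g(x_t)-g(x_{t+1})| \;\le\; \sqrt{D}\,g(x_t)\bigl(b_t-x_t/\tau\bigr) + \tfrac{\sqrt{D}}{\tau}\,x_t g(x_t)\,I(x_t) + \sqrt{D}\,Z,
\]
where $I$ is the indicator of $[-2,U+2]$. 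Telescoping $\Phi$, and using $\Phi_{T+1}\ge 0$, $x_tg(x_t)(I(x_t)-1)\le 0$, and $\Phi_1\le\sqrt{D}(U+2)$ produces the summary bound
\[
\sqrt{D}\sum_{t=1}^T|g(x_t)-g(x_{t+1})| \;-\; \sqrt{D}\sum_{t=1}^T g(x_t)\,b_t \;\le\; \sqrt{D}(U+2) \;+\; \sqrt{D}\,TZ.
\]

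For the expert-$0$ bound, use the identity $L^{on}_t-l_t(0) = -\sqrt{D}b_t g(x_t) + D\,|g(x_t)-g(x_{t+1})|$: combining it with the summary bound and the estimate $U\le\sqrt{16\tau\log(1/Z)}$ yields the option $\sqrt{16D\tau\log(1/Z)}+2\sqrt{D}+\sqrt{D}TZ$; the tighter $\sqrt{D}TZ$ option is obtained in the regime where the trajectory stays in $\{x:g(x)=0\}$, so that both the $\Phi_1$ contribution and the switching penalty vanish identically. For the expert-$1$ bound when $T\le\tau$, follow the sketch's geometric-sum computation: solve the linear recursion as $x_{t+1}=(1-1/\tau)^t x_1+\sum_{j=1}^t(1-1/\tau)^{t-j}b_j$, apply the lower bounds $G(x)\ge x-U$ and $xg(x)(1-I(x))\ge x-U-2$ (valid since $g=0$ on $(-\infty,0]$ and $g=1$ on $[U,\infty)$), and observe that the resulting double geometric sum collapses, exactly as in the sketch, to
\[
\Phi_{T+1}+\tfrac{\sqrt{D}}{\tau}\sum_{t=1}^T x_tg(x_t)(1-I(x_t)) \;\ge\; \sqrt{D}\sum_{j=1}^T b_j-2\sqrt{D}(U+1) \;=\; L^0_T-L^1_T-2\sqrt{D}(U+1).
\]
Substituting into the summary bound and using $U\le\sqrt{16\tau\log(1/Z)}$ delivers $R^1_T\le 2\sqrt{D}(U+1)+\sqrt{D}\tau Z\le\sqrt{64D\tau\log(1/Z)}+4\sqrt{D}+\sqrt{D}\tau Z$, as required.

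The main technical obstacle is the per-step potential inequality: the second-order term from the Taylor estimate must simultaneously absorb the full switching penalty $\sqrt{D}|g(x_t)-g(x_{t+1})|$ via the splitting with $-2M\le-|\Delta g|$ (which needs $|x_{t+1}-x_t|\le 2$), and produce only a residual of $\sqrt{D}Z+(\sqrt{D}/\tau)x_tg(x_t)I(x_t)$ via Lemma \ref{lem:derivative_bound}. The constant $1/64$ in the hypothesis and the scaling $b_t=(l_t(0)-l_t(1))/\sqrt{D}$ in the algorithm are precisely what balance these two demands and produce the advertised $\sqrt{D}$-scaling in both regret bounds.
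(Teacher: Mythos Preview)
Your outline follows the paper's argument closely, but there is a genuine $\sqrt{D}$-accounting gap in the per-step inequality that, as written, prevents you from absorbing the full switching cost.

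Your identity correctly reads $L^{on}_t-l_t(0)=-\sqrt{D}\,b_tg(x_t)+D\,|g(x_t)-g(x_{t+1})|$, so the switching term to be controlled is $D|\Delta g|$. Yet your per-step bound (with $\Phi_t=\sqrt{D}G(x_t)$) only carries $\sqrt{D}\,|\Delta g|$ on the left, and your ``summary bound'' accordingly controls $\sqrt{D}\sum|\Delta g|$, not $D\sum|\Delta g|$. The mismatch traces to your MVT step: you write ``$-2M\le -|\Delta g|$ (which needs $|x_{t+1}-x_t|\le 2$)'', but this is the weak form. One must use the sharper step bound $|x_{t+1}-x_t|\le 2/\sqrt{D}$ to get $2M\ge 2\,|\Delta g|/|x_{t+1}-x_t|\ge \sqrt{D}\,|\Delta g|$; after multiplying through by $\sqrt{D}$ this yields the required $D|\Delta g|$ on the left. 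To obtain $|x_{t+1}-x_t|\le 2/\sqrt{D}$ for all $t$ you need the invariant $|x_t|\le \tau/\sqrt{D}$, proved by a one-line induction (this is what the paper does); your implicit claim that $|x_t|\le U+2$ persists along the trajectory is false for Algorithm~\ref{alg:two_without_proj}, which has no projection --- $x_t$ can drift up to $\tau/\sqrt{D}$.

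Separately, your justification of the $\sqrt{D}TZ$ branch for expert~$0$ is incorrect: it does \emph{not} come from ``the trajectory staying in $\{g=0\}$'' (which need not happen). It comes directly from the telescoped inequality $L_T-L_T^0\le \sqrt{D}\,G(x_1)+\sqrt{D}TZ$ together with $G(x_1)=0$ whenever $x_1\le 0$ (in particular for the initial point $x_1=0$ used to cover the full interval in Theorem~\ref{thm:two_experts}). For the expert-$1$ part, the collapsed geometric sum actually yields an extra $+\,x_1$ on the right (the paper sets $b_0:=x_1$); this combines with $\Phi_1=\sqrt{D}G(x_1)$ to produce $\sqrt{D}(G(x_1)-x_1)\le \sqrt{D}(-x_1)_+\le 2\sqrt{D}$, which is exactly where the constant $4\sqrt{D}$ arises. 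As you have written it (dropping the $x_1$ term and bounding $\Phi_1\le\sqrt{D}(U+2)$), the constants do not close to $2\sqrt{D}(U+1)$.
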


The proof is almost identical to the one sketched in
Section~\ref{sec:proof-overview} and is deferred to the
Appendix.

\subsection{Combining Algorithms}

\subsubsection{Combining two algorithms}
We next describe a variant of algorithm~\ref{alg:two_with_proj}
to combine two algorithms $\ca_0$ and $\ca_1$ for linear
optimization over $\Delta(N)$ with switching costs $D$. The
resulting algorithm will have tiny regret w.r.t. to $\ca_0$ and
small regret w.r.t. $\ca_1$. To describe it and analyze it, we will
use the following terminology. We will say that and algorithm is
{\em $M$-slow} if for every loss sequence, the distance between two
consecutive actions is at most $M$.

\begin{algorithm}[H]
\caption{Two algorithm combiner (with parameters $\tau, Z, M$ and
$D$)} \label{alg:two_combiner} {\bf Parameters:} $\frac{M}{D}$-slow
algorithms $\ca_0, \ca_1$ for online linear optimization over
$\Delta(N)$ with switching costs $D$.
\begin{algorithmic}[1]
\STATE Set $x_t=0$, $g=g_{\tau,Z}$, $U=U_{\tau,Z}$ \FOR {$t=1,2,
\ldots$} \STATE Let $z^0_t, z_t^1$ be the actions of $\ca_0,\ca_1$
\IF {$\tau\ge 64D\log\left(\frac{1}{Z}\right)$} \STATE Predict
$g(x_t)z^0_t+(1-g(x_t))z^1_t$ \ELSE \STATE Predict $z^0_t$ \ENDIF
\STATE Obtain loss vector $l_t$ and let $\tilde l_t(i) =
\frac{\inner{l_t,z^i_t}+D\|z^i_t - z^i_{t+1}\|}{M+1}$ be the scaled
loss of algorithm $i$ \STATE Let $b_t=\frac{\tilde l_t(0)-\tilde
l_t(1)}{\sqrt{D}}$ \STATE Update $x_{t+1}=\Pi_{[-2,U+2]}\left[
\left(1-\frac{1}{\tau}\right) x_t + b_t\right]$ \ENDFOR
\end{algorithmic}
\end{algorithm}

Using Theorem \ref{thm:two_experts} we conclude that
\begin{theorem}\label{thm:two_combiner}
Suppose $Z\le\frac{1}{e}$. Algorithm \ref{alg:two_combiner}
guarantees:
\begin{itemize}
\item For every time interval $I$, the regret w.r.t.  $\ca_0$, i.e.,
$(M+1)\sum_{t\in I} \tilde l_t(0)$, is at most
$$\left((M+1)\sqrt{D}\right)\min\{TZ, \sqrt{16 \tau\log\left(\frac{1}{Z}\right)}+2+|I|Z\}$$
\item For every time interval $I$ of length $\le \tau$, the regret w.r.t. $\ca_1$, i.e.,
$(M+1)\sum_{t\in I} \tilde l_t(1)$, is at most
$$\left((M+1)\sqrt{D}\right)\left(\sqrt{64\tau\log\left(\frac{1}{Z}\right)}+4+\tau Z\right)$$
\end{itemize}
\end{theorem}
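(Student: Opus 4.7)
The plan is to reduce Theorem \ref{thm:two_combiner} to Theorem \ref{thm:two_experts} by recognizing that Algorithm \ref{alg:two_combiner} is, at its core, running Algorithm \ref{alg:two_with_proj} on the synthetic two-expert loss sequence $\tilde l_t \in [0,1]^2$, and then translating the regret guarantees on $\tilde l_t$ into regret guarantees on the actual losses via convexity and the triangle inequality.

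First I would verify $\tilde l_t(i)\in[0,1]$: since $\ca_i$ is $M/D$-slow we have $D\|z^i_t - z^i_{t+1}\|\leq M$, and $\inner{l_t,z^i_t}\leq 1$ because $l_t\in[0,1]^N$ and $z^i_t\in\Delta(N)$, so the numerator in the definition of $\tilde l_t(i)$ is at most $M+1$. The $b_t,x_t$ updates in Algorithm \ref{alg:two_combiner} are identical to those of Algorithm \ref{alg:two_with_proj} with $l_t$ replaced by $\tilde l_t$, so by Theorem \ref{thm:two_experts} the weight sequence $y_t := g(x_t)$ satisfies, for every interval $I$ and every competitor $i\in\{0,1\}$ with $I':=I\setminus\{\min I\}$,
\[
\sum_{t\in I}\bigl[(1-y_t)\tilde l_t(0)+y_t\tilde l_t(1)\bigr]+D\sum_{t\in I'}|y_t-y_{t-1}|\leq\sum_{t\in I}\tilde l_t(i)+R_i(I),
\]
with $R_0(I)=\sqrt{D}\min\{TZ,\sqrt{16\tau\log(1/Z)}+2+|I|Z\}$ and $R_1(I)=\sqrt{D}(\sqrt{64\tau\log(1/Z)}+4+\tau Z)$ for $|I|\leq\tau$.

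Next I would upper bound $\ca$'s actual loss on $I$ by this two-experts loss scaled by $(M+1)$. Let $w_t$ denote the mixture $\ca$ plays, the convex combination of $z^0_t$ and $z^1_t$ with weight $y_t$ on expert $1$ (matching Theorem \ref{thm:two_experts}'s convention). Linearity gives $\inner{l_t,w_t}=(1-y_t)\inner{l_t,z^0_t}+y_t\inner{l_t,z^1_t}$, and the identity
\[
w_t-w_{t-1}=(1-y_{t-1})(z^0_t-z^0_{t-1})+y_{t-1}(z^1_t-z^1_{t-1})+(y_t-y_{t-1})(z^1_t-z^0_t),
\]
combined with the triangle inequality and $\|z^1_t-z^0_t\|\leq 1$, yields $\|w_t-w_{t-1}\|\leq(1-y_{t-1})\|z^0_t-z^0_{t-1}\|+y_{t-1}\|z^1_t-z^1_{t-1}\|+|y_t-y_{t-1}|$. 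Summing over $t\in I'$ and re-indexing $s=t-1$, the term $\sum_{t\in I'}(1-y_{t-1})\|z^0_t-z^0_{t-1}\|$ becomes a partial sum over $s$ of $(1-y_s)\|z^0_{s+1}-z^0_s\|$, which is dominated (by adding a single non-negative endpoint term) by $\sum_{t\in I}(1-y_t)\|z^0_{t+1}-z^0_t\|$. Combining with the analogous bound for $z^1$ and folding in the service costs gives
\[
\mathrm{LOSS}(\ca,I)\leq(M+1)\sum_{t\in I}\bigl[(1-y_t)\tilde l_t(0)+y_t\tilde l_t(1)\bigr]+D\sum_{t\in I'}|y_t-y_{t-1}|.
\]

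The conclusion follows by combining the two displayed inequalities: since $(M+1)D\sum|y_t-y_{t-1}|\geq D\sum|y_t-y_{t-1}|$, multiplying the first display through by $M+1$ still dominates the right-hand side of the second, yielding $\mathrm{LOSS}(\ca,I)-(M+1)\sum_{t\in I}\tilde l_t(i)\leq(M+1)R_i(I)$, which is exactly the stated bound. The fallback branch $\tau<64D\log(1/Z)$, in which the algorithm deterministically predicts $z^0_t$, is handled directly: the regret w.r.t.\ $\ca_0$ is $\leq 0$, while the regret w.r.t.\ $\ca_1$ on $|I|\leq\tau$ is $(M+1)|I|\leq(M+1)\sqrt{64D\tau\log(1/Z)}$ because $\tau<64D\log(1/Z)$. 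The only genuinely delicate point is the re-indexing in the switching-cost bound; the forward-looking definition of $\tilde l_t(i)$ (using $z^i_{t+1}$) was chosen precisely so that this re-indexing introduces only a non-negative error and preserves the constants $+2,+4$ inherited from Theorem \ref{thm:two_experts}.
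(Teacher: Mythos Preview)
Your proof is correct and follows essentially the same approach as the paper: bound the combined algorithm's per-interval loss by $(M+1)$ times the synthetic two-expert loss (via the triangle inequality on $\|w_t-w_{t\pm 1}\|$ and the slowness assumption to ensure $\tilde l_t\in[0,1]^2$), then invoke Theorem~\ref{thm:two_experts} on the $\tilde l_t$ sequence and multiply through by $M+1$. The only cosmetic difference is that the paper decomposes the \emph{forward} difference $w_t-w_{t+1}$, which aligns directly with the forward-looking definition $\tilde l_t(i)=\frac{\inner{l_t,z^i_t}+D\|z^i_t-z^i_{t+1}\|}{M+1}$ and so avoids your re-indexing step; your backward-difference decomposition plus the shift $s=t-1$ yields the same bound after adding one nonnegative endpoint term, exactly as you note. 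Your separate treatment of the fallback branch $\tau<64D\log(1/Z)$ is also fine, though the paper simply lets Theorem~\ref{thm:two_experts} absorb that case.
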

\proof  The loss of the combined algorithm at time $t$ is

\begin{eqnarray*}
    l_t^{\mathrm{comb}}&=& \inner{l_t,g(x_t)z_t^0+(1-g(x_t))z_t^1}\\&&+D\| g(x_t)z_t^0+(1-g(x_t))z_t^1-g(x_{t+1})z_{t+1}^0+(1-g(x_{t+1}))z_{t+1}^1\|
    \\
    &\le& \inner{l_t,g(x_t)z_t^0+(1-g(x_t))z_t^1}+D\| g(x_t)(z_t^0-z_{t+1}^0)+(1-g(x_t))(z_t^1-z_{t+1}^1)\|
    \\
    & &  +D\| (g(x_t)-g(x_{t+1}))z_{t+1}^0+((1-g(x_{t}))-(1-g(x_{t+1})))z_{t+1}^1\|
    \\
    &\le& \inner{l_t,g(x_t)z_t^0+(1-g(x_t))z_t^1}+g(x_t)D\| z_t^0-z_{t+1}^0\|+(1-g(x_t))D\|z_t^1-z_{t+1}^1\|
    \\&&+D|g(x_t)-g(x_{t+1})|
    \\
    &=& (M+1)\left(g(x_t)\tilde l_t(0)+(1-g(x_t))\tilde l_t(1)\right)+D|g(x_t)-g(x_{t+1})|
    \\
    &\le & (M+1)\left(g(x_t)\tilde l_t(0)+(1-g(x_t))\tilde l_t(1)+D|g(x_t)-g(x_{t+1})|\right)
\end{eqnarray*}
By Theorem \ref{thm:two_experts}, for every time interval $I$ we
have
\begin{eqnarray*}
\sum_{t\in I} g(x_t)\tilde l_t(0)+(1-g(x_t))\tilde
l_t(1)+D|g(x_t)-g(x_{t+1})|\le 
\\
\sum_{t\in I} \tilde l_t(0) +
\min\left\{\sqrt{D}|I|Z,\sqrt{16D\tau\log\left(\frac{1}{Z}\right)}+2\sqrt{D}+\sqrt{D}|I|Z\right\}
\end{eqnarray*}
Therefore we have that
\[
\sum_{t\in I} l_t^{\mathrm{comb}}- l_t^{\ca_0}\leq
\left((M+1)\sqrt{D}\right)\min\{TZ,
\sqrt{16\tau\log\left(\frac{1}{Z}\right)}+2+|I|Z\}
\]
Finally, if $|I|\le\tau$ then
\small
\[
\sum_{t\in I} g(x_t)\tilde l_t(0)+(1-g(x_t))\tilde
l_t(1)+D|g(x_t)-g(x_{t+1})|\le \sum_{t\in I} \tilde l_t(1) +
\sqrt{64D\tau\log\left(\frac{1}{Z}\right)}+4\sqrt{D}+\sqrt{D}\tau Z
\]
\normalsize
which implies that
\[
\sum_{t\in I} l_t^{\mathrm{comb}}- l_t^{\ca_1}\leq
\left((M+1)\sqrt{D}\right)\left(\sqrt{64\tau\log\left(\frac{1}{Z}\right)}+4+\tau
Z\right)
\]
\proofbox

We next bound the slowness of the combined algorithm.
\begin{lemma}\label{lem:combined_slow}
Suppose $Z\le \frac{1}{e}$. The combined algorithm is
$\left(\frac{M}{D}+\sqrt{\frac{\log\left(\frac{1}{Z}\right)}{4 \tau
D}} + \frac{Z}{8\sqrt{D}} \right)$-slow if $\tau\ge
64D\log\left(\frac{1}{Z}\right)$ and $\frac{M}{D}$-slow otherwise.
\end{lemma}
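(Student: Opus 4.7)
The plan is to split on the two algorithmic branches. In the easy branch, $\tau < 64D\log(1/Z)$, the combined algorithm outputs $y_t = z_t^0$ at every step, so $\|y_{t+1}-y_t\| = \|z_{t+1}^0 - z_t^0\| \le M/D$ by the assumed $M/D$-slowness of $\ca_0$, and the second half of the claim follows immediately. All the work is in the main branch $\tau \ge 64D\log(1/Z)$, where $y_t = g(x_t)z_t^0 + (1-g(x_t))z_t^1$.

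For that branch I would algebraically rearrange
\begin{equation*}
y_{t+1}-y_t = g(x_t)(z_{t+1}^0 - z_t^0) + (1-g(x_t))(z_{t+1}^1 - z_t^1) + (g(x_{t+1})-g(x_t))(z_{t+1}^0 - z_{t+1}^1),
\end{equation*}
apply the triangle inequality, and use the $M/D$-slowness of each $\ca_i$ on the first two terms together with $\|z_{t+1}^0 - z_{t+1}^1\| \le 1$ (both are probability distributions in $\Delta(N)$ under total-variation norm) on the third. This reduces the problem to proving the scalar bound $|g(x_{t+1})-g(x_t)| \le \sqrt{\log(1/Z)/(4\tau D)} + Z/(8\sqrt D)$. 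I would then estimate $|g(x_{t+1})-g(x_t)| \le |x_{t+1}-x_t|\cdot \sup_s g'(s)$; the ODE \eqref{eq:diff_eq} together with $g\le 1$ gives $\sup_s g'(s) = g'(U^-) = U/(8\tau) + Z/8$, and Lemma \ref{lem:basic_g} yields $U \le \sqrt{16\tau\log(1/Z)}$, so $\sup_s g'(s) \le \sqrt{\log(1/Z)/(4\tau)} + Z/8$. For the displacement, since $|b_t|\le 1/\sqrt D$ (because $\tilde l_t\in [0,1]^2$) and the projection onto $[-2,U+2]$ only reduces $|x_{t+1}-x_t|$, one gets $|x_{t+1}-x_t|\le |x_t|/\tau + 1/\sqrt D$, and the hypothesis $\tau \ge 64D\log(1/Z)$ combined with $|x_t|\le U+2$ makes $|x_t|/\tau$ of the same order as $1/\sqrt D$; multiplying the two estimates gives the required bound.

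The main technical obstacle is pinning down the constants: a crude triangle-inequality bound $|x_{t+1}-x_t| \le (U+2)/\tau + 1/\sqrt D$ gives roughly $3/(2\sqrt D)$ rather than $1/\sqrt D$, which would inflate the $\sqrt{\log(1/Z)/(4\tau D)}$ term by a multiplicative constant. To recover the stated constant exactly, I would integrate the ODE along $[x_t,x_{t+1}]$ directly, exploiting the convexity of $\tilde g$ on $[0,U]$ (Lemma \ref{lem:basic_g}) so that the relevant pointwise value of $g'$ at the actual iterates, rather than the worst-case value $g'(U^-)$, carries the estimate; equivalently, one can invoke Lemma \ref{lem:derivative_bound} on the segment $[x_t,x_{t+1}]$ (which has length at most $2$ whenever $U+2\le \tau$, itself guaranteed by the hypothesis on $\tau$) to control $\max_{s\in[x_t,x_{t+1}]}|g'(s)|$ in terms of $x_t g(x_t)/\tau + Z$, and then use the $M/D$ contribution from the displacement of $b_t$ to absorb the remaining cross terms.
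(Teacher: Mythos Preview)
Your approach matches the paper's exactly: same case split, same decomposition of the combined prediction's movement into a convex combination of the $\ca_i$'s movements plus $|g(x_{t+1})-g(x_t)|$, and the same bound $|g(x_{t+1})-g(x_t)|\le |x_{t+1}-x_t|\cdot\max_\xi g'(\xi)$ with $\max_\xi g'(\xi)=g'(U^-)=U/(8\tau)+Z/8$ followed by Lemma~\ref{lem:basic_g}.

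The one place you diverge is the displacement bound, and here you are being more careful than the paper. The paper simply asserts $|x_{t+1}-x_t|\le |b_t|\le 1/\sqrt D$ and reads off the stated constants; it does \emph{not} integrate the ODE or invoke Lemma~\ref{lem:derivative_bound}. Your worry is justified: the inequality $|x_{t+1}-x_t|\le|b_t|$ is not literally true (take $x_t>0$, $b_t=0$, giving $|x_{t+1}-x_t|=|x_t|/\tau>0$), and the honest bound $|x_{t+1}-x_t|\le |x_t|/\tau+|b_t|\le 2/\sqrt D$ (using $|x_t|\le U+2\le \tau/\sqrt D$ under the hypothesis on $\tau$) would double the two non-$M/D$ terms in the lemma. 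This is harmless downstream---Theorem~\ref{thm:strongly_adaptive} sums these increments and only needs the total below $1/D$, with slack---so your proposed refinements via direct ODE integration or Lemma~\ref{lem:derivative_bound} are unnecessary, and in any case neither would recover the exact stated constant.
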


\begin{proof}
Clearly, if $\tau< 64D\log\left(\frac{1}{Z}\right)$, we select
$z^0_t$ and the bound follows from the bound on $\ca_0$.

Assume that $\tau\ge 64D\log\left(\frac{1}{Z}\right)$.
The movement of the combined algorithm at
each step is bounded by a convex combination of the movements of
$\ca_0$ and $\ca_1$ plus $|g(x_t)-g(x_{t+1})|$. More specifically
\[
g(x_{t+1})z_{t+1}^0 - g(x_{t})z_{t}^0
=g(x_{t+1})(z^0_{t+1}-z^0_t)+(g(x_{t+1})-g(x_t))z_t^0
\]
Similarly
\[
(1-g(x_{t+1}))z_{t+1}^1 - (1-g(x_{t}))z_{t}^1 =(z_{t+1}^1 -z_t^1)(1-
 g(x_{t+1}))+(g(x_{t})-g(x_{t+1}))z_t^1
\]
Since we have that algorithms $\ca_b$ are $M/D$-slow we have
$z_{t+1}^b -z_t^b\leq M/D$.
What remains is to bound
\[
|g(x_t)-g(x_{t+1})| \le |x_t-x_{t+1}|\max_\xi|g'(\xi)|\le
\frac{\max_\xi|g'(\xi)|}{\sqrt{D}},
\]
where we used the fact that $|x_t-x_{t+1}|\leq |b_t|\leq
1/\sqrt{D}$.
Since $g'$ is nondecreasing in $[0,U_{\tau,Z}]$ and is $0$ outside,
we have
\[
\max_\xi|g'(\xi)| = g'(U_{\tau,z}) = \frac{U_{\tau,z}g(U_{\tau,z})}{8\tau}+ \frac{Z}{8} = \frac{U_{\tau,z}}{8\tau}+ \frac{Z}{8}
\]
and the lemma follows, since by Lemma \ref{lem:basic_g} we have
$U_{\tau,Z}\leq \sqrt{16 \tau \ln(1/Z)}$.
\end{proof}

\subsubsection{Combining many algorithms}
For simplicity, let's assume that $T=2^{K}$. Let $\ca_{\base},
\ca_0,\ldots,\ca_{K-1}$ be $\frac{1}{D}$-slow algorithms. We next
explain how one can sequentially use algorithm
\ref{alg:two_combiner} to combine these algorithms into a single
algorithm $\ca$, while preserving some of their guarantees. Namely,
on every interval $I$ of length $|I|\in[2^{-u-1}T,\le 2^{-u}T]$,
$\ca$ will have small loss w.r.t. $\ca_u$. In addition, on the
entire segment $[T]$, $\ca$ will have essentially no loss w.r.t.
$\ca_\base$. We note that in our application, the role of the
$\ca_u$'s will be to ensure strong adaptivity. On the other hand,
the role of $\ca_\base$ will be to ensure competitive ratio, and it
can be easily omitted from when this is not needed.

We will build algorithms $\cb_{-1},\cb_0,\ldots,\cb_{K-1}$ where
$\cb_{-1}=\ca_{\base}$, and for $u\ge 0$ the algorithm $\cb_{u}$ is
obtained from $\cb_{u-1}$ by combining it with $\ca_{u}$ using
algorithm \ref{alg:two_combiner} (where $\cb_{u-1}$ plays the role
of $\ca_0$ from algorithm \ref{alg:two_combiner} and $\ca_u$ the
role of $\ca_1$). Finally, we will take $\ca=\cb_{K-1}$. The
parameter $Z$ will be the same among all the applications of
algorithm \ref{alg:two_combiner}, but we will assume that $Z\le
\frac{1}{e}$. The parameter $\tau$ will be set to $2^{-u}T$ when we
combine $\ca_u$. Lastly, the slowness bound on $\ca_u$ will be
$\frac{1}{D}$, while the slowness bound on $\cb_{u-1}$ is the one
implied by Lemma~\ref{lem:combined_slow}. Namely,
\[
\max\left(\frac{1}{D} +
\sum^{u-1}_{i=0}\sqrt{\frac{\log\left(\frac{1}{Z}\right)}{4
(2^{-i}T) D}} + \frac{Z}{8\sqrt{ D}}\;,\;1\right)
\]

\begin{theorem}\label{thm:strongly_adaptive}
    Assume $Z\le \frac{1}{e}$
    \begin{enumerate}
    \item On each interval $I$ of length $2^{-u-1}\le |I| \le 2^{-u}T$, the regret of $\ca$ w.r.t.\ $\ca_u$ is\\ $O\left(\sqrt{D|I|\log\left(\frac{1}{Z}\right)}+\sqrt{D}\log(|I| )|I|Z\right)$
    \item The regret of $\ca$ w.r.t. $\ca_\base$ is $2\sqrt{D}T\log(T)Z$
    \end{enumerate}
\end{theorem}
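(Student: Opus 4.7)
The plan is to establish both bounds by telescoping the regret along the chain $\cb_{-1}, \cb_0, \ldots, \cb_{K-1} = \ca$, applying Theorem~\ref{thm:two_combiner} at each level. First I will verify that the slowness parameter $M_v := \max(1, D \cdot S_{v-1})$ used at level $v$, where $S_{v-1}$ is the slowness bound on $\cb_{v-1}$, remains $O(1)$. Writing $\hat S_v := D \cdot S_v$, Lemma~\ref{lem:combined_slow} gives the recursion $\hat S_v \le \hat S_{v-1} + \sqrt{D\log(1/Z)/(4\tau_v)} + \sqrt{D}Z/8$ whenever $\tau_v \ge 64 D\log(1/Z)$ (and $\hat S_v = \hat S_{v-1}$ otherwise). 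Since $\hat S_{-1}=1$ and $\sum_w 1/\sqrt{\tau_w}$ is a geometric series dominated by $1/\sqrt{\tau_v^{\min}}$, with $\tau_v^{\min} \ge 64D\log(1/Z)$, the accumulated increment is $O(1)$, so $M_v + 1 = O(1)$ uniformly, with $M_v+1 \le 2$ up to lower-order corrections in $Z$.

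For part 1, I fix $u$ and an interval $I$ with $2^{-u-1}T \le |I| \le 2^{-u}T$ and telescope
\[
\mathrm{Loss}(\ca;I) - \mathrm{Loss}(\ca_u;I) = \bigl[\mathrm{Loss}(\cb_u;I) - \mathrm{Loss}(\ca_u;I)\bigr] + \sum_{v=u+1}^{K-1}\bigl[\mathrm{Loss}(\cb_v;I) - \mathrm{Loss}(\cb_{v-1};I)\bigr].
\]
For the leading bracket, since $|I| \le \tau_u$ I apply the regret bound w.r.t.\ $\ca_1$ at level $u$ from Theorem~\ref{thm:two_combiner}, obtaining $O(\sqrt{D|I|\log(1/Z)} + \sqrt{D}|I|Z)$ (using $\tau_u \le 2|I|$). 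For each summand with $v \ge u+1$, the combiner at level $v$ had $\cb_{v-1}$ in the slot of $\ca_0$, so I apply the Theorem~\ref{thm:two_combiner} bound w.r.t.\ $\ca_0$ at level $v$, bounding it by $O(\sqrt{D\tau_v\log(1/Z)} + \sqrt{D}|I|Z)$. The $\sqrt{\tau_v}$ summands telescope as a geometric series summing to $O(\sqrt{\tau_u}) = O(\sqrt{|I|})$ and hence contribute $O(\sqrt{D|I|\log(1/Z)})$. The $|I|Z$ summands appear in at most $K - u - 1 \le \log|I|$ levels (using $|I| \ge 2^{-u-1}T$, so $u+1 \ge \log(T/|I|)$), contributing $O(\sqrt{D}\log(|I|)|I|Z)$.

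For part 2, I telescope over the entire horizon:
\[
\mathrm{Loss}(\ca;[T]) - \mathrm{Loss}(\ca_\base;[T]) = \sum_{v=0}^{K-1}\bigl[\mathrm{Loss}(\cb_v;[T]) - \mathrm{Loss}(\cb_{v-1};[T])\bigr].
\]
For each summand I invoke the first branch of the $\min$ in the $\ca_0$-regret bound of Theorem~\ref{thm:two_combiner}, which gives $(M_v + 1)\sqrt{D}TZ \le 2\sqrt{D}TZ$; summing over the $K = \log T$ levels yields $2\sqrt{D}T\log(T)Z$. The main technical obstacle throughout is the uniform control of $M_v$: this relies crucially on the combiner being inactive when $\tau_v < 64D\log(1/Z)$, which truncates the telescoping series for slowness exactly when $1/\sqrt{\tau_v}$ would otherwise blow up, keeping all accumulated slowness increments of constant order and ensuring the prefactor $(M_v+1)$ stays bounded at every level.
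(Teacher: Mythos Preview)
Your proof plan is correct and follows essentially the same approach as the paper: first establish a uniform $O(1)$ slowness bound on all the $\cb_v$ by summing the increments from Lemma~\ref{lem:combined_slow} as a geometric series truncated at the level where $\tau_v < 64D\log(1/Z)$, then telescope the regret along the chain using the $\ca_1$-branch of Theorem~\ref{thm:two_combiner} for the $\cb_u$-vs-$\ca_u$ step and the $\ca_0$-branch for each subsequent $\cb_v$-vs-$\cb_{v-1}$ step, with the $\sqrt{\tau_v}$ contributions summing geometrically and the $|I|Z$ contributions picking up the $\log|I|$ factor. The paper carries out the slowness computation explicitly (obtaining the bound $2/D$) and only sketches part~2 as ``similar,'' but your elaboration via the $TZ$ branch of the $\min$ is exactly the intended argument.
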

\proof (sketch) As with previous proofs, we can assume w.l.o.g. that
$T\ge D\log_2\left(T\right)$. We first claim that for every $u$,
$\cb_u$ is $\frac{2}{D}$-slow, and hence we can use
Theorem~\ref{thm:two_combiner} with $M=2$. Indeed, $\cb_0=\ca_0$ is
$\frac{1}{D}$-slow by assumption. Now, when we go from $\cb_{u-1}$
to $\cb_{u}$, by Lemma~\ref{lem:combined_slow}, the slowness grows
by
\[
\sqrt{\frac{\log\left(\frac{1}{Z}\right)}{4 (2^{-u}T) D}} +
\frac{Z}{8\sqrt{D}}
\]
as long as $2^{-u}T\ge 64 D\log\left(\frac{1}{Z}\right)$, and by $0$ after that. It follows that the total growth is bounded by
\small
\begin{eqnarray*}
    \sum_{u=0}^{\left\lfloor\log_2\left(\frac{T}{64D\log\left(\frac{1}{Z}\right)}\right)\right\rfloor} \sqrt{\frac{\log\left(\frac{1}{Z}\right)}{4 2^{-u}T D}} + \frac{Z}{8\sqrt{ D}}
    &\le& \sqrt{\frac{\log\left(\frac{1}{Z}\right)}{4 TD}}\sqrt{2}^{\log_2\left(\frac{T}{64 D\log\left(\frac{1}{Z}\right)}\right)}\frac{\sqrt{2}}{\sqrt{2}-1} + \frac{\log_2(T)Z}{8\sqrt{ D}}
    \\
    &=& \sqrt{\frac{\log\left(\frac{1}{Z}\right)}{4 T D}}
    \sqrt{\frac{T}{64D\log\left(\frac{1}{Z}\right)}}
    \frac{\sqrt{2}}{\sqrt{2}-1} + \frac{\log_2(T)}{8T\sqrt{ D}}
    \\
    &\le &
    \frac{1}{\sqrt{256}D}4 + \frac{1}{8 D} \le \frac{1}{2 D}
\end{eqnarray*}
\normalsize
Now, let $I$ be an interval of length $\le 2^{-u}T =:\tau$ for some
$u$. By Theorem \ref{thm:two_combiner}, the regret of $\cb_u$ w.r.t.
$\ca_u$ on that interval is, up to a universal multiplicative
constant, at most
\[
\sqrt{D\tau\log\left(\frac{1}{Z}\right)}+\sqrt{D}\tau Z
\]
Now, in order to go from $\cb_u$ to $\cb_{K}$, we sequentially
combine the algorithms $\ca_{u+1},\ldots,\ca_{K}$. By Theorem
\ref{thm:two_combiner}, up to a universal multiplicative constant,
this adds to the regret on the given segment at most
\small
\begin{eqnarray*}
    \sum_{r=1}^{K-u-1}\left(\sqrt{D\cdot 2^{-r} \tau\log\left(\frac{1}{Z}\right)}+\sqrt{D}\tau Z\right) &\le & \sqrt{D}\left(\sum_{r=1}^{\infty}2^{-\frac{r}{2}}\right)\sqrt{ \tau\log\left(\frac{1}{Z}\right)}+\sqrt{D}(K-u-1)\tau Z
    \\
    &\le & 20\sqrt{D\tau\log\left(\frac{1}{Z}\right)}+\sqrt{D}\log(\tau )\tau Z
\end{eqnarray*}
\normalsize
This proves the first part of the Theorem. The proof of the second part is similar.
\proofbox

\proof (of Theorem \ref{thm:main_combine}) The proof follows from
Theorem \ref{thm:strongly_adaptive} with $Z=\frac{1}{2T\log(T)}$.
Indeed, in the case that $D=1$, the requirement of being
$\frac{1}{D}$-slow always holds. The general case follows by a
simple scaling argument. As for running time, in the case that the
algorithms choose a specific expert (rather than a distribution on
the expert), in order to apply algorithm \ref{alg:two_combiner} all
is needed is the loss of the chosen expert and an indication weather
the algorithms made switches. \proofbox

\section{Experts and Metrical Tasks Systems}
\subsection{Algorithms}
In this section we prove Theorem \ref{thm:main_non_uniform}. We will
use Theorem \ref{thm:strongly_adaptive} where the basic algorithms
are the fixed share algorithm~\cite{herbster1998tracking}.
We extend its analysis to handle switching costs.

\begin{algorithm}[H]
    \caption{Fixed Share~\cite{herbster1998tracking}} \label{alg:MW1}
    {\bf Parameters:} $\tau, D$
    \begin{algorithmic}[1]
        \STATE Set $\eta=\sqrt{\frac{\log\left(N\tau\right)}{D\tau}}$
        \STATE Set $z_1=\left(\frac{1}{N},\ldots,\frac{1}{N}\right)$
        \FOR {$t=1,2, \ldots, $}
        \STATE Predict $z_t$
        \IF {$\tau\ge 16D\log\left(N\tau\right)$}
        \STATE Update $z_{t+1}(i)=\frac{z_t(i)e^{-\eta l_t(i)}+\frac{1}{N\tau}}{\sum_{j=1}^N\left(z_t(j)e^{-\eta l_t(j)}+\frac{1}{N\tau}\right)}$
        \ENDIF
        \ENDFOR
    \end{algorithmic}
\end{algorithm}

We first bound the rate of change in the action distribution which
bounds the slowness of the algorithm.
\begin{lemma}\label{lem:mul_update_sc}
    Let $\eta>0, \tau\ge \frac{2}{\eta}$ and $l_1,\ldots,l_N\in [0,1]$.
    Let $z\in \Delta(N)$ and define $z'(i)=\frac{e^{-\eta l_i}z(i)+\frac{1}{N\tau}}{\sum_{j=1}^N e^{-\eta l_j}z(j)+\frac{1}{N\tau}}$. Then $\|z-z'\|\le \eta$
\end{lemma}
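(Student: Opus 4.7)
The plan is to compute $z'(i) - z(i)$ directly, bound the resulting $\ell_1$ distance term by term, and then verify the claim using a short elementary inequality. Write $a_i := e^{-\eta l_i}$, $S := \sum_j a_j z(j)$, and $W := S + 1/\tau$; a line of algebra gives
\[
z'(i) - z(i) \;=\; \frac{z(i)(a_i - S) + (1/N - z(i))/\tau}{W}.
\]
Taking absolute values, summing over $i$, and halving to convert $\ell_1$ into total variation yields
\[
\|z-z'\| \;\le\; \frac{1}{2W}\left(\sum_i z(i)|a_i - S| + \frac{1}{\tau}\sum_i |z(i)-1/N|\right).
\]

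To bound the first inner sum, view $a$ as a random variable taking the value $a_i$ with probability $z(i)$. Since $a_i \in [e^{-\eta},1]$, Popoviciu's variance inequality gives $\mathrm{Var}(a) \le (1-e^{-\eta})^2/4$, and Jensen's inequality then yields the mean absolute deviation bound $\sum_i z(i)|a_i - S| \le (1 - e^{-\eta})/2 \le \eta/2$. The second inner sum equals $2\|z - u\|$ where $u$ is the uniform distribution, and is therefore at most $2$ by Fact~\ref{fact:simplex_norm}. Combined with $W \ge e^{-\eta} + 1/\tau$ (which follows from $a_i \ge e^{-\eta}$) this gives
\[
\|z-z'\| \;\le\; \frac{\eta/4 + 1/\tau}{e^{-\eta}+1/\tau}.
\]

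It remains to show the right-hand side is at most $\eta$ whenever $\tau \ge 2/\eta$. For $\eta \ge 1$ there is nothing to prove since total variation is always at most $1$. For $\eta \in (0,1)$, the hypothesis gives $1/\tau \le \eta/2$, and clearing denominators reduces the desired inequality to the elementary estimate $e^{-\eta} \ge 3/4 - \eta/2$. This follows from convexity of $f(\eta) := e^{-\eta} - 3/4 + \eta/2$: one checks that $f'(\eta)=0$ at $\eta=\log 2$ and $f(\log 2) = 1/2 - 3/4 + (\log 2)/2 > 0$, so $f\ge 0$ on all of $[0,\infty)$.

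The main delicacy is keeping the constant at exactly $\eta$ rather than something like $(3/2)\eta$. A naive route that writes $z' = p\,z_{\mathrm{MW}} + (1-p)\,u$, where $z_{\mathrm{MW}}(i) := a_i z(i)/S$ is the pure multiplicative-weights update, and then applies the triangle inequality, gives a bound whose coefficient can exceed $\eta$ because $\|u - z\|$ can be close to $1$ while $1-p$ is of order $\eta$. The direct term-by-term computation above, combined with the sharp Popoviciu bound $\mathrm{MAD}(a) \le (M-m)/2$ (rather than the looser $M-m$) and the convexity fact $e^{-\eta} \ge 3/4 - \eta/2$, is what yields the clean constant in the stated form.
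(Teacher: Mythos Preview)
Your proof is correct, but it takes a different route from the paper's. The paper introduces the unnormalised vector $\tilde z(i)=z(i)e^{-\eta l_i}+\frac{1}{N\tau}$ and uses the triangle inequality $\|z'-z\|_1\le\|z'-\tilde z\|_1+\|\tilde z-z\|_1$. The second term is bounded by $\frac{1}{\tau}+\eta$ via $1-e^{-\eta l_i}\le\eta$, and the first term equals $\bigl|\,\|\tilde z\|_1-1\,\bigr|$, which is controlled using $\|\tilde z\|_1\in[e^{-\eta}+\tfrac1\tau,\,1+\tfrac1\tau]$. Summing and halving, together with $\tau\ge 2/\eta$, gives the result with essentially no computation beyond $e^{-x}\ge 1-x$.

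Your approach instead expands $z'(i)-z(i)$ directly and bounds the resulting mean absolute deviation term $\sum_i z(i)|a_i-S|$ sharply by $(1-e^{-\eta})/2$ via Popoviciu's inequality followed by Cauchy--Schwarz. This yields the tighter numerator $\eta/4+1/\tau$ over the denominator $e^{-\eta}+1/\tau$, but then closing the argument requires the nontrivial convexity estimate $e^{-\eta}\ge 3/4-\eta/2$. So you trade a simpler decomposition for a sharper intermediate bound plus a slightly delicate endgame; the paper trades a looser intermediate bound for a completely elementary finish. Both are valid, and your variance-based route would be the natural one to push if a sharper constant were needed.
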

\begin{proof}
Denote $\tilde z(i)=z(i)e^{-\eta l_i}+\frac{1}{N\tau}$. For the
proof it will be more convenient to use norm $L_1$ and recall that
$\|z\|_1= 2\|z\|$. We have
\begin{equation}\label{eq:3}
\|z'-z\|_1 \le \|z'-\tilde z\|_1 +\|\tilde z -z\|_1 ~.
\end{equation}
We bound the contribution of each term independently. For the second
term we have,
\begin{eqnarray}\label{eq:4}
\|\tilde z-z\|_1 &\le&\sum_{i=1}^N |z(i)(1-e^{-\eta
l_i})|+\frac{1}{N\tau}\nonumber
\\
&=&\frac{1}{\tau}+\sum_{i=1}^N z(i)(1-e^{-\eta l_i})\nonumber
\\
&\le&\frac{1}{\tau}+\sum_{i=1}^N z(i)\eta l_i\nonumber
\\
&\le&\frac{1}{\tau}+\sum_{i=1}^N z(i)\eta =\frac{1}{\tau}+\eta
\end{eqnarray}
For the first term we have,
\begin{eqnarray*}
    \|\tilde z-z'\|_1 &=&\left\|\tilde z-\frac{\tilde z}{\|\tilde
    z\|_1}\right\|_1
=\left|1-\frac{1}{\|\tilde z\|_1}\right|\cdot \|\tilde z\|_1
    =\left|\|\tilde z\|_1-1\right|
\end{eqnarray*}
To bound this we have,
\[
\|\tilde z  \|_1 \geq \frac{1}{\tau}+\| z\|_1 e^{-\eta} =
\frac{1}{\tau}+ e^{-\eta} \geq  \frac{1}{\tau} +1 - \eta
\]
and also
\[
\|\tilde z \|_1 \leq \|z\|_1 +\frac{1}{\tau}=1+\frac{1}{\tau}
\]
and we have
\[
\left|\|\tilde z\|_1-1\right| \leq \max\{\frac{1}{\tau},
\frac{1}{\tau}-\eta\} = \frac{1}{\tau}
\]
Combining with equations
(\ref{eq:3}) and (\ref{eq:4}), and since $\tau\ge \frac{2}{\eta}$, we conclude that
\[
\|z' -z\| = \frac{\|z' -z\|_1}{2}\le  \frac{1}{\tau} +\frac{\eta}{2} \le \eta
\]
\end{proof}

An immediate corollary is bounding the slowness of the algorithm.

\begin{corollary}\label{cor:basic_slow}
    Algorithm \ref{alg:MW1} is $\sqrt{\frac{\log\left(N\tau\right)}{D\tau}}$-slow when $\tau\ge 16D \log\left(N\tau\right)$ and $0$-slow otherwise.
\end{corollary}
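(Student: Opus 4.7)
The plan is to note that Corollary \ref{cor:basic_slow} is essentially a direct application of Lemma \ref{lem:mul_update_sc} to the update rule used in Algorithm \ref{alg:MW1}, combined with the fact that the algorithm does not move at all when $\tau < 16D\log(N\tau)$. So the argument splits naturally into the two cases stated in the corollary.

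First I would dispose of the easy case: when $\tau < 16D\log(N\tau)$, the \texttt{if} branch in line 5 of Algorithm \ref{alg:MW1} is not taken, so $z_{t+1}=z_t$ for every $t$, hence $\|z_{t+1}-z_t\|=0$ and the algorithm is trivially $0$-slow.

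For the main case $\tau \ge 16D\log(N\tau)$, the update is precisely the one analyzed in Lemma \ref{lem:mul_update_sc} with $\eta = \sqrt{\log(N\tau)/(D\tau)}$. The one thing to check before invoking the lemma is its hypothesis $\tau \ge 2/\eta$. Substituting, this is $\tau \ge 2\sqrt{D\tau/\log(N\tau)}$, equivalently $\tau\log(N\tau) \ge 4D$. The assumption $\tau \ge 16D\log(N\tau)$ (together with $D\ge 1$ and $\log(N\tau)\ge 1$) gives $\tau\log(N\tau) \ge 16D\log(N\tau) \ge 16D \ge 4D$, so the hypothesis holds. Lemma \ref{lem:mul_update_sc} then yields $\|z_{t+1}-z_t\| \le \eta = \sqrt{\log(N\tau)/(D\tau)}$, which is exactly the claimed slowness bound.

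There is really no main obstacle here — the corollary is a one-line consequence of the lemma plus a quick sanity check that the lemma's precondition $\tau\ge 2/\eta$ is implied by the threshold $\tau\ge 16D\log(N\tau)$ that gates the update. The only thing worth being careful about is that the bound must hold uniformly over $t$ (which it does, since Lemma \ref{lem:mul_update_sc} applies pointwise to each pair $(z_t,z_{t+1})$ regardless of the loss vector $l_t\in[0,1]^N$).
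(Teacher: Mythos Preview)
Your proposal is correct and matches the paper's approach: the paper states the corollary as an immediate consequence of Lemma~\ref{lem:mul_update_sc} without giving an explicit proof, and your argument simply fills in the obvious details (the trivial no-update case, and the verification that $\tau\ge 2/\eta$ follows from the threshold $\tau\ge 16D\log(N\tau)$).
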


We can now derive the regret bounds.

\begin{theorem}\label{thm:basic_alg_mw_1}
On every time interval of length $\le \tau$, the regret (including
switching costs) of $\mathrm{MW}^1$ is bounded by
$\sqrt{16D\tau\log\left(N\tau\right)}$
\end{theorem}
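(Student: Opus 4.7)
}

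The plan is a hedge-style potential argument combined with the fixed-share lower bound on weights and the slowness bound from Corollary \ref{cor:basic_slow}. First I would dispose of the easy regime $\tau<16D\log(N\tau)$: in this case the algorithm never updates, so it always plays the uniform distribution, incurs no switching cost, and the total loss on any interval of length $\le\tau$ is at most $\tau$; the hypothesis rearranges to $\tau\le\sqrt{16D\tau\log(N\tau)}$, so the bound holds trivially.

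For the main regime $\tau\ge16D\log(N\tau)$, fix an interval $I=[t_1,t_1+\tau-1]$ and a comparator expert $i^{*}$. The key observation about the fixed-share update is that at every step $t\ge 1$ (including $t=t_1$),
\[
z_t(i^{*})\ \ge\ \tfrac{1}{N\tau},
\]
because the additive $\tfrac{1}{N\tau}$ term in the numerator, combined with the fact that the normalizer is at most $1+\tfrac{1}{\tau}\le 2$, pushes every coordinate above $\tfrac{1}{N\tau}$; for $t=1$ this is already true from the uniform initialization. This is the ingredient that lets the bound on an \emph{arbitrary} interval reduce to the case $t_1=1$.

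Next, I would run the standard Hedge potential argument. Let $W_t=\sum_j z_t(j)e^{-\eta l_t(j)}+\tfrac{1}{\tau}$ be the normalizer of the update, so that $z_{t+1}(i^{*})\ge z_t(i^{*})e^{-\eta l_t(i^{*})}/W_t$, which after taking $-\log$ gives
\[
\log z_t(i^{*})-\log z_{t+1}(i^{*})\ \le\ \log W_t+\eta\, l_t(i^{*}).
\]
Using $e^{-\eta l_t(j)}\le 1-\eta l_t(j)+\tfrac{\eta^2}{2}$ for $l_t(j)\in[0,1]$ and $\log(1+x)\le x$ yields
\[
\log W_t\ \le\ -\eta\,\langle l_t,z_t\rangle+\tfrac{\eta^2}{2}+\tfrac{1}{\tau}.
\]
Summing over $t\in I$, telescoping, and using $\log z_{t_1}(i^{*})\ge-\log(N\tau)$ and $\log z_{t_1+\tau}(i^{*})\le 0$, I get
\[
\sum_{t\in I}\langle l_t,z_t\rangle-\sum_{t\in I}l_t(i^{*})\ \le\ \frac{\log(N\tau)+1}{\eta}+\frac{\tau\eta}{2}.
\]
The switching-cost contribution $D\sum_{t\in I}\|z_t-z_{t+1}\|$ is at most $D\tau\eta$ by Corollary \ref{cor:basic_slow}. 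Plugging in $\eta=\sqrt{\log(N\tau)/(D\tau)}$ (so that $\eta\le 1/4$ by the hypothesis $\tau\ge 16D\log(N\tau)$, which also gives $\log(N\tau)\ge 1$ in the nontrivial regime and lets me absorb the $+1$) and using $D\ge 1$, the three terms each become $O(\sqrt{D\tau\log(N\tau)})$, and the arithmetic works out (generously) to $\sqrt{16D\tau\log(N\tau)}$.

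The main obstacle, and the only place where care is needed, is producing the fixed-share lower bound $z_t(i^{*})\ge 1/(N\tau)$ starting from an arbitrary midpoint of the run, and simultaneously keeping all constants small enough that the final bound comes out as $\sqrt{16D\tau\log(N\tau)}$ rather than a larger universal constant. The former follows directly from inspection of the update, and the latter requires using the hypothesis $\tau\ge16D\log(N\tau)$ to control both the $\eta^2$ term and the regime where $\log(N\tau)\ge 1$ simplifies the $+1$ additive slack.
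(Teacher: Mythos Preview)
Your proposal is correct and follows essentially the same route as the paper: both dispose of the regime $\tau<16D\log(N\tau)$ by noting the algorithm is static so the regret is at most $\tau\le\sqrt{16D\tau\log(N\tau)}$, and in the main regime both bound the switching cost by $D\eta$ per round via Corollary~\ref{cor:basic_slow} and add it to the non-switching regret. The only difference is that the paper simply \emph{cites} \cite{hazan2015computational} for the interval regret bound $\frac{2\log(N\tau)}{\eta}+\eta\tau$ of fixed share, whereas you reprove it from scratch via the potential argument and the coordinate lower bound coming from the additive $\tfrac{1}{N\tau}$ in the update.

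One small correction: your claimed bound $z_t(i^{*})\ge\tfrac{1}{N\tau}$ is slightly off. The numerator is $\ge\tfrac{1}{N\tau}$ but the normalizer is $\le 1+\tfrac{1}{\tau}$, so you only get $z_t(i^{*})\ge\tfrac{1}{N(\tau+1)}$. This costs you an extra $\log(1+1/\tau)\le 1/\tau$ inside the log, which is negligible; but with your cruder ``normalizer $\le 2$'' argument the extra $\log 2$ pushes the final constant just past $4$. Using the sharper $1+\tfrac{1}{\tau}$ bound (and $\tau\ge 16$ in this regime) recovers the stated $\sqrt{16D\tau\log(N\tau)}$.
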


\begin{proof}
If $\tau< 16D\log\left(N\tau\right)$, the algorithm makes no moves,
so its regret is bounded by $\tau = \sqrt{\tau}\sqrt{\tau}\le
\sqrt{3D\tau}\le \sqrt{16D\tau\log\left(N\tau\right)}$. We can
therefore assume that $\tau\ge 16D\log\left(N\tau\right)$.
\cite{hazan2015computational} showed that in this case the regret
of the algorithm, excluding switching costs is at most
$\frac{2\log\left(N\tau\right)}{\eta}+\eta \tau$. By
Lemma~\ref{lem:mul_update_sc} (the fact that $\tau\ge
\frac{2}{\eta}$ follows from the assumption that $\tau\ge
16D\log\left(N\tau\right)$) the switching cost in each round is
bounded by $D\eta$. Hence, the regret is at most
$\frac{2\log\left(N\tau\right)}{\eta}+\eta\tau+D\eta \tau\le
\frac{2\log\left(N\tau\right)}{\eta}+2D\eta
\tau=\sqrt{16D\log\left(N\tau\right)\tau}$.
\end{proof}

We are now ready to prove Theorem \ref{thm:main_non_uniform}

\proof (of Theorem \ref{thm:main_non_uniform}) Let
$\ca_0,\ldots,\ca_{K-1}$ be instances of algorithm \ref{alg:MW1}
with parameters $\tau=2^{-0}T,\tau=2^{-1}T,\ldots,\tau=2^{-K+1}T$.
Let $\ca$ be the algorithm obtained by Theorem
\ref{thm:strongly_adaptive} with $Z=\frac{1}{2T\log(T)}$. By
Theorems \ref{thm:strongly_adaptive} and \ref{thm:basic_alg_mw_1} we
have that the regret of $\ca$ on every interval $I$ is
\[
O\left(\sqrt{D |I|\log(NT)} + \sqrt{D|I|\log(|I|)}  + \sqrt{D}\right) = O\left(\sqrt{D |I|\log(NT)}\right)
\]
\proofbox

\subsection{A lower bound}
In this section we will prove Theorem \ref{thm:main_lower} that
shows that the bound in Theorem \ref{thm:main_non_uniform} is
optimal up to a constant factor. We start by showing how the
adversary can generate sequences of guaranteed high loss.

\begin{lemma}\label{lem:loss_bound}
Let $\ca$ be an algorithm for linear optimization over $\Delta(N)$
for $N=2$. Suppose $\ca$ have a regret bound of $M$ on the interval
$[T]$ and assume that $T\ge 4M$. There is a sequence of loses
$l_1,\ldots,l_T\in
\left\{(0,1),(1,0),\left(\frac{1}{2},\frac{1}{2}\right)\right\}$ for
which the loss of the algorithm is at least $\frac{1}{2}T+M2^{-4M}$
\end{lemma}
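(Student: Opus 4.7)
The plan is to split the $T$-round horizon into a neutral prefix of length $T-4M$, in which the adversary plays the loss $(1/2,1/2)$, followed by a hard block of length $4M$, in which the adversary plays a sequence $\sigma\in\{(1,0),(0,1)\}^{4M}$. Since the loss vector $(1/2,1/2)$ yields inner-product loss exactly $1/2$ for every $x\in\Delta(2)$, the prefix contributes exactly $(T-4M)/2$ both to the algorithm's cost and to every fixed expert's cost. The problem therefore reduces to choosing $\sigma$ so that the (expected) algorithmic block loss
\[
L(\sigma):=\ymE_{\mathrm{alg}}\!\sum_{t\in\mathrm{block}}\inner{l_t,x_t}
\]
is at least $2M+M\cdot 2^{-4M}$; adding the prefix then yields the target $T/2+M\cdot 2^{-4M}$, and any switching costs only increase the total loss further.

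First I would establish the averaging identity $\ymE_\sigma[L(\sigma)]=2M$ when $\sigma$ is drawn uniformly from $\{-1,+1\}^{4M}$ (identifying $+1$ with $(1,0)$). Since the adversary is oblivious, the algorithm's action $x_t$ at each block round depends only on $\sigma_1,\dots,\sigma_{t-1}$ and its internal randomness, and is therefore independent of $\sigma_t$. Conditional on the past, the per-round expected loss is $\tfrac12 x_t+\tfrac12(1-x_t)=\tfrac12$, so summing over the $4M$ block rounds gives $2M$ in full expectation.

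Next I would exploit the regret hypothesis on the single easy sequence $\sigma^0=(1,0)^{4M}$. On the corresponding full-horizon sequence, expert $1$ has total loss $(T-4M)/2$ (contributed entirely by the prefix), so by the regret hypothesis the algorithm's total expected loss (including any switching costs) is at most $(T-4M)/2+M$; dropping the non-negative switching contribution gives $L(\sigma^0)\le M$. Combining with the averaging identity via a pigeonhole estimate, writing $y=2^{-4M}$,
\[
\ymE_\sigma\bigl[L(\sigma)\,\big|\,\sigma\ne\sigma^0\bigr]
\;\ge\;\frac{2M-y\,L(\sigma^0)}{1-y}
\;\ge\;\frac{M(2-y)}{1-y}
\;=\;M\Bigl(2+\tfrac{y}{1-y}\Bigr)
\;\ge\;2M+My,
\]
where the last step uses $y/(1-y)\ge y$ for $y\in[0,1)$. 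Hence some $\sigma^\ast\ne\sigma^0$ achieves $L(\sigma^\ast)\ge 2M+M\cdot 2^{-4M}$, and the corresponding concatenated sequence forces the algorithm's total expected loss to be at least $T/2+M\cdot 2^{-4M}$.

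I do not anticipate a serious obstacle; the argument is essentially a probabilistic pigeonhole. The only step requiring a bit of care is bookkeeping of expectations: the algorithm's internal randomness is independent of the uniformly random $\sigma$ by obliviousness, so Fubini applies cleanly, and the switching-cost terms drop out of the relevant inequalities because they are non-negative on both sides where needed (i.e.\ they weaken the regret bound into the inner-product bound on $L(\sigma^0)$, and they only strengthen the final lower bound on total loss).
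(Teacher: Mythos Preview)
Your argument is correct, but it proceeds quite differently from the paper's proof. The paper argues by contradiction: assuming the algorithm's loss never exceeds $T/2+a$ with $a=M2^{-4M}$, it first shows an adversarial threat (switch to $(0,1)$ then pad with $(1/2,1/2)$) forcing the weight constraint $z_{t+1}\le \tfrac12+G_t+a$, where $G_t=\tfrac{t}{2}-L_t$ is the running ``gain''. Feeding the all-$(1,0)$ sequence and running this constraint through an induction yields $z_t\le \tfrac12+2^{t-1}a$, from which the regret against expert~$1$ after $4M$ steps is at least $2M-2^{4M}a$; this contradicts the regret bound $M$.

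Your route avoids the induction entirely: you place the neutral padding up front, average the $4M$-round block loss over the $2^{4M}$ sign sequences to get exactly $2M$, use the regret hypothesis once to show the all-$(1,0)$ block has loss at most $M$, and then pigeonhole. This is more elementary and arguably more transparent; the paper's proof, on the other hand, is constructive in spirit (it tells you how to adaptively punish the learner round-by-round) and exposes the geometric growth $2^{t}$ of the slack, which is where the $2^{-4M}$ factor ultimately comes from in both arguments. A minor remark: in the paper's \textsc{OLO} setup the learner is deterministic, so your Fubini step and handling of ``internal randomness'' are not needed here, though they do no harm.
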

\begin{proof}
Since we have two actions, our distribution over actions at time $t$
would be $(1-z_t,z_t)$.
Assume that the initial choice of the algorithm is
$z_1\le\frac{1}{2}$ (a similar argument holds when $z_1\ge
\frac{1}{2}$). Let $a=M2^{-4M}$ and assume toward a contradiction
that the is no such sequence. Namely, $\ca$ is guaranteed to have a
loss of at most $\frac{1}{2}T+a$ on every sequence. We denote by
$L_t$ the loss of the algorithm at the end of round $t$, and define
the {\em gain} of the algorithm at time $t$ as
$G_t=\frac{t}{2}-L_t$. We claim that we have $z_{t+1}\le
\frac{1}{2}+G_t+a$. Indeed, otherwise, the adversary can cause the
gain to be $<-a$ at the next step $t+1$ by choosing the loss vector
$(0,1)$. It can also keep the gain $<-a$ by repeatedly choosing the
loss vectors $\left(\frac{1}{2},\frac{1}{2}\right)$.

Consider now the action of the algorithm when the loss vectors are
always $(1,0)$. We claim that $z_t\le \frac{1}{2}+2^{t-1}a$. We will
prove this by induction. For $t=1$ it follows from our assumption
that $z_1\le \frac{1}{2}$. Assume that this is the case for all
$t'<t$. We have that the loss of the algorithm before the step $t$,
is at least
\begin{equation}\label{eq:5}
\left(\sum_{t'=1}^{t-1}\frac{1}{2}-2^{t'-1}a\right)=\frac{t-1}{2}-(2^{t-1}-1)a
\end{equation}
Therefore, $G_{t-1}\le (2^{t-1}-1)a$. It follows that
\[
z_t\le \frac{1}{2}+G_{t-1}+a\le \frac{1}{2}+(2^{t-1}-1)a+a=\frac{1}{2}+2^{t-1}a
\]
Now, using equation (\ref{eq:5}) again, the regret at time $t$
w.r.t. to the second expert is at least $\frac{t}{2}-2^ta\le M$.
Taking $t=4M$, it follows that $a\ge M2^{-4M}$
\end{proof}

We will now use the sequences guaranteed by the above lemma to show
a lower bound on the regret.

\begin{theorem}\label{thm:lower}
For every algorithm for online linear optimization over $\Delta(N)$
that runs for $T$ iterations, there is a segment $I$ on which the
regret is $\Omega\left(\sqrt{|I|\log\left(NT\right)}\right)$.
\end{theorem}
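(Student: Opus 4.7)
My plan is to combine a randomized-adversary argument (to get the $\log N$ factor) with the deterministic single-interval bound from Lemma \ref{lem:loss_bound} (to contribute the extra $\log T$ factor), ultimately invoking the probabilistic method to extract a single deterministic bad segment.

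First I would handle the $\log N$ part with i.i.d.\ random losses: at each step $t$ and expert $i$, draw $l_t(i)$ independently uniform in $\{0,1\}$. Against any deterministic algorithm, the expected per-step loss is $\tfrac12$, so the algorithm's expected loss on any interval $I$ is $|I|/2$, while the minimum over $N$ independent random walks of length $|I|$ is at most $|I|/2 - \Theta(\sqrt{|I|\log N})$ in expectation. This gives expected regret $\Omega(\sqrt{|I|\log N})$ on every fixed $I$. The challenge is to amplify this to $\log(NT)$ inside the square root on at least one segment.

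For the additional $\log T$ contribution, I would fix a parameter $\tau$ with $\tau\le\sqrt{T}$ and partition $[T]$ into $T/\tau$ disjoint intervals $J_1,\ldots,J_{T/\tau}$ of length $\tau$. Under the random-loss adversary, the per-interval best-expert excess $U_k := \tau/2 - \min_i\sum_{t\in J_k} l_t(i)$ is a subgaussian-like random variable concentrated around $\Theta(\sqrt{\tau\log N})$ with fluctuations of order $\sqrt{\tau}$. Since the losses on different $J_k$'s are fully independent (even though the algorithm's actions are not), a standard maximum-of-subgaussians estimate yields $\mathbb{E}[\max_k U_k] = \Omega(\sqrt{\tau\log N}+\sqrt{\tau\log(T/\tau)})=\Omega(\sqrt{\tau\log(NT/\tau)})$. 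Meanwhile, on each $J_k$ the algorithm's conditional expected loss is $\tau/2$ regardless of its (history-dependent) choices, because the losses in $J_k$ are independent of the algorithm's state at the start of $J_k$. Thus the expected maximum regret over $k$ is $\Omega(\sqrt{\tau\log(NT/\tau)})$, and with $\tau\le\sqrt{T}$ this equals $\Omega(\sqrt{\tau\log(NT)})$. The probabilistic method then extracts a specific deterministic loss sequence realizing this.

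To more tightly mirror the paper's deterministic style, I would also use Lemma \ref{lem:loss_bound} as an alternative route to the $\log T$ factor in the $N=2$ case. Applying the lemma separately on each $J_k$ (after flipping the two experts if the algorithm's initial bias on $J_k$ points the wrong way, which only changes the loss sequence and preserves the regret structure) forces the algorithm's loss on $J_k$ to exceed $\tau/2$ by at least $M\cdot 2^{-4M}$, where $M$ is the assumed regret bound on length-$\tau$ intervals. Stitching these pieces together deterministically would require handling the fact that the algorithm's state at the start of each $J_k$ is determined by the earlier stitched pieces, which is the main technical obstacle. I expect the cleanest resolution is the hybrid approach: use random losses to side-step the state-dependence issue and apply Lemma \ref{lem:loss_bound} only when needed, e.g., to guarantee that $\mathbb{E}[\text{algorithm's loss on }J_k]\ge\tau/2$ strictly (even if the algorithm is allowed to depend on past in a sophisticated way through switching costs $D$).

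The main obstacle I anticipate is handling the coupling between the algorithm's adaptive behavior and the random losses across intervals. Although the losses inside $J_k$ are independent of the history, the algorithm's distribution $x_t$ inside $J_k$ does depend on the past, so one must verify that the conditional expected loss per step is exactly $\tfrac12$ (true because $x_t$ is chosen before $l_t$ and $l_t$ has mean $\tfrac12$ coordinate-wise) and that the $U_k$'s are truly independent (true since they are functions of disjoint independent loss blocks). Beyond this verification, the argument should be routine; the only genuine technical work lies in the subgaussian tail bound for $\max_k U_k$, which is a standard maximum-of-independent-subgaussians estimate.
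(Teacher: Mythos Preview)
Your main probabilistic route has a genuine gap at the step where you pass from $\mathbb{E}[\max_k U_k]$ to the expected maximum \emph{regret}. You correctly observe that the conditional expectation of $L_k$ given the history is $\tau/2$ for each fixed $k$, but this does \emph{not} give $\mathbb{E}[L_{k^*}] = \tau/2$ when $k^*$ is the maximizer of $U_k$, because $k^*$ is itself a function of the losses. Writing $R_k = U_k + (L_k - \tau/2)$, the correction $L_k - \tau/2$ can be strongly negatively correlated with $U_k$: on an interval where one expert does unusually well, a reasonable algorithm will track it and also do well. Concretely, consider the algorithm that restarts multiplicative weights at the beginning of each $J_k$; its worst-case regret on every $J_k$ is $O(\sqrt{\tau\log N})$, so $\max_k R_k = O(\sqrt{\tau\log N})$ deterministically, and your claimed $\Omega(\sqrt{\tau\log(NT)})$ bound on $\max_k R_k$ is simply false for this algorithm. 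The theorem still holds for restart-MW, of course, but the bad interval is the full segment $[T]$, not any $J_k$. The $\log T$ factor arises from a \emph{tension} between short-interval and long-interval regret; a proof that inspects only a single interval length cannot capture it.

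Your alternative route via Lemma~\ref{lem:loss_bound} is in fact the correct one and is exactly what the paper does. The ``obstacle'' you flag---that the algorithm's state at the start of $J_k$ depends on earlier blocks---is not an obstacle: since in the linear-optimization reformulation the learner is deterministic, the adversary builds the loss sequence block by block, applying the lemma at each stage to the algorithm-with-its-current-state, and the result is a single oblivious sequence. The paper proceeds by contradiction: assuming regret at most $\tfrac{1}{100}\sqrt{|I|\log_2 T}$ on every $I$, it applies Lemma~\ref{lem:loss_bound} on each block of length $\log_2 T$ with $M = \tfrac{1}{100}\log_2 T$, summing to get total loss at least $T/2 + T^{24/25}/100$; since every loss vector lies in $\{(0,1),(1,0),(\tfrac12,\tfrac12)\}$ one expert has loss at most $T/2$, so the regret on $[T]$ exceeds the assumed bound. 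The $\log N$ contribution is handled separately by citing the classical $\Omega(\sqrt{|I|\log N})$ lower bound and combining the two via $\max\{\sqrt{a},\sqrt{b}\} = \Omega(\sqrt{a+b})$.
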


\begin{proof}
The known regret lower bounds guarantee that the regret is
$\Omega\left(\sqrt{|I|\log\left(N\right)}\right)$.
We first note that it is enough to show that that for some interval
$I$ the regret is $\Omega\left(\sqrt{|I|\log\left(T\right)}\right)$.
Hence, we will
have a regret lower bound of
\begin{eqnarray*}
\Omega\left(\max\left\{\sqrt{|I|\log\left(N\right)},\sqrt{|I|\log\left(T\right)}\right\}\right)
&= &
\Omega\left(\sqrt{|I|\log\left(N\right)}+\sqrt{|I|\log\left(T\right)}\right)
\\
&=&
\Omega\left(\sqrt{|I|\log\left(N\right)+|I|\log\left(T\right)}\right)
\\
&=& \Omega \left(\sqrt{|I|\log\left(NT\right)}\right)
\end{eqnarray*}
It is now left to show that for some interval $I$ the regret is
$\Omega\left(\sqrt{|I|\log\left(T\right)}\right)$. We will show that
this lower bound holds already in easier problem of linear
optimization over $\Delta(N)$, where $N=2$. Indeed, suppose toward a
contradiction that there is an algorithm $\ca$ whose regret is $\le
\frac{1}{100}\sqrt{|I|\log_2(T)}$ on every interval
$I\subset\{1,\ldots,T\}$.

Partition $[T]$ into intervals of size $\log_2(T)$. By
Lemma~\ref{lem:loss_bound}, there is a sequence of losses that all
come from the set
$\left\{(0,1),(1,0),\left(\frac{1}{2},\frac{1}{2}\right)\right\}$,
such that the loss on every interval is at least
$\frac{\log_2(T)}{2} +M2^{-4M}$ for $M=\frac{\log_2(T)}{100}$. The
loss over the entire interval $[T]$ is therefore at least
$\frac{T}{2} +\frac{T}{\log_2(T)}
M2^{-4M}=\frac{T}{2}+\frac{T}{100}2^{-\frac{\log_2(T)}{25}}=\frac{T}{2}+\frac{T^{\frac{24}{25}}}{100}$.
The regret is therefore at least $\frac{T^{\frac{24}{25}}}{100}$,
contradicting the assumption that it is at most
$\frac{1}{100}\sqrt{T\log(T)}$.
\end{proof}

\section{$k$-Server}\label{sec:k_server}
Recall that in the $k$-median problem we are given a metric space
$(X,d)$, and the goal is to find a set $A\subset X$ of size $k$ that
minimizes $\val_d(A) = \sum_{x\in X}d(x,A)$, where
$d(x,A)=\min_{a\in A} d(x,a)$.

\begin{lemma}
Assume that there is an efficient\footnote{Namely, one that runs in
each step in time polynomial in $n,T$ and the bit-representation of
the underlying metric-space.} algorithm for the $k$-server problem
with regret bound of $\poly(n)T^{1-\mu}$ for some $\mu>0$. Then,
for every $\epsilon>0$ there is an efficient
$(2+\epsilon)$-approximation algorithm for the $k$-median problem.
\end{lemma}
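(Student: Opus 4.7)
The plan is to use the hypothetical low-regret $k$-server algorithm $\alg$ as a black box driven by i.i.d.\ uniform requests over $X$, and to extract a near-optimal $k$-median solution from the configurations it visits.

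Concretely, I would run $\alg$ on $(X,d)$ for $T$ rounds, drawing each request $x_t$ independently and uniformly from $X$; let $C_0,C_1,\ldots,C_T\in X^k$ denote the configurations it produces. For any fixed $C\in X^k$, the expected MTS loss satisfies
\[
\mathbb{E}\left[\sum_{t=1}^T \ell_{x_t}(C)\right] \;=\; \frac{T}{n}\sum_{x\in X}2\,d(C,x)\,\mathbf{1}[x\notin C] \;=\; \frac{2T}{n}\,\val_d(C),
\]
so the expected cost of the best static benchmark is $\tfrac{2T}{n}\opt$, where $\opt=\min_C \val_d(C)$. By the regret hypothesis,
\[
\mathbb{E}[\text{alg cost}] \;\le\; \frac{2T}{n}\,\opt \;+\; \poly(n)\,T^{1-\mu}.
\]

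To lower bound the same quantity in terms of the visited configurations, I claim the per-step MTS cost $\ell_{x_t}(C_t)+d(C_{t-1},C_t)$ is always at least $d(C_{t-1},x_t)$. If $x_t\in C_t$, then in the optimal matching realizing $d(C_{t-1},C_t)$ some server $y^*\in C_{t-1}$ is matched to $x_t$, giving $d(C_{t-1},C_t)\ge d(y^*,x_t)\ge d(C_{t-1},x_t)$. Otherwise $\ell_{x_t}(C_t)=2d(C_t,x_t)$, and the triangle inequality for the matching metric on $X^k$ yields $d(C_{t-1},C_t)+d(C_t,x_t)\ge d(C_{t-1},x_t)$, whence $2d(C_t,x_t)+d(C_{t-1},C_t)\ge d(C_{t-1},x_t)$. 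Since $x_t$ is independent of $C_{t-1}$ and uniform on $X$,
\[
\mathbb{E}[\text{alg cost}] \;\ge\; \sum_{t=1}^{T}\mathbb{E}[d(C_{t-1},x_t)] \;=\; \frac{1}{n}\sum_{t=1}^{T}\mathbb{E}[\val_d(C_{t-1})].
\]
Combining with the upper bound gives $\frac{1}{T}\sum_{t=0}^{T-1}\mathbb{E}[\val_d(C_t)]\le 2\,\opt+\poly(n)\,T^{-\mu}$.

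The resulting $k$-median algorithm outputs $\hat C := \arg\min_{0\le t<T}\val_d(C_t)$, computable in $\poly(n,T)$ time by evaluating $\val_d$ on each visited configuration. Choosing $T=(\poly(n)/(\epsilon\opt))^{1/\mu}$ makes the additive slack at most $\epsilon\,\opt$, yielding a $(2+\epsilon)$-approximation in expectation; Markov's inequality plus $O(\log(1/\delta))$ independent repetitions convert this to a high-probability guarantee. Since $\opt$ is unknown a priori, I would run the procedure on a geometric grid of guesses $V\in\{(1+\epsilon)^i\}$ and return the best configuration found overall. The main obstacle is the dependence of $T$ on $1/\opt$: the argument goes through smoothly provided the bit-complexity of the distances is polynomially bounded (so that $\opt$ is either zero, which is recognizable separately, or at least $1/\poly(n)$), in which case the overall runtime stays polynomial in $n$ and $1/\epsilon$.
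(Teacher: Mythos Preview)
Your core reduction is essentially the paper's: feed i.i.d.\ uniform requests to the low-regret $k$-server algorithm, compare its cost to the static strategy $A^*$ (whose expected per-round loss is $\tfrac{2}{n}\opt$), and lower-bound the algorithm's cost by $\tfrac{1}{n}\sum_t\val_d(C_{t-1})$. Your per-step inequality $\ell_{x_t}(C_t)+d(C_{t-1},C_t)\ge d(C_{t-1},x_t)$ is correct and actually supplies a justification the paper leaves implicit; taking $\hat C=\arg\min_t\val_d(C_t)$ instead of a random $C_t$ is a harmless improvement.

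The one genuine gap is in your runtime argument. Polynomial bit-complexity of the distances does \emph{not} imply $\opt\ge 1/\poly(n)$: a rational distance with a $b$-bit denominator can be as small as $2^{-b}$, so all you get is $\opt\ge 2^{-\poly(n)}$ when $\opt>0$. With $T=(\poly(n)/(\epsilon\,\opt))^{1/\mu}$ this makes $T$ exponential in the input size, and your geometric grid over guesses $V$ does not help, since the grid must reach down to $2^{-\poly(n)}$ and the small-$V$ runs dominate the cost. The paper handles this differently and more cleanly: it first invokes any existing polynomial-time constant-factor approximation for $k$-median to obtain $\alpha\in[\opt,3\opt]$, and then rescales and truncates the metric to $d'(x,y)=\tfrac{3\min(d(x,y),\alpha)}{\alpha}$, which forces $\opt_{d'}\ge 1$ and diameter at most $3$ while preserving approximation ratios. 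After this preprocessing one can take $T=(\poly(n)/\epsilon)^{1/\mu}$ independently of $\opt$, and the whole procedure is genuinely polynomial. You can patch your argument the same way (or, equivalently, use the constant-factor estimate to restrict the grid to $O(1)$ guesses), but as written your final ``polynomial in $n$ and $1/\epsilon$'' claim does not follow.
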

\begin{proof}
Let $\ca$ be an efficient algorithm for $k$-server with regret bound
of $f(n)T^{1-\mu}$ for some polynomially bounded function $f$. Let
$(X,d)$ be an $n$-points metric space that is an instance for the
$k$-median problem. Let $A^*\subset X$ be a set of $k$ points that
minimizes $\val_d$ and denote $\opt = \opt_d = \val_d(A^*)$.

We first note that we can assume w.l.o.g. that $\opt\ge 1$ and that
$D:=\max_{x,y}d(x,y)\le 3$. Indeed, we can compute a number $\opt\le
\alpha\le 3\opt$ using known approximation algorithms (e.g.,
using~\cite{byrka2015improved}). Now, instead of working with the
original metric $d$, we can work with the metric
$d'(x,x')=\frac{3\min(d(x,x'),\alpha)}{\alpha}$. Clearly, its
diameter is bounded by $3$. Moreover, we claim that $\opt_{d'}=
\frac{3\opt_d}{\alpha}$ and therefore $\opt_{d'}\ge 1$ and any
$(2+\epsilon)$-approximation w.r.t. $d'$ is also a
$(2+\epsilon)$-approximation w.r.t $d$. Indeed, since $d'\le
\frac{3}{\alpha}d$ we have $\opt_{d'}\le \frac{3\opt_d}{\alpha}$. On
the other hand, we claim that it cannot be the case that
$\val_{d'}(A)<\frac{3\opt_d}{\alpha}$. Indeed, in that case we must
have $d'(x,A)<\frac{3\opt_d}{\alpha}\le 3$ for all $x\in X$ in which
case $d'(x,A)= \frac{3}{\alpha}d(x,A)$. Hence, $\val_{d'}(A) =
\frac{3}{\alpha}\val_d(A)\ge \frac{3\opt_d}{\alpha}$. A
contradiction.

Suppose now that we run the $k$-server algorithm such that at each
round we choose a point $x\in X$ uniformly at random. If we run the
algorithm for
$T=\left(\frac{f(n)n}{\epsilon}\right)^{\frac{1}{\mu}}$ rounds, we
are guaranteed to have expected regret $\le \frac{\epsilon}{n} T$.
Denote by $A_t$ the location of servers at the beginning of round
$t$. The expected cost is at least $\sum_{t=1}^T \ymE_{x\sim
X}d(x,A_t)$. On the other hand, if $A^*\subset X$ is an optimal
solution to the $k$-medians problem, the expected loss of the
corresponding $k$-server strategy is $\sum_{t=1}^T \ymE_{x\sim
X}2d(x,A^*)$. Since the regret is bounded by $\frac{\epsilon}{n} T$,
we have $\sum_{t=1}^T \ymE_{x\sim X}d(x,A_t)-2d(x,A^*)\le
\frac{\epsilon}{n} T$. In particular, if we choose at random one of
the $A_t$'s as a solution to the $k$-median problem, we get a
solution with expected cost at most $\ymE_{x\sim
X}2d(x,A^*)+\epsilon = 2\opt+\epsilon \le (2+\epsilon)\opt$.
\end{proof}

\section{Paging}\label{sec:paging}
We first recall the paging problem. To simplify the presentation a
bit, we consider a version where both the losses and the movements
cost are divided by $2$. At each step $t$ the player has to choose a
set $A_t \in \binom{[N]}{k}$. Then, nature chooses an element
$i_t\in [N]$, and the player loses $1$ if $i_t\notin A_t$. In
addition, the player suffers a switching cost of $\frac{k}{2}$.
Consider first the multiplicative weight
algorithm~\cite{LittlestoneWa94} for the $N$-expert problem

\begin{algorithm}[H]
    \caption{MW} \label{alg:MW}
    {\bf Parameters:} $T, D$
    \begin{algorithmic}[1]
        \STATE Set $\eta=\sqrt{\frac{\log\left(N\right)}{2D T}}$
        \STATE Set $z_1=\left(\frac{1}{N},\ldots,\frac{1}{N}\right)\in \Delta(N)$
        \FOR {$t=1,2, \ldots, T$}
        \STATE Predict $z_t$
        \STATE Update $z_{t+1}(i)=\frac{z_t(i)e^{-\eta l_t(i)}}{\sum_{j=1}^N z_t(j)e^{-\eta l_t(j)} }$
        \ENDFOR
    \end{algorithmic}
\end{algorithm}

\begin{theorem}\label{thm:basic_alg_mw}
    The regret (including switching costs) of $\mathrm{MW}$ is bounded by $\sqrt{8D T\log\left(N\right)}$
\end{theorem}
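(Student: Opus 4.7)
The plan is to split the total regret into the standard loss regret of MW (ignoring switching) plus the cumulative switching cost, bound each piece separately, and optimize $\eta$.

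First, I would invoke the classical multiplicative weights guarantee (see, e.g., \cite{LittlestoneWa94, hazan2015computational}): for losses in $[0,1]^N$ and learning rate $\eta$, Algorithm \ref{alg:MW} satisfies
\[
\sum_{t=1}^T \inner{l_t, z_t} \;-\; \min_{i\in [N]} \sum_{t=1}^T l_t(i) \;\le\; \frac{\log N}{\eta} + \eta T.
\]
It remains to bound the cumulative switching cost $D\sum_{t=1}^{T-1}\|z_{t+1}-z_t\|$.

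Next, I would bound the per-round movement $\|z_{t+1} - z_t\|$ by a share-free analogue of Lemma \ref{lem:mul_update_sc}. Writing $\tilde z(i) := z_t(i)e^{-\eta l_t(i)}$ so that $z_{t+1} = \tilde z/\|\tilde z\|_1$, the triangle inequality in $\ell_1$ gives
\[
\|z_{t+1} - z_t\|_1 \;\le\; \|z_{t+1} - \tilde z\|_1 + \|\tilde z - z_t\|_1.
\]
Using $1 - e^{-x} \le x$ for $x\ge 0$, we have $\|\tilde z - z_t\|_1 = \sum_i z_t(i)\bigl(1 - e^{-\eta l_t(i)}\bigr) \le \eta\sum_i z_t(i) l_t(i) \le \eta$. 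Since $\tilde z \le z_t$ coordinate-wise, $\|\tilde z\|_1\le 1$, and hence $\|z_{t+1} - \tilde z\|_1 = 1 - \|\tilde z\|_1$ is bounded by the same quantity. Therefore $\|z_{t+1} - z_t\| = \tfrac{1}{2}\|z_{t+1} - z_t\|_1 \le \eta$, so the switching cost at each round is at most $D\eta$ and the total switching cost is at most $DT\eta$.

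Combining the two bounds gives total regret at most $\frac{\log N}{\eta} + \eta T + DT\eta = \frac{\log N}{\eta} + (D+1)T\eta$. Plugging in $\eta = \sqrt{\log(N)/(2DT)}$, the first term evaluates to $\sqrt{2DT\log N}$ and the second to $\sqrt{(D+1)^2 T \log(N)/(2D)}$, which is at most $\sqrt{2DT\log N}$ since the standing assumption $D\ge 1$ implies $(D+1)^2 \le 4D^2$. Adding yields $2\sqrt{2DT\log N} = \sqrt{8DT\log N}$, as claimed. There is no real obstacle: the argument is a clean marriage of textbook MW analysis with a one-line movement bound, and the standing assumption $D\ge 1$ is precisely what makes the switching contribution balance the loss-regret term at the chosen $\eta$.
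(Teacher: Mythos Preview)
Your proof is correct and follows essentially the same approach as the paper: cite the standard MW regret bound $\frac{\log N}{\eta}+\eta T$, bound the per-round movement by $\eta$ (the paper does this by invoking Lemma~\ref{lem:mul_update_sc} with $\tau\to\infty$, you derive it inline via the same triangle-inequality computation), use $D\ge 1$ to absorb the $(D+1)$ into $2D$, and plug in $\eta=\sqrt{\log(N)/(2DT)}$.
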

\proof It is known (e.g., \cite{CesaBianchiLu06}, page 15) the
regret of the algorithm, excluding switching costs is at most
$\frac{\log\left(N\right)}{\eta}+\eta T$. By
Lemma~\ref{lem:mul_update_sc} (and taking $\tau$ to $\infty$) the
switching cost in each round is bounded by $\eta  D$. Hence, the
regret is at most $\frac{\log\left(N\right)}{\eta}+2D\eta T =
\sqrt{8 D\log\left(N\right)T}$. \proofbox

By Theorem \ref{thm:basic_alg_mw}, the MW algorithm has a regret of \mbox{$\sqrt{8\frac{k}{2}T\log
\binom{N}{k}} \le k\sqrt{4T\log (N)}$} in the paging problem.
However, a naive implementation of the algorithm will result with an
algorithm whose running time is exponential in $k$. As we explain
next, a more careful implementation will result with an efficient
algorithm. Let $\vec{\alpha}=(\alpha_1,\ldots,\alpha_N)\in
(0,\infty)^N$. For $A\subset [N]$ denote $\pi_\valpha(A) =
\prod_{i\in A}\alpha_i$ and $\Psi^\valpha_{N,k} = \sum_{A'\in
\binom{[N]}{k}} \pi_\valpha(A)$. Let $p^k_{\valpha}$ be the
distribution function on $\binom{[N]}{k}$ defined by
$p^k_{\vec{\alpha}}(A) = \frac{\pi_\valpha(A)}{\Psi^\valpha_{N,k}}$.
The multiplicative weights algorithm for paging can be described as
follows:
\begin{algorithm}[H]
    \caption{Multiplicative weights for paging prediction} \label{alg:k_out_of_N}
    \begin{algorithmic}[1]
        \STATE Set $\eta = \sqrt{\frac{\log\binom{N}{k}}{kT}}$
        \STATE Set $\valpha^1 = (1,\ldots,1)\in \reals^N$
        \FOR {$t=1,2, \ldots, T$}
        \STATE Choose a set $A_t\sim p^k_{\valpha^t}$ such that $\Pr(A_t\ne A_{t-1}) = \|p^k_{\valpha^t}-p^k_{\valpha^{t-1}}\|$ (when $t>1$)
        \STATE Update $\alpha^{t+1}_{i_t} = e^\eta\alpha^{t}_{i_t}$ and $\alpha_i^{t+1} = \alpha_i^{t}$ for all $i\ne i_t$.
        \ENDFOR
    \end{algorithmic}
\end{algorithm}
We next remark on the computational complexity of some sampling and
calculation procedures related to the family of distributions
$\{p^k_\valpha\}_{\valpha}$. The last point shows that algorithm
\ref{alg:k_out_of_N} can be implemented efficiently, which implies
Theorem \ref{thm:paing_non_adaptive_intro}. In the sequel, efficient
means polynomial in the the description length of $\valpha$. We denote $\valpha_- = (\alpha_1,\ldots,\alpha_{N-1})$

\begin{itemize}
    \item {\bf Computing.} In order to efficiently compute $p^k_\valpha(A)$ it is enough to efficiently compute $\Psi^\valpha_{N,k}$. This computation is straight forward when $k=1$ or $k=N$. When $1<k<N$ we have
    \begin{eqnarray*}
    \Psi^\valpha_{N,k} &=& \sum_{A\in \binom{[N]}{k},\; N\in A} \pi_\valpha(A) + \sum_{A\in \binom{[N]}{k},\; N\notin A} \pi_\valpha(A)
    \\
    &=& \alpha_N\sum_{A\in \binom{[N-1]}{k-1}} \pi_\valpha(A) + \sum_{A\in \binom{[N-1]}{k}} \pi_\valpha(A)
    \\
    &=& \alpha_N\Psi^{\valpha_-}_{N-1,k-1} + \Psi^{\valpha_{-}}_{N-1,k}
    \end{eqnarray*}
    hence, $\Psi^\valpha_{N,k}$ can be efficiently calculated using dynamic programming.

    \item {\bf Sampling.} Suppose that $A\in \binom{[N]}{k}$ is sampled form $p^k_\valpha$. We have
    \begin{equation}\label{eq:6}
    \Pr(N\notin A) = \frac{\Psi^{\valpha_-}_{N-1,k}}{\Psi^{\valpha}_{N,k}}\;\;\;\;\;\;\;\;\;
    \Pr(N\in A) = \frac{\alpha_N\Psi^{\valpha_-}_{N-1,k-1}}{\Psi^{\valpha}_{N,k}}
    \end{equation}
    Also for $A'\in \binom{[N-1]}{k-1}$ we have
    \[
    \Pr(A\setminus\{N\} = A'\mid N\in A) = \frac{p^k_\valpha(A'\cup\{N\})}{\Pr(N\in A)} = \frac{\frac{\alpha_N\pi_{\valpha_-}(A')}{\Psi^{\valpha}_{N,k}}}{\frac{\alpha_N\Psi^{\valpha_-}_{N-1,k-1}}{\Psi^{\valpha}_{N,k}}} = \frac{\pi_{\valpha_-}(A')}{\Psi^{\valpha_-}_{N-1,k-1}} = p_{\valpha_-}^{k-1}(A') ~.
    \]
    Likewise, for $A'\in \binom{[N-1]}{k}$ we have
    \[
    \Pr(A\setminus \{N\} = A'\mid N\notin A) = \frac{p^k_\valpha(A')}{\Pr(N\notin A)} = \frac{\frac{\pi_\valpha(A')}{\Psi^\valpha_{N,k}}}{\frac{\Psi^{\valpha_-}_{N-1,k}}{\Psi^{\valpha}_{N,k}}}  = \frac{\pi_\valpha(A')}{\Psi^{\valpha_-}_{N-1,k}} = p_{\valpha_-}^{k}(A')~.
    \]
    Hence, in order to efficiently sample from $p_\valpha^k$ we can first choose whether to include $N$ in $A$ according to equation (\ref{eq:6}). Then (recursively) sample $A\setminus \{N\}$ from
    $p_{\valpha_-}^{k-1}$ in the case that
    $N\in A$ and from $p_{\valpha_-}^{k}$ if $N\notin A$.
    \item {\bf Transitioning with minimal switching costs.} Let $\valpha\in (0,\infty)^N$ and $\valpha' = (\alpha_1,\ldots,\alpha_{N-1},\alpha_N + \delta)$. Suppose that $A\sim p_\valpha^k$ and we want to efficiently generate $A'\sim p^k_{\valpha'}$ such that $\Pr(A\ne A')=\|p^k_{\valpha}-p^k_{\valpha'}\|$. According to remark \ref{rem:sampling_min_cost}, in order to do that, it is enough to (1) efficiently compute $p^k_{\valpha}(A),p^k_{\valpha'}(A)$ which we already explained how to do,
    and to (2) efficiently sample from the distribution $q = \frac{(p^k_{\valpha'}-p^k_{\valpha})_+}{\|p^k_{\valpha'}-p^k_{\valpha}\|}$. We note that $q(A) > 0$ if and only if $N\in A$ in which case
    \begin{eqnarray*}
    q(A) &=& \frac{p^k_{\valpha'}(A)-p^k_{\valpha}(A)}{\|p^k_{\valpha'}-p^k_{\valpha}\|}
    \\
    &=& \frac{\pi_{\valpha'}(A)}{\|p^k_{\valpha'}-p^k_{\valpha}\|\Psi^{\valpha'}_{N,k}} - \frac{\pi_{\valpha}(A)}{\|p^k_{\valpha'}-p^k_{\valpha}\|\Psi^{\valpha}_{N,k}}
    \\
    &=& \left(\frac{\alpha_N+\delta}{\|p^k_{\valpha'}-p^k_{\valpha}\|\Psi^{\valpha'}_{N,k}} - \frac{\alpha_N}{\|p^k_{\valpha'}-p^k_{\valpha}\|\Psi^{\valpha}_{N,k}}\right)\pi_{\valpha_-}(A\setminus\{N\})~
    \end{eqnarray*}
    Since $\left(\frac{\alpha_N+\delta}{\|p^k_{\valpha'}-p^k_{\valpha}\|\Psi^{\valpha'}_{N,k}} - \frac{\alpha_N}{\|p^k_{\valpha'}-p^k_{\valpha}\|\Psi^{\valpha}_{N,k}}\right)$ does not depend in $A$, it follows that if $A\sim q$ then $A\setminus \{N\}\sim p_{\valpha_-}^{k-1}$.
    Hence, in order to sample from $q$, we can sample a set $\tilde A \in \binom{[N-1]}{k-1}$ according to $p_{\valpha_-}^{k-1}$ and then produce the set $A=\tilde A\cup \{N\}$
\end{itemize}

\section{Proof of Lemma~\ref{lemma:alg3}}

\proof
First, we note that since $D\ge 1$,
$D\log\left(\frac{1}{Z}\right)\le \frac{\tau}{64}$ and
$Z\le\frac{1}{e}$ we have
\[
\tau\ge 64D\log\left(\frac{1}{Z}\right) \ge 64 \ge 8e
\]
Hence, the assumptions in lemmas \ref{lem:basic_g} and
\ref{lem:derivative_bound} are satisfied. Let $G(x)=\int_0^xg(s)ds$.
Note that since $0\le g(x)\le 1$ for all $x$ and $g(x)=0$ for all
$x\le 0$ we have,
\begin{equation}\label{eq:5.5}
G(x)\le x_+
\end{equation}
Denote $\Phi_t=G(x_t)$ and let $I:\reals\to \{0,1\}$ be the
indicator function of the segment $[-2,U+2]$. By lemma
\ref{lem:basic_g} and the assumption that
$D\log\left(\frac{1}{Z}\right)\le \frac{\tau}{64}$ we have
\[
U + 2\le \sqrt{16\tau\log\left(\frac{1}{Z}\right)}+2 \le
\sqrt{16\tau\frac{\tau}{64D}}+2 \le \frac{\tau}{2\sqrt{D}} + 2 \le
\frac{\tau}{\sqrt{D}}
\]
In particular, $|x_1|\le \frac{\tau}{\sqrt{D}}$. Now, by induction
we have $|x_t|\le \frac{\tau}{\sqrt{D}}$  for every $t$. Indeed, if
$|x_t|\le \frac{\tau}{\sqrt{D}}$ then
\[
|x_{t+1}| = \left|\left(1-\frac{1}{\tau}\right)x_t + b_t\right| \le
\left(1-\frac{1}{\tau}\right)\frac{\tau}{\sqrt{D}}+\frac{1}{\sqrt{D}}=\frac{\tau}{\sqrt{D}}~.
\]
It follows that $\left|-\frac{x_t}{\tau}+b_t\right|\le
\frac{2}{\sqrt{D}}$. Now, we have
\small
\begin{eqnarray*}
\Phi_{t+1}-\Phi_{t}&=&\int_{x_t}^{x_t-\frac{1}{\tau} x_t+b_t}g(s)ds
\\
&\le & g(x_t)\left(-\frac{1}{\tau} x_t+
b_t\right)+\frac{1}{2}\left(-\frac{1}{\tau} x_t+
b_t\right)^2\max_{s\in [x_t,x_t-\tau^{-1}x_t+ b_t]}|g'(s)|
\\
&\le & g(x_t)\left(-\frac{1}{\tau} x_t+
b_t\right)+\frac{1}{2}\frac{4}{D}\max_{s\in [x_t,x_t-\tau^{-1}x_t+
b_t]}|g'(s)|
\\
&\le & g(x_t)\left(-\frac{1}{\tau} x_t+ b_t\right)+2\max_{s\in
[x_t,x_t-\tau^{-1}x_t+ b_t]}|g'(s)|
\\
&= & g(x_t)\left(-\frac{1}{\tau} x_t+ b_t\right)-2\max_{s\in
[x_t,x_t-\tau^{-1}x_t+ b_t]}|g'(s)| + 4\max_{s\in
[x_t,x_t-\tau^{-1}x_t+ b_t]}|g'(s)|
\\
&\le & g(x_t)\left(-\frac{1}{\tau} x_t+ b_t\right)- 2
\frac{1}{\left|-\frac{1}{\tau} x_t+ b_t\right|}
|g(x_t)-g(x_{t+1})|+4\max_{s\in [x_t,x_t-\tau^{-1}x_t+ b_t]}|g'(s)|
\\
&\le & g(x_t)\left(-\frac{1}{\tau} x_t+ b_t\right)-
\sqrt{D}|g(x_t)-g(x_{t+1})|+4\max_{s\in [x_t,x_t-\tau^{-1}x_t+
b_t]}|g'(s)|
\\
&\le & g(x_t)\left(-\frac{1}{\tau} x_t+ b_t\right)-
\sqrt{D}|g(x_t)-g(x_{t+1})|+\frac{1}{\tau}x_tg(x_t)I(x_t)+Z
\end{eqnarray*}
\normalsize
Here, the first inequality follows from the fact that for every
piece-wise differential function $f:[a,b]\to\reals$ we have
$\int_a^bf(x)dx\le f(a) + \frac{1}{2}(b-a)^2\max_{\xi\in
[a,b]}|f'(\xi)|$. The last inequality follows form lemma
\ref{lem:derivative_bound}. Summing from $t=1$ to $t=T$ we get
\begin{eqnarray*}
\Phi_{T+1}-\Phi_{1}\le \sum_{t=1}^Tg(x_t)
b_t-\sqrt{D}|g(x_t)-g(x_{t+1})|+\sum_{t=1}^T\frac{1}{\tau}x_tg(x_t)(I(x_t)-1)+TZ
\end{eqnarray*}
and after rearranging,
\[
-\sum_{t=1}^Tg(x_t) b_t+\sqrt{D}|g(x_t)-g(x_{t+1})| \le \Phi_{1} -
\Phi_{T+1} +\sum_{t=1}^T\frac{1}{\tau}x_tg(x_t)(I(x_t)-1)+TZ
\]
Hence, if we denote the loss of the algorithm by $L_T$, and by
$L_T^i$ the loss of the expert $i$, we have
\begin{eqnarray}\label{eq:0}
L_T &=&
\sum_{t=1}^Tg(x_t)l_t(1)+\sum_{t=1}^T(1-g(x_t))l_t(0)+D|g(x_t)-g(x_{t+1})|
\nonumber
\\
&=& L_T^0 + \sqrt{D}\left(-\sum_{t=1}^Tg(x_t)b_t +
\sqrt{D}|g(x_t)-g(x_{t+1})|\right) \nonumber
\\
&\le&
L_T^0+\sqrt{D}\Phi_{1}-\sqrt{D}\Phi_{T+1}+\sqrt{D}\sum_{t=1}^T\frac{1}{\tau}x_tg(x_t)(I(x_t)-1)+\sqrt{D}TZ
\end{eqnarray}
In particular, since $\Phi_{T+1}\ge 0$ and $x_tg(x_t)(I(x_t)-1)\le
0$ we have
\begin{eqnarray*}
L_T &\le& L_T^0+\sqrt{D}\Phi_{1} + \sqrt{D}TZ
\\
&=& L_T^0+\sqrt{D}G(x_1) + \sqrt{D}TZ
\\
&\le& L_T^0+\sqrt{D}x_1 + \sqrt{D}TZ
\\
&\le& L_T^0+\sqrt{D}U+\sqrt{D}2 + \sqrt{D}TZ
\\
&\le& L_T^0+\sqrt{D
16\tau\log\left(\frac{1}{Z}\right)}+\sqrt{D}2+\sqrt{D}TZ
\end{eqnarray*}
This proves the first and last parts of the lemma (as $G(x_1) = 0$
when $x_1=0$). It remains to bound $L_T - L_T^1$ when $T\le \tau$.
It is not hard to see that the worst case regret grows as the number
of rounds ($T$) grows but the other parameters remains the same.
Hence, we can assume w.l.o.g. that $T=\tau$. Now, note that
\[
\Phi_{T+1}=G(x_{T+1})=\int_{0}^{x_{T+1}}g(s)ds\ge x_{T+1}-U
\]
Also,
\[
x_tg(x_t)(1-I(x_t)) \ge x_t - U - 2~.
\]
Indeed, if $x_t< U$ than the r.h.s. is negative, while the l.h.s. is
always non-negative. If $x_t\ge U$ then the l.h.s. equals $x_t$ in
which case the inequality is clear. Therefore, if we denote
$b_0:=x_1$, we have
\footnotesize
\begin{eqnarray*}
\Phi_{T+1}+\sum_{t=1}^T\frac{1}{T}x_tg(x_t)(1-I(x_t)) &\ge &
(x_{T+1}-U) +\sum_{t=1}^T\frac{1}{T}(x_t-U-2)
\\
&=&
\sum_{j=0}^{T}\left(1-\frac{1}{T}\right)^{T-j}b_j+\sum_{t=1}^T\frac{1}{T}\sum_{j=0}^{t-1}\left(1-\frac{1}{T}\right)^{t-1-j}b_j-
2(U+1)
\\
&=&
\sum_{j=0}^{T}\left(1-\frac{1}{T}\right)^{T-j}b_j+\frac{1}{T}\sum_{j=0}^{T-1}\sum_{t=j+1}^T\left(1-\frac{1}{T}\right)^{t-1-j}b_j-
2(U+1)
\\
&=&
\sum_{j=0}^{T}\left(1-\frac{1}{T}\right)^{T-j}b_j+\frac{1}{T}\sum_{j=0}^{T}\frac{1-\left(1-\frac{1}{T}\right)^{T-j}}{\frac{1}{T}}b_j-
2(U+1)
\\
&=& \sum_{j=0}^{T}b_j- 2(U+1)
\\
&=& \frac{L_T^0-L_T^1}{\sqrt{D}} + x_1- 2(U+1)
\end{eqnarray*}
\normalsize
By equation (\ref{eq:0}) it follows that
\begin{eqnarray*}
L_T &\le& L_T^1+\sqrt{D}G(x_1)-\sqrt{D}x_1+\sqrt{D}2U + \sqrt{D}TZ +
2\sqrt{D}
\\
&\le& L_T^1+\sqrt{D}((x_1)_+-x_1)+\sqrt{D}2U + \sqrt{D}TZ +
2\sqrt{D}
\\
&=& L_T^1+\sqrt{D}(-x_1)_++\sqrt{D}2U + \sqrt{D}TZ + 2\sqrt{D}
\\
&\le& L_T^1+\sqrt{D}4+\sqrt{64DT\log\left(\frac{1}{Z}\right)}+
\sqrt{D}T Z
\end{eqnarray*}

\proofbox

\paragraph{Acknowledgements:}
We thank Eyal Gofer, Alon Gonen and Ohad Shamir for valuable discussions.

\bibliography{bib}

\end{document}